\setlist{nosep}
\DeclareFontFamily{U}{mathx}{}
\DeclareFontShape{U}{mathx}{m}{n}{<-> mathx10}{}
\DeclareSymbolFont{mathx}{U}{mathx}{m}{n}
\DeclareMathAccent{\widecheck}{0}{mathx}{"71}
\definecolor{ballblue}{rgb}{0.13, 0.67, 0.8}
\definecolor{darkred}{RGB}{139,0,0}
\newcommand{\indep}{\perp \!\!\! \perp}
\newtheorem{algorithm}{Algorithm}
\newcounter{daggerfootnote}
\newcommand{\norm}[1]{\left\| #1 \right\|}
\newcommand{\R}{\mathbb{R}}
\newcommand{\X}{\mathbf{X}}
\newcommand{\proj}[1]{\big|_{#1}}
\renewcommand{\O}[1]{O\left(#1\right)}
\newcommand{\Ome}[1]{\Omega\left( {#1} \right)}
\newcommand{\Omep}[1]{\Omega_p\left( {#1} \right)}
\newcommand{\Op}[1]{O_p\left(#1\right)}
\newcommand{\deleted}[1]{}
\newcommand{\blued}[1]{#1}
\newtheorem{thm}{Theorem}
\newtheorem{lem}{Lemma}
\newtheorem{cor}{Corollary}
\newtheorem{aspt}{Assumption}
\newtheorem{defn}{Definition}
\newcommand{\genComment}[2]{\ifnum\comments=1{\textcolor{#1}{\textsf{\footnotesize #2}}}\fi}
\def\eqref#1{equation~\ref{#1}}
\def\1{\bm{1}}
\newcommand{\train}{\mathcal{D}}
\newcommand{\test}{\mathcal{D_{\mathrm{test}}}}
\DeclareMathAlphabet{\mathsfit}{\encodingdefault}{\sfdefault}{m}{sl}
\SetMathAlphabet{\mathsfit}{bold}{\encodingdefault}{\sfdefault}{bx}{n}
\def\gG{{\mathcal{G}}}
\def\gL{{\mathcal{L}}}
\def\gN{{\mathcal{N}}}
\def\gQ{{\mathcal{Q}}}
\def\gS{{\mathcal{S}}}
\def\prob{{\mathbb{P}}}
\def\R{{\mathbb{R}}}
\newcommand{\E}{\mathbb{E}}
\def\bx{{\mathbf{x}}}
\def\bX{{\mathbf{X}}}
\def\by{{\mathbf{y}}}
\def\bz{{\mathbf{z}}}
\def\bS{{\mathbf{S}}}
\def\bI{{\mathbf{I}}}
\def\f{{\bm{f}}}
\def\d{{\bm{d}}}
\def\g{{\bm{g}}}
\def\t{{\bm{t}}}
\def\r{{\bm{r}}}
\def\s{{\bm{s}}}
\def\br{{\mathbf{r}}}
\def\bk{{\mathbf{k}}}
\def\k{{\bm{k}}}
\def\bK{{\mathbf{K}}}
\def\balpha{{\boldsymbol{\alpha}}}
\def\bLambda{{\boldsymbol{\Lambda}}}
\def\bSigma{{\boldsymbol{\Sigma}}}
\def\bkappa{{\boldsymbol{\kappa}}}
\def\bXi{{\boldsymbol{\Xi}}}
\def\test{\text{test}}
\def\train{\text{train}}
\newenvironment{tightcenter}{%
  \setlength\topsep{0pt}
  \setlength\parskip{0pt}
  \begin{center}
}{
  \end{center}
}
\title{Statistical Inference for Gradient Boosting Regression} 
\author{%
  Haimo Fang\textsuperscript{1},  
  Kevin Tan\textsuperscript{2}\thanks{This work was done prior to employment at Amazon.},
  Giles Hooker\textsuperscript{2}\\
  \textsuperscript{1}School of Economics, Fudan University \\
  \textsuperscript{2}Department of Statistics and Data Science, 
  The Wharton School, University of Pennsylvania
}
\begin{document}

\maketitle

\begin{abstract} 
    Gradient boosting is widely popular due to its flexibility and predictive accuracy. However, statistical inference and uncertainty quantification for gradient boosting remain challenging and under-explored. We propose a unified framework for statistical inference in gradient boosting regression. Our framework integrates dropout or parallel training with a recently proposed regularization procedure that allows for a central limit theorem (CLT) for boosting. With these enhancements, we surprisingly find that \textit{increasing} the dropout rate and the number of trees grown in parallel at each iteration substantially enhances signal recovery and overall performance. Our resulting algorithms enjoy similar CLTs, which we use to construct built-in confidence intervals, prediction intervals, and rigorous hypothesis tests for assessing variable importance. Numerical experiments demonstrate that our algorithms perform well, interpolate between regularized boosting and random forests, and confirm the validity of their built-in statistical inference procedures.
\end{abstract}

\vspace{-1ex}
\section{Introduction}
\vspace{-1ex}

Gradient boosting \citep{friedman2000greedy}, particularly through widely used implementations such as XGBoost \citep{chen2016xgboost}, LightGBM \citep{ke2017lightgbm}, and CatBoost \citep{catboost2018}, has become one of the most powerful and widely adopted methods for supervised learning, especially on tabular data. However, this flexibility and predictive accuracy come at a cost. Unlike their base learners -- such as decision trees or linear models -- boosting methods are typically far less interpretable, and uncertainty quantification for their predictions is far from straightforward. While point estimates may 
suffice for ad hoc prediction tasks, they overlook randomness inherent in both the data and the 
algorithm itself. Uncertainty quantification therefore asks a central question: \emph{if a new dataset were collected 
and models retrained, how different would the resulting predictions be?}

In this light, various methods have been proposed for uncertainty quantification in boosting. These include Langevin boosting \citep{tan2023uqboosting, ustimenko2022sglbstochasticgradientlangevin, malinin2021uncertaintygradientboostingensembles}, k-nearest neighbors-based techniques \citep{brophy2022boostinguq}, Gaussian graphical models \citep{chen2022boostinguq}, quantile regression approaches \citep{yin2023quantileextremegradientboosting}, and probabilistic prediction via natural gradients \citep{duan2020ngboostnaturalgradientboosting}. However, most of these methods lack formal theoretical guarantees, with many relying primarily on heuristic justifications.




A principled route to uncertainty quantification is through statistical inference, e.g. via prediction intervals. However, this remains limited. In the Bayesian setting, \cite{ustimenko2023gradientboostingperformsgaussian} show randomized boosting converges to a kernel ridge regression and approximate Gaussian process posterior. Yet, they re-run the entire boosting procedure to generate even a single posterior sample. \cite{malinin2021uncertaintygradientboostingensembles} suggest a virtual ensemble method, but do not provide formal guarantees.

In the frequentist setting, \cite{zhou2022boulevard} propose a regularization scheme yielding a central limit theorem for boosting via convergence to kernel ridge regression. However, they do not provide proper confidence or prediction intervals, and their method recovers at most half the true signal (rescaling is suggested, but this amplifies errors). Despite these limitations, their framework remains the only foundation for frequentist inference in boosting, which we extend here.

\vspace{-1ex}
\paragraph{Other literature. } 
Recent advances for random forests yield principled statistical methods for uncertainty quantification via the construction of valid confidence intervals \citep{wager2014rfci, wager2017estimationinferenceheterogeneoustreatment, athey2018generalizedrandomforests, mentch2016randomforestci}, and interpretability via hypothesis tests for variable importance \citep{mentch2016formalhypothesistestsadditive, mentch2022rfhyptest}. 
Much of this is achieved through viewing a random forest as an adaptive kernel method \citep{friedberg2021locallinearforests, athey2018generalizedrandomforests}. 
This also holds for various incarnations of boosting \citep{zhou2022boulevard, ustimenko2023gradientboostingperformsgaussian}. 
Despite these findings, the literature on uncertainty quantification and statistical inference for boosting is far sparser -- a gap we fill by exploiting this equivalence.


Motivated by this gap, we study frequentist inference for uncertainty quantification in boosting under supervised nonparametric regression: \(y = \f(\bx) + \epsilon\), with subgaussian noise $\sigma^2$. Our goal is to develop principled, statistically sound procedures for uncertainty quantification, outlined below.
\vspace{-2.5ex}
\paragraph{Methodological contributions.} In detail, we provide the following methodological contributions:
\vspace{-3ex}
\begin{enumerate}[leftmargin=0.5cm]
    \item \emph{Improvements via dropout:} We incorporate random dropout \citep{rashmi2015dartdropoutsmeetmultiple} into the regularization procedure of \cite{zhou2022boulevard}. This yields an improved procedure (Algorithm \ref{alg:random-dropout}) that achieves provably better performance by up to a factor of $4$ in the asymptotic relative efficiency (ARE), and increased signal recovery by up to a factor of $2$. By tuning the dropout rate in Algorithm \ref{alg:random-dropout}, we smoothly bridge the vanilla Boulevard method \citep{zhou2022boulevard} and a modified Random Forest \citep{breiman2001random} that draws observation subsamples without replacement and considers the full covariate space at each split.
    \item \emph{Parallel boosting:} We construct a novel leave-one-out procedure for parallel boosting (Algorithm \ref{alg:structured-dropout}) that, when coupled with the above regularization procedure, improves in the ARE by at least a factor of $4$, and enjoys increased signal recovery by at least a factor of $2$.
    \item \emph{Prediction, confidence, and reproduction intervals:} We leverage the central limit theorems our procedures enjoy to construct a variety of procedures for statistical inference. 
    First, we conduct predictive inference via constructing prediction intervals on (potentially unseen) labels $\by$. We then construct confidence intervals for the underlying ground truth function $\f$. Lastly, to quantify uncertainty within the learning algorithm, we provide reproduction intervals \citep{zhou2022boulevard} for another booster $\widehat{\f}$ trained on an independent realization of the training set. 
    \item \emph{Hypothesis testing:} We construct a chi-squared hypothesis test for variable importance, extending the work of \cite{mentch2016randomforestci} for random forests to the boosting setting. 
    \item \emph{Computational efficiency:} Unlike prior approaches to statistical inference for random forests and boosting, we utilize the Nystr\"{o}m approximation of \cite{musco2017recursivenystrom}. This allows our tests and intervals to scale linearly with the number of data points.
\end{enumerate}
\vspace{-2ex}
\begin{figure}[H]
    \centering
    \includegraphics[width=0.9\linewidth]{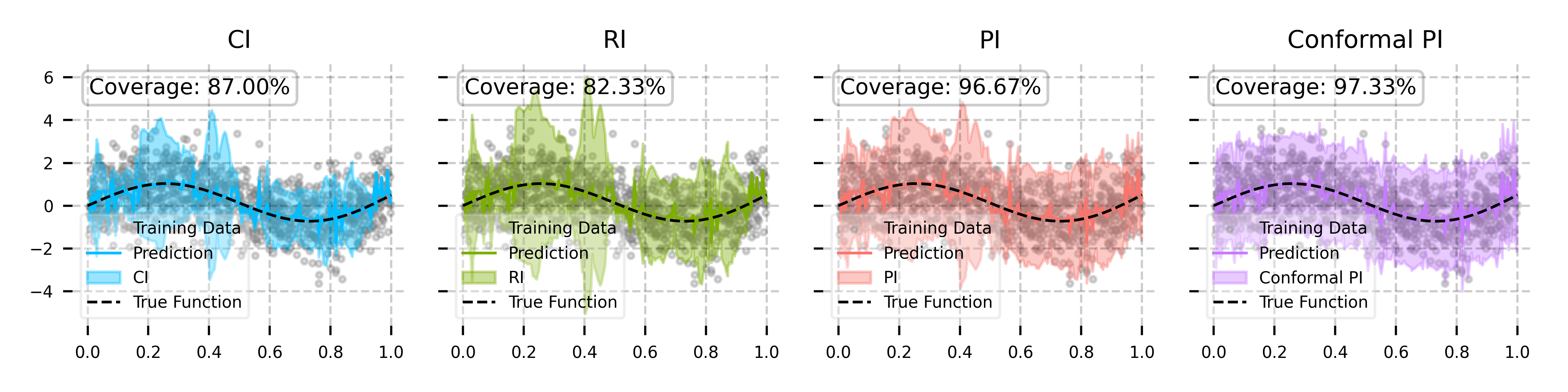}
    \vspace{-2ex}
    \caption{Demonstration of confidence, prediction, and reproduction intervals on $\f(x)=\sin 2\pi x + \frac{1}{2} x^2$. Conformal baseline. $200$ trees, learning rate $0.6$, depth $8$, subsampling and dropout $0.6$.}
    \label{fig:1d-coverage}
    \vspace{-2ex}
\end{figure}

\vspace{-1ex}
\textbf{Theoretical contributions.} Our primary theoretical contributions are central limit theorems (CLTs) for boosting with dropout (Algorithm \ref{alg:random-dropout}) and parallel training (Algorithm \ref{alg:structured-dropout}) -- the first such results for these settings to our knowledge. These extend the theoretical arguments of \cite{zhou2022boulevard}. In analyzing boosting with dropout, the stochasticity introduced into ensemble construction must be accounted for -- in addition to randomness from data subsampling. To address this, we establish a weak law for finite-sample convergence in the stochastic contraction framework of \cite{almudevar2022stochasticcontractionmappingtheorem}. 
The arguments for parallel boosting are more challenging. Inspired by classical backfitting methods \citep{breiman1985backfitting}, we develop a delayed averaging scheme that enables parallelism while preserving convergence guarantees. To achieve central limit theorems that hold unconditionally on the data $\bX$, we show a bound on the maximal leaf size is sufficient to limit the influence of distant points and promote balanced splits, enabling rigorous distributional results.

\textbf{Numerical experiments.} We establish the efficacy of our methods on both a simulation study and real-world datasets. Our numerical experiments showcase that our algorithms can be tuned to interpolate between regularized boosting and random forests, and are highly competitive in terms of MSE. These also demonstrate the correctness, coverage, and computational efficiency of the statistical procedures we construct via the central limit theorems our algorithms enjoy.


\vspace{-1ex}
\section{Setup}
\label{sec:setup}
\vspace{-1ex}

Consider boosting for nonparametric regression with squared error loss. Given random features $\bx \in [0,1]^d$, labels $y \in \R$, and noise $\epsilon \sim \text{SubG}(0,\sigma^2)$, say there exists some function $\f$ such that $y = \f(\bx) + \epsilon$. The learner is given a training set $(\bX^{\train}, \by^{\train})$ of size $n$, and learns an estimate $\widehat{\f}$ of $\f$ by minimizing the mean squared error (MSE) $\frac{1}{n} \sum_{i=1}^n (\widehat{\f}(\bx_i) - y_i)^2$ over the training set. It is customary to analyze how $\widehat{\f}$ converges to $\f$, or in other words, the generalization error of $\widehat{\f}$ on a test set $(\bX^{\test}, \by^{\text{test}})$. Subscripts $\widehat{\f}_n^{(b)}$ indicate the estimated predictor trained on $n$ datapoints for $b$ boosting rounds, where we omit dependencies on $b$ and $n$ whenever possible.

\vspace{-1ex}
\paragraph{Regularized stochastic gradient boosting.} \cite{zhou2022boulevard} propose a regularization procedure called Boulevard that recovers a central limit theorem for boosting. They do so by showing that the resulting regularized procedure converges to a kernel ridge regression. At each iteration
\begin{tightcenter}
    $\textstyle{b=1,...,B-1, \text{ instead of } \widehat{\f}^{(b+1)} \gets \widehat{\f}^{(b)} + \lambda \t^{(b)}, \text{ they update } \widehat{\f}^{(b+1)} \gets \frac{b-1}{b} \widehat{\f}^{(b)} + \frac{\lambda}{b} \t^{(b)}}$
\end{tightcenter}
\vspace{-1.2ex}
for the current function estimate $\widehat{\f}^{(b)}$, built tree $\t^{(b)}$, and learning rate $\lambda \in [0,1)$. As this is equivalent to updating $\widehat{\f}^{(b+1)} \gets \frac{\lambda}{b} \sum_{i=1}^b \t^{(i)}$, the resulting ensemble is an average of trees. This observation, along with assumptions on tree adaptivity, allows them to prove that Boulevard predictions are asymptotically normal, though they do not construct confidence or prediction intervals. Additionally, this regularization comes at a cost. It turns out that $\widehat{\f}_n^{(b)}$ converges only to $\frac{\lambda}{1+\lambda} \f \leq \f / 2$ as $n,b \to \infty$, and so \emph{Boulevard recovers at most half the signal}. Still, this is the only method available for frequentist inference for gradient boosting, and so we seek to improve upon it in this paper.
\vspace{-2ex}
\begin{figure}[H]
\centering
    \includegraphics[width=0.49\linewidth]{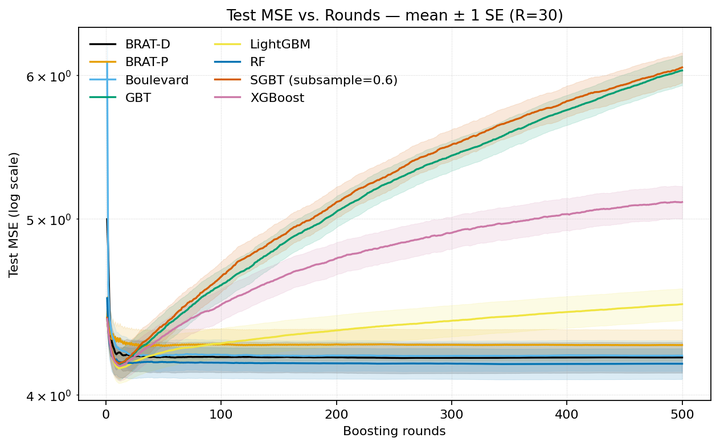}\includegraphics[width=0.32\linewidth]{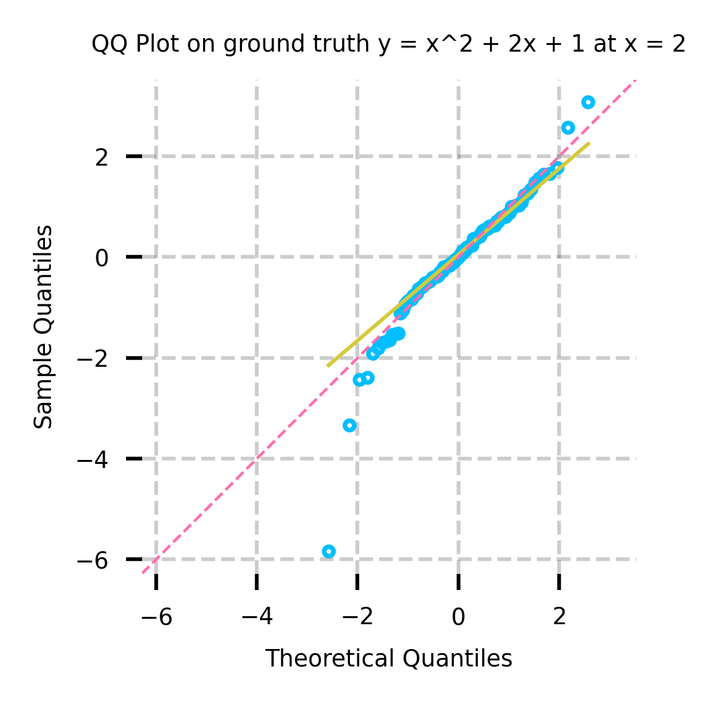}
    \vspace{-1ex}
    \caption{An added benefit of Boulevard regularization is that it renders early stopping unnecessary. Because the procedure converges to a fixed point, one can (in theory) boost indefinitely without encountering overfitting. It also produces asymptotically normal predictions (see the Q-Q plot).}
    \label{fig:no-need-early-stopping}
\end{figure}
\vspace{-2ex}

\vspace{-2ex}
\paragraph{Regression trees and tree structure.} We are primarily concerned with boosting algorithms where we use regression trees as weak learners. We introduce a formulation of a regression tree below.
\begin{defn}[Regression trees]
A regression tree $\t_n$ trained on $n$ datapoints segments the covariate space $[0,1]^d$ into a partition of hyper-rectangles $\{A_i\}_{i=1}^m$. When $A(\bx)$ for the rectangle containing $\bx$, and $s_{n,j}(\bx)$ is the frequency at which training point $\bx_j$ and test point $\bx$ share a leaf: 
\begin{tightcenter}
    $\textstyle{s_{n,j}(\bx) = \frac{\mathbbm{1}(\bx_j \in A(\bx))}{\sum_{k=1}^n \mathbbm{1}(\bx_k \in A(\bx))}, \text{ the regression tree } \t_n \text{ predicts }
\t_n(\bx) = \sum_{j=1}^n s_{n,j}(\bx)\cdot y_j}.$
\end{tightcenter}
\label{defn:regression-trees}
\end{defn}
\vspace{-1ex}
As such, a tree can be thought of as an adaptive nearest neighbor method \citep{athey2018generalizedrandomforests, friedberg2021locallinearforests}, where $s_{n,j}(\bx)$ yields the influence of datapoint $\bx_j$ on the tree's prediction at test point $\bx$. This motivates the definition of a structure vector and matrix from \cite{zhou2022boulevard}:
\vspace{-2ex}
\begin{defn}[Structure vectors and matrices]
    Let $\t_n$ be a tree trained on the input $(\bX_n, \by_n)$. For an arbitrary point $\bx$, let $\s_n(\bx) = (s_{n,1}(\bx),...,s_{n,n}(\bx))^\top$ be the structure vector of $\bx$. As such, we say that the matrix $\bS_n = \left(s_{n,j}(\bx_i)\right)_{i,j=1}^n$ with rows $\s_n(\bx_i)^\top$ is the structure matrix of $\t_n$.
    \label{defn:structure-matrix}
\end{defn}
\vspace{-1ex}
The structure matrix is a kernel matrix. A natural idea is to perform kernel ridge regression (KRR):
$$\widehat{\f}_n(\bx) = \langle\r_n(\bx), \by_n\rangle, \text{ } \k_n(\bx) = \E[\s_n(\bx)], \; \bK_n = \E[\bS_n], \text{ and } \r_n(\bx)^\top = \k_n(\bx)^\top (\lambda^{-1}\bI + \bK_n)^{-1}.$$
\cite{zhou2022boulevard} show Boulevard regularization converges to a kernel ridge regression $\widehat{\f}_n(\bx) = \langle \r_n(\bx), \by_n\rangle$ as $b \to \infty$. However, as $\lambda \in (0,1]$, this converges to at most half the signal. We later propose two algorithms in Algorithms \ref{alg:random-dropout} and \ref{alg:structured-dropout}, that converge to $\langle\r_n^D(\bx), \by_n\rangle = \k_n(\bx)^\top (\lambda^{-1}\bI + q\bK_n)^{-1} \by_n$ and $\langle\r_n^P(\bx), \by_n\rangle = \k_n(\bx)^\top (\bI + (K-1)\bK_n)^{-1}K \by_n$ respectively, achieving increased signal recovery. 
As a final note, to accommodate training $\t_n(\bx, \gG)$ on a subsample $\gG \subseteq \{1,...,n\}$, we write $s_{n,j}(\bx; \gG) = {\mathbbm{1}(\bx_j \in A, j \in \gG)}/{\sum_{k=1}^n \mathbbm{1}(\bx_k \in A, k \in \gG)}$, and $\t_n(\bx; \gG) = \sum_{j=1}^n s_{n,j}(\bx;\gG) \cdot y_j$. We suppress the dependence on subsamples when possible.

\vspace{-1ex}
\paragraph{Integrity \& adaptivity.} We inherit the following two assumptions from Boulevard regularization:
\vspace{-2ex}
\begin{aspt}[Structure-value isolation]
\label{aspt:svi}
    $(\t_n^{(b)})_{b=1}^B$ has $\s_n^{(b)}(\bx) \indep \by_n$ for all $b=1,...,B$.
\end{aspt}
\begin{aspt}[Non-adaptivity]
\label{aspt:non-adaptivity}
    There exists a probability measure \(\gQ_n\) on the space of tree structures and some $b'$ so the tree structure at iteration $b=b',...,B$ is an independent draw from \(\gQ_n\).
\end{aspt}
\vspace{-1ex}
These are required for theoretical guarantees. In practice, the first can be implemented via a stronger form of honesty \citep{wager2017estimationinferenceheterogeneoustreatment} that we call \emph{integrity}, refitting leaf values on an independent dataset after boosting. One can also randomly sample splits, or get them from a random forest trained on an independent sample. The second only requires non-adaptivity after a finite number of iterations, which may occur naturally \citep{zhou2022boulevard}. To enforce it, one can boost normally for a fixed number of steps and then sample tree structures from earlier iterations. Still, numerical experiments indicate that our algorithms perform well even without these adjustments.

\vspace{-1ex}
\section{Algorithms}
\label{sec:algorithms}
\vspace{-1ex}

To address the limitations of Boulevard regularization while retaining its statistical guarantees, we propose two improvements that achieve enhanced signal recovery and improved sample complexity:
\begin{enumerate}[label=\textbf{Alg. \arabic*:}]
    \item By incorporating random dropout \citep{rashmi2015dartdropoutsmeetmultiple} in the ensemble.
    \item By growing trees in parallel within each round with a leave-one-out procedure.
\end{enumerate}
The intuition is as follows. Surprisingly, increasing the dropout probability allows one to recover more of the signal. This is because each tree in the current ensemble is effectively downweighted during residual construction, and the new tree is fit on more of the signal. On the other hand, as the number of trees grown in parallel within each round grows, the resulting ensemble increasingly resembles the procedure of boosting random forests in sequence -- allowing each round to recover more signal than a single tree can. We present both algorithms and explain them in detail below.

\vspace{-2ex}
\begin{minipage}{0.499\textwidth}
\begin{algorithm}[H]
    \centering
    \caption{Boulevard Regularized Additive regression Trees -- Dropout (BRAT-D)}
    \label{algorithm}
    \begin{algorithmic}[1]
        \STATE \textbf{Input:} Features $\bX$, labels $\by$, dropout rate $p = 1-q$, subsample rate $\xi$, learning rate $\lambda$, boosting rounds $B$. Set $\widehat{\f}^{(0)}, \t^{(0)} \gets 0$. 
        \FOR{$b=1,...,B-1$}
        \STATE Subsample $\gS_b \subseteq \{0,...,b-1\}$ w.p. $q$.
        \STATE Subsample data $\gG_b \subseteq \{1,...,n\}$ w.p. $ \xi$.
        \STATE
        $z_i \gets y_i - \frac{\lambda}{b}\sum_{s \in \gS_b}\t^{(s)}(\bx_i)$.
        \STATE Construct tree $\t^{(b)}$ on $(\bx_i, z_i)_{i \in \gG_b}$.
        \STATE $\widehat{\f}^{(b+1)} \gets \frac{b-1}{b}\widehat{\f}^{(b)} + \frac{\lambda}{b}\t^{(b)} = \frac{\lambda}{b}\sum_{i=1}^b \t^{(i)}$.
        \ENDFOR
        \STATE \textbf{Return} final predictor $\frac{1+\lambda q}{\lambda} \widehat{\f}^{(B)}$.
    \end{algorithmic}
    \label{alg:random-dropout}
\end{algorithm}
\end{minipage}
\hfill
\begin{minipage}{0.49\textwidth}
\begin{algorithm}[H]
    \centering
    \caption{BRAT - Parallel (BRAT-P)}
    \label{algorithm1}
    \begin{algorithmic}[1]
        \STATE \textbf{Input:} Features $\bX$, labels $\by$, subsample $\xi$, trees/round $K$, rounds $B$. $\widehat{\f}^{(0)}, \t^{(0,k)} \gets 0$. 
        \STATE Fit $\widehat{\f}^{(1)}, (\t^{(1,k)})_{k=1}^K$ with $K$ boosting steps. 
        \FOR{$b=2,...,B-1$}
        \FOR{$k=1,...,K$ \textbf{in parallel}}
        \STATE Subsample $\gG_{b,k} \subseteq \{1,...,n\}$ w.p. $ \xi$.
        \STATE \hspace{-1.1mm} $z_{i,k} \gets y_i - \frac{1}{b-1}\sum_{s=1}^{b-1}\sum_{l\neq k}\t^{(s,l)}(\bx_i)$
        \STATE Construct tree $\t^{(b,k)}$ on $(\bx_i, z_{i,k})_{i \in \gG_{b,k}}$.
        \ENDFOR
        \STATE Define $\widehat{\f}^{(b+1)} \gets \frac{1}{b}\sum_{s=1}^{b}\sum_{k=1}^K \t^{(s,k)}$.
        \ENDFOR
        \STATE \textbf{Return} final predictor $\widehat{\f}^{(B)}$.
    \end{algorithmic}
    \label{alg:structured-dropout}
\end{algorithm}
\end{minipage}


\vspace{-1ex}
\paragraph{Random dropout.} Unlike vanilla boosting, Algorithm \ref{alg:random-dropout} computes residuals with both data \citep{friedman2002stochasticboosting} and ensemble subsampling \citep{rashmi2015dartdropoutsmeetmultiple}. At each round $b=1,...,B-1$, each tree is subsampled with a dropout rate of $p \in [0,1)$ (equivalently, with probability $q=1-p$). We write $\gS_b \subseteq \{0,...,b-1\}$ for the collection of subsampled tree indices at each round $b$. Likewise, data indices $\gG_b \subseteq \{1,...,n\}$ are subsampled with probability $\xi \in (0,1]$. 

As a result, each tree $\t^{(b)}$ is trained on the dataset $(\bx_i, z_i)_{i \in \gG_b}$, where the residuals are given by $z_i = y_i - \frac{1}{b} \sum_{s \in \gS} \t^{(s)}(\bx_i)$. Within the residual computation, we intentionally divide the sum of the subsampled ensemble's predictions by $b$ instead of $|\gS|$. This is done so each new tree is fit on more of the signal, allowing for increased signal recovery when applying Boulevard regularization. 

We later show in Section \ref{sec:theory} that the predictions of Algorithm \ref{alg:random-dropout} converge to a kernel ridge regression that enjoys a central limit theorem. Informally, for any test point $\bx$, we have that:
$$\textstyle{\widehat{\by}^{(b)}_n \underset{b\to\infty}{\overset{a.s.}{\longrightarrow}} \left(\lambda^{-1} \bI+q \mathbb{E}\left[\bS_n\right]\right)^{-1} \mathbb{E}\left[\bS_n\right]\by_n, \quad\lVert\r_n^D(\bx)\rVert_2^{-1}\left(\widehat{\f}_n^D(\bx) - \frac{\lambda}{1+\lambda q}\f(\bx)\right) \underset{n\to\infty}{\overset{d}{\longrightarrow}} \gN(0, \sigma^2)},$$
for a kernel matrix induced by the tree ensemble $\bK_n = \E[\bS_n]$, and limit $\widehat{\f}_{n}^D = \lim_{b \to \infty} \widehat{\f}^{(b)}_{n}$ where $\widehat{\f}_n^D(\bx) = \langle\r_n^D(\bx),  \by_n\rangle = \k_n(\bx)^\top (\lambda^{-1}\bI + q\bK_n)^{-1}\by_n$. $\lVert\r_n\rVert_2$ is the norm of kernel weights. Algorithm \ref{alg:random-dropout} encompasses two important special cases. When $\lambda=1$ and $p\to 1$, this yields a random forest. When $p=0$, Algorithm \ref{alg:random-dropout} corresponds to the original Boulevard algorithm \citep{zhou2022boulevard}. Tuning the dropout parameter $p$ allows the user to interpolate between the two, while improving over \cite{zhou2022boulevard} by up to a factor of $\frac{1+\lambda}{1+\lambda q} \in (1, 2]$ in signal recovery.

\vspace{-2ex}
\paragraph{Parallel boosting.} Algorithm \ref{alg:structured-dropout} integrates a novel leave-one-out procedure and Boulevard regularization into the parallel boosting framework \citep{long2011parallelmargin, karbasi2023impossibilityparallelizingboosting, dacunha2024optimalparallelizationboosting}. The algorithm is warm-started by performing $K$ vanilla boosting iterations in the first round. At each subsequent round $b=2,...,B-1$, we grow $K$ trees $(\t^{(b,k)})_{k=1}^K$ in parallel on the $K$ datasets $((\bx_i, z_{i,k})_{i \in \gG_{b,k}})_{k=1}^K$. These consist of independently subsampled data indices $\gG_{b,k} \subseteq \{1,...,n\}$, and residuals that are computed by leaving one ``column'' out (or alternatively, one tree out per round): $z_{i,k} = y_i - \frac{1}{b-1}\sum_{s=1}^{b-1}\sum_{l\neq k}\t^{(s,l)}(\bx_i)$. The resulting predictor is given by the regularized boosting update: $\widehat{\f}^{(b+1)} \gets \frac{b-1}{b} \widehat{\f}^{(b)} + \frac{1}{b} \sum_{k=1}^K \t^{(b,k)} = \frac{1}{b}\sum_{s=1}^{b}\sum_{k=1}^K \t^{(s,k)}$.

Likewise, we later show in Section \ref{sec:theory} that the predictions of Algorithm \ref{alg:structured-dropout} also converge to a kernel ridge regression that enjoys a central limit theorem. Informally, for any test point $\bx$:
$${\widehat{\by}^{(b)}_n \underset{b\to\infty}{\overset{a.s.}{\longrightarrow}} \left(\bI+ (K-1)\mathbb{E}\left[\bS_n\right]\right)^{-1} K\mathbb{E}\left[\bS_n\right]\by_n, \quad \lVert\r_n^P(\bx)\rVert_2^{-1}\left(\widehat{\f}_n^P(\bx) - \f(\bx)\right) \underset{n\to\infty}{\overset{d}{\longrightarrow}} \gN(0, \sigma^2)},$$
with limit $\widehat{\f}_{n}^P = \lim_{b \to \infty} \widehat{\f}^{(b)}_{n}$ given by $\widehat{\f}_{n}^P(\bx)= \langle\r_n^P(\bx), \by_n\rangle = \k_n(\bx)^\top (\bI + (K-1)\bK_n)^{-1}K \by_n$. 
Algorithm \ref{alg:structured-dropout} can be viewed as boosting a modified parallel backfitting algorithm \citep{breiman1985backfitting}\footnote{This can be thought of as a \emph{structured} variant of dropout \citep{zhao2022revisitingstructureddropout, xin2020deebertdynamicearlyexitingdropout, fan2019reducingtransformerdepthdemanddropout}. Dropping one out of every $K$ trees allows the regularization to scale in $1/K$.} that is warm-started with $K$ vanilla boosting iterations.\footnote{The initial $K$ boosting iterations yield better convergence in practice, and do not hinder our theoretical arguments as the impact of the vanilla boosting iterations washes out when we take $B \to \infty$.} When $K=1, B\to\infty$, Algorithm \ref{alg:structured-dropout} reduces to a random forest $\mathbb{E}\left[\bS_n\right]\by_n$. On the other hand, Algorithm \ref{alg:structured-dropout} reduces to vanilla boosting when $B=1, K\to\infty$. Note that the CLT only holds for $1 < K < \infty$. 

The attentive reader may observe that we do not divide the final predictions by $K$. This is because the combination of leaving one column out when fitting a tree and Boulevard averaging is sufficient to allow our procedure to stabilize.\footnote{Parallelized backfitting can be thought of as parallelized coordinate descent. Averaging over the coordinate updates is sufficient for convergence. Dividing by $b$, as in Boulevard averaging, is sufficient when $b \geq K$.} Still, dividing by $K$ yields a viable algorithm (deferred to future work) that can be thought of as boosting random forests each of $K$ trees over $B$ rounds, using the natural strategy of averaging all trees but the left-out column to form predictions. 




\vspace{-1ex}
\section{Statistical Inference and Uncertainty Quantification for Boosting}
\label{sec:statistics}
\vspace{-1ex}

We are now in a place to introduce our methods for conducting statistical inference for boosting; theoretical basis for these procedures is provided below. Our methods rest on the aforementioned convergence to a kernel ridge regression and central limit theorems that we formally prove later in Theorems \ref{thm:krrconv} and \ref{thm:main}. In the meantime, we provide the following asymptotically valid procedures.

\vspace{-1ex}
\paragraph{Estimation of asymptotic variance.} The CLTs for Algorithms \ref{alg:random-dropout} and \ref{alg:structured-dropout} imply that one can construct confidence intervals for $\f$ with the following normal approximations:
$$\lambda^{-1}(1+\lambda q)\widehat{\f}_n^D(\bx) \stackrel{d}{\approx} \gN\left(\f(\bx), \lambda^{-2}(1+\lambda q)^2\lVert\r_n^D(\bx)\sigma^2 \rVert_2^2 \right), \;\; \widehat{\f}_n^P(\bx) \stackrel{d}{\approx} \gN\left(\f(\bx), \sigma^2 \lVert\r_n^P(\bx)\rVert_2^2\right).$$
As such, it remains to estimate $\r_n^D, \r_n^P$, and $\sigma^2$. We first estimate $\k_n(\bx)$ by the sample averages
\begin{equation}
    \textstyle{\widehat{\k}_n^D(\bx) = \frac{1}{B} \sum_{b=1}^B s_{n,i}^{(b)}(\bx, \gG_b),\;\; \widehat{\k}_n^P(\bx) = \frac{1}{BK} \sum_{b=1}^B \sum_{k=1}^K s_{n,i}^{(b,k)}(\bx, \gG_{b,k}),}
    \label{eqn:hat-k}
\end{equation}
where $s_{n,i}^{(b)}(\bx; \gG_b)$ is the fraction of subsampled datapoints in the same leaf $A_j^{(b)}$ as $\bx$ over the trees. Likewise, we estimate $(\bK_n)_{i,j}$, the fraction of trees where datapoints $i$ and $j$ fall in the same leaf:
\begin{equation}
    \textstyle{(\widehat{\bK}_n^D)_{i,j} = \frac{1}{B} \sum_{b=1}^B s_{n,j}^{(b)}(\bx_i; \gG_b), \;\;(\widehat{\bK}_n^P)_{i,j} = \frac{1}{B} \sum_{b=1}^B s_{n,j}^{(b,k)}(\bx_i; \gG_{b,k})},
    \label{hat-K}
\end{equation}
Our estimates of $\widehat{\r}_n$ are then given by plugging the above into the formula for $\r_n$:
$$\widehat{\r}_n^D(\bx)^\top = \widehat{\k}^D_n(\bx)^\top (\lambda^{-1}\bI + q\widehat{\bK}^D_n)^{-1}, \;\;\widehat{\r}_n^P(\bx)^\top = \widehat{\k}^P_n(\bx)^\top (\bI + (K-1)\widehat{\bK}^P_n)^{-1}K.$$
Lastly, we estimate $\sigma^2$ with the residuals $\widehat{\sigma}^2 = \frac{1}{n_{\text{cal}}}\sum_{i=1}^{n_{\text{cal}}} (y_i - \widehat{y}_i)^2$ on a hold-out calibration set. We describe a tweak later in Section \ref{sec:coverage-experiments} that uses the calibration set for increased robustness.

\vspace{-1ex}
\paragraph{Confidence intervals for $\f$.} Putting everything together yields $1-\alpha$ confidence intervals for $\f$ (where $z_{1-\alpha/2}$ is the $(1-\alpha/2)$-quantile of the standard normal distribution):
\begin{equation}
    \lambda^{-1}(1+\lambda q)\widehat{\f}_n^D(\bx) \pm z_{1-\alpha/2}\lambda^{-1}(1+\lambda q)\widehat{\sigma}\lVert\widehat{\r}_n^D(\bx)\rVert_2 ,\;\; \widehat{\f}_n^P(\bx) \pm z_{1-\alpha/2} \widehat{\sigma}\lVert\widehat{\r}_n^P(\bx)\rVert_2.
    \label{eqn:confidence-intervals}
\end{equation}

\vspace{-1ex}
\paragraph{Prediction intervals for $\by$.} We can construct prediction intervals for $y | \bx$ as follows:
\begin{equation*}
    \lambda^{-1}(1+\lambda q)\widehat{\f}_n^D(\bx) \pm z_{1-\alpha/2}\lambda^{-1}(1+\lambda q)\widehat{\sigma}\sqrt{1+\lVert\widehat{\r}_n^D(\bx)\rVert_2^2},\;\; \widehat{\f}_n^P(\bx) \pm z_{1-\alpha/2} \widehat{\sigma}\sqrt{1+\lVert\widehat{\r}_n^P(\bx)\rVert_2^2}.
    \label{eqn:prediction-intervals}
\end{equation*}
These prediction intervals have asymptotic pointwise coverage, conditional on the test point $\bx$. That is, $\prob(y \in \text{PI}(\bx) | \bx) \to 1-\alpha$ as $n \to \infty$, which holds as a corollary of the CLTs we present in the following section. Note that this conditional coverage guarantee is a stronger guarantee than is typically possible with conformal inference \citep{barber2020limitsdistributionfreeconditionalpredictive}. We compare our prediction intervals to conformal prediction intervals within our numerical experiments later in this paper.

\vspace{-1ex}
\paragraph{Reproduction intervals for $\widehat{\f}$.} We now turn our attention towards computing reproduction intervals. That is, a confidence interval for another realization of $\widehat{\f}$ trained on another independent realization of the data. To do so, as \cite{zhou2022boulevard} suggest, it suffices to scale the width of the above confidence intervals for $\f$ by $\sqrt{2}$ as $X_1, X_2 \stackrel{i.i.d.}{\sim} \gN(0,\sigma^2) \implies X_1 - X_2 \sim \gN(0,2\sigma^2)$.
Note that one can invert the above confidence, prediction, or reproduction intervals in order to test if the underlying function, another realization of the response, or average learner is equal to a given constant. This amounts to checking if the constant is contained in the interval of choice. 


\vspace{-1ex}
\paragraph{Variable importance tests.} Suppose we wish to test for variable importance. Consider the possibility that there exists some subset of the feature space $[0,1]^d$ containing only $\widecheck{d} < d$ features, and let $\g : [0,1]^{\widecheck{d}} \to \R$ be the projection of $\f$ onto $\gL^2([0,1]^{\widecheck{d}})$. We test the possibility that $\f(\bx) = \g({\bx})$ by proxy. Consider a split of the training dataset $(\bX_n, \by_n)$ into $(\bX_{n,1}, \by_{n,1}), (\bX_{n,2}, \by_{n,2})$, where $\bX_{n,1} \in \R^{n/2, d}, \bX_{n,2} \in \R^{n/2,\widecheck{d}}$. Let $\widehat{\f}_{n,1}$ be the boosting learner trained on $(\bX_n, \by_n)$, and $\widehat{\f}_{n,2}$ be the same trained on $({\bX}_{n,2}, \by_{n,2})$. Given a hold-out dataset $(\bX_m, \by_m)$, we test the null:
$$H_0 : \f(\bx_j) = \g({\bx}_j) \text{ for all } j=1,...,m \text{ against } H_1 : \f(\bx_j) \neq \g({\bx}_j) \text{ for some } j$$
by comparing $\widehat{\f}_{n,1}(\bx_j)$ and $\widehat{\f}_{n,2}({\bx}_j)$. We exploit the CLTs our algorithms enjoy to do so. Write $\widehat{\r}^{D,P}_n(\bX_m)^\top \in \R^{m \times n}$ for the matrix with rows $(\widehat{\r}^{D,P}_n(\bx_l)^\top)_{l=1}^m$, where we write $D,P$ within the superscript if the formula holds for both Algorithms \ref{alg:random-dropout} and \ref{alg:structured-dropout}. The difference in predictions 
$$\widehat{\d}_m = (\widehat{\r}^{D,P}_{n,1}(\bX_m) - \widehat{\r}^{D,P}_{n,2}(\bX_m))^\top \by \; \stackrel{d}{\to} \; \gN_m(\mathbf{0}, \sigma^2 \widehat{\bXi}_n) \text{ is multivariate normal under the null},$$
with covariance matrix $\sigma^2 \widehat{\bXi}_n = \sigma^2 (\widehat{\r}^{D,P}_{n,1}(\bX_m) - \widehat{\r}^{D,P}_{n,2}(\bX_m))^\top (\widehat{\r}^{D,P}_{n,1}(\bX_m) - \widehat{\r}^{D,P}_{n,2}(\bX_m))$ estimated by plugging in $\widehat{\sigma}^2$. As such, we can conduct a chi-squared test with test statistic
$$\widehat{\sigma}^{-2} \widehat{\d}_m^\top \widehat{\bXi}_n^{-1} \widehat{\d}_m \sim \chi_m^2 \text{ under the null}, \text{ rejecting the null if } \widehat{\sigma}^{-2} \widehat{\d}_m^\top \widehat{\bXi}_n^{-1} \widehat{\d}_m > \chi_{m,1-\alpha}^2.$$
The test as presented runs in $O(n^3)$ time. In Appendix \ref{app:random-projections}, we present an accelerated version of the test that runs in 
$O(ns(r+s) + r^3)$ time, where $s$ is the number of points subsampled for Nystr\"{o}m approximation and $r$ is the number of test points subsampled. 



\vspace{-1ex}
\paragraph{Matrix sketching.} Although he CLT reduces the problem of inference to kernel ridge regression (with the right kernel), computing \(\widehat{\bK}_n\) is an $O(n^3)$ problem -- intractable for large $n$. This is the chief difficulty that Zhou and Hooker (2022) face when constructing replication intervals for vanilla Boulevard.
We bypass this issue, making our procedures practical and tractable via matrix sketching.

We approximate \(\widehat{\bK}_n\) using the Nystr\"{o}m method \citep{williams2000nystromkernel}, via either uniform subsampling or the recursive approach of \cite{musco2017recursivenystrom}, producing an approximation \(\widetilde{\bK}_n \approx \widehat{\bK}_n\bS(\bS^\top \widehat{\bK}_n\bS)^{\dagger} \bS^\top \widehat{\bK}_n\), where \(\bS \in \mathbb{R}^{n \times s}\) is a random subsampling matrix. With the Nystrom method of Musco and Musco (2017), we can choose $s$ to near-linear in the effective dimension $d_{\text{eff}}^\mu$ of a ridge regression problem with regularization parameter $\mu$ on the kernel matrix: $s = \widetilde{O}(d_{\text{eff}}^\mu)$.

This requires only $O(ns^2)$ time to precompute the kernel, and only $O(s^2)$ time for inference -- yielding practical statistical inference in near-linear time in the number of datapoints $n$. We adopt this approach in our experiments, allowing our procedures to run in linear time and remain practical, unlike previous work \citep{zhou2022boulevard, mentch2016randomforestci}. See Appendix \ref{app:random-projections}.

\vspace{-1ex}
\section{Theoretical Guarantees}
\vspace{-1ex}
\label{sec:theory}
As promised, we now formally present guarantees for convergence and asymptotic normality for Algorithms \ref{alg:random-dropout} and \ref{alg:structured-dropout}. Consider the (Lipschitz) nonparametric regression model introduced below:
\begin{aspt}[Lipschitz Nonparametric Regression]
    Let \(y = f(\bx) + \epsilon,\) where \(\mathbf{x}\) has density \(\mu\) satisfying $0 < c_1 \le \mu(\mathbf{x}) \le c_2 < \infty$ on its support. Further assume \(f\) is \(\alpha\)-Lipschitz continuous on \(\mathrm{supp}(\mu)\), and noise \(\epsilon\) is sub-Gaussian with variance proxy \(\sigma^2\).
    \label{aspt:nonparametric-regression}
\end{aspt}

\vspace{-1ex}
\paragraph{Finite-sample convergence to KRR.} We now show that our algorithms achieve finite-sample convergence to kernel ridge regression as the number of boosting rounds increases. At first glance, this is surprising -- greedy split selection and sequential ensemble construction create a highly nonlinear and time-dependent mapping. Fortunately, the regularization procedure of \cite{zhou2022boulevard} provides a way forward. 
For the special case of $p=0$ within Algorithm \ref{alg:random-dropout}, they show convergence to a kernel ridge regression via the stochastic contraction framework of \cite{almudevar2022stochasticcontractionmappingtheorem}, utilizing Assumptions \ref{aspt:svi} (structure–value isolation) and \ref{aspt:non-adaptivity} (non-adaptivity) to ensure stagewise stability. 
A key novelty within our analysis lies in showing that this convergence extends to Algorithms \ref{alg:random-dropout} and \ref{alg:structured-dropout}, even with the added complexity introduced by dropout and parallelism respectively. Despite these added sources of stochasticity and dependence, our use of the same regularization mechanism ensures that both algorithms inherit the stability required for convergence:
\begin{thm}[Finite Sample Convergence to KRR]
\label{thm:krrconv}
For fixed $\bX, \by$, under Assumptions \ref{aspt:svi}, \ref{aspt:non-adaptivity}, and \ref{aspt:nonparametric-regression},
$$\widehat{\by}_b \stackrel{\text{a.s.}}{\to} \left(\lambda^{-1} \bI+q \mathbb{E}\left[\bS_n\right]\right)^{-1} \mathbb{E}\left[\bS_n\right] \by \text{ for Alg. \ref{alg:random-dropout}},
\;
\widehat{\by}_b \stackrel{\text{a.s.}}{\to} \left(I+(K-1)\mathbb{E}\left[\bS_n\right]\right)^{-1} K\mathbb{E}\left[\bS_n\right] \by \text{ for Alg. \ref{alg:structured-dropout}},$$
as $b \to \infty$, where $q=1-p$ is one minus the dropout probability, $K$ is the number of trees grown in parallel, and $\bS_n$ is the kernel matrix induced by the tree structures within the ensemble.
\end{thm}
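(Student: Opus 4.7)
The plan is to cast each algorithm as a stochastic affine recursion on the $n$-dimensional vector of training-set fitted values $\widehat{\by}^{(b)}$, and invoke the stochastic contraction mapping theorem of \cite{almudevar2022stochasticcontractionmappingtheorem}, extending the Boulevard argument of \cite{zhou2022boulevard}. Two ingredients do the heavy lifting. Assumption \ref{aspt:non-adaptivity} ensures that for $b \geq b'$ the structure matrices $\bS^{(b)}$ are i.i.d.\ from a common law $\gQ_n$ with mean $\bM := \E[\bS_n]$. Assumption \ref{aspt:svi} gives the conditional independence of $\bS^{(b)}$ from $\by$, so that $\E[\bS^{(b)} \bz \mid \gF_{b-1}] = \bM \cdot \E[\bz \mid \gF_{b-1}]$ whenever the residual vector $\bz$ is measurable with respect to the past and $\by$. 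This linearization in conditional expectation is the key step that reduces a nonlinear boosting recursion to a tractable affine one driven by a contraction.

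\textbf{Algorithm \ref{alg:random-dropout}.} Boulevard averaging gives the update $\widehat{\by}^{(b+1)} = \tfrac{b-1}{b}\widehat{\by}^{(b)} + \tfrac{\lambda}{b}\bS^{(b)}\bz^{(b)}$ with $\bz^{(b)} = \by - \tfrac{\lambda}{b}\sum_{s \in \gS_b}\t^{(s)}$ (evaluated on training points), where each past tree is retained in $\gS_b$ independently with probability $q = 1-p$. Taking conditional expectations, $\E[\bz^{(b)} \mid \gF_{b-1}] = \by - \tfrac{q(b-1)}{b}\widehat{\by}^{(b)}$, so the mean-field dynamics reduce to a Robbins--Monro recursion with step-size $1/b$ and mean map $\widehat{\by} \mapsto \lambda\bM(\by - q\widehat{\by})$. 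Its unique fixed point $\widehat{\by}^* = \lambda \bM(\by - q\widehat{\by}^*)$ solves to $(\lambda^{-1}\bI + q\bM)^{-1}\bM\by$, matching the claimed limit. I would then verify Almudevar's hypotheses: the martingale-difference noise has uniformly bounded second moment (using row-stochasticity of $\bS^{(b)}$, bounded features on $[0,1]^d$, and sub-Gaussian $\epsilon$), and contractivity of the mean map follows as in \cite{zhou2022boulevard} since the spectral radius of $\lambda q \bM$ is at most $\lambda q < 1$.

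\textbf{Algorithm \ref{alg:structured-dropout}.} For the parallel variant, I would track the $K$ per-column averages $\bm{\tau}_k^{(b)} := \tfrac{1}{b}\sum_{s=1}^{b}\t^{(s,k)}$, so that $\widehat{\by}^{(b)} = \sum_{k=1}^{K}\bm{\tau}_k^{(b)}$. The leave-one-out residual $\bz^{(b,k)} = \by - \sum_{l \neq k}\bm{\tau}_l^{(b-1)}$ couples the columns, so I would analyze the stacked process $(\bm{\tau}_1^{(b)},\ldots,\bm{\tau}_K^{(b)}) \in \R^{nK}$, whose block-affine mean map is $\bm{\tau}_k \mapsto \bM(\by - \sum_{l \neq k}\bm{\tau}_l)$. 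Exchangeability of the columns under Assumption \ref{aspt:non-adaptivity} forces the fixed point to be symmetric, $\bm{\tau}_k^* = \widehat{\by}^*/K$; substituting and summing over $k$ yields $\widehat{\by}^*(\bI + (K-1)\bM) = K\bM\by$, which is exactly the claimed limit. Contractivity on the symmetric subspace is again spectral, and the $K$ vanilla warm-start iterations contribute only a bounded initial bias that is washed out by the $1/b$ averaging as $b \to \infty$.

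\textbf{Main obstacle.} The principal difficulty is controlling the martingale-difference noise in the presence of two independent sources of stochasticity --- data subsampling $\gG_b$ and ensemble subsampling $\gS_b$ in Algorithm \ref{alg:random-dropout}, or $K$ coupled columns in Algorithm \ref{alg:structured-dropout} --- and verifying that its second moment stays bounded in $b$ so that the $1/b$ step-size yields a summable second-order perturbation, as required by the stochastic contraction theorem. A secondary subtlety specific to Algorithm \ref{alg:structured-dropout} is that $\widehat{\by}^{(b)}$ alone is not contractive under the parallel update; one must argue contraction on the $nK$-dimensional stacked space and then project back to identify the limit. Once these two pieces are in place, \cite{almudevar2022stochasticcontractionmappingtheorem} delivers the claimed almost-sure convergence.
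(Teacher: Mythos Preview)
Your high-level plan---cast the iterates as a stochastic affine recursion and invoke the Almudevar contraction theorem, exactly as in \cite{zhou2022boulevard}---is the same route the paper takes, and your fixed-point computations are correct. Two technical points deserve attention, because they are where the paper's proof spends its effort and where your plan is thin.

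\textbf{Truncation for Algorithm \ref{alg:random-dropout}.} The paper does not rely on ``row-stochasticity, bounded features, and sub-Gaussian $\epsilon$'' to bound the martingale noise (and note that for fixed $\bX,\by$ the sub-Gaussian assumption is vacuous---$\by$ is a fixed vector). Instead it inserts a hard truncation $\Gamma_M$ on the dropped-out partial ensemble $\widetilde{\by}_b = \tfrac{\lambda}{b}\sum_{s\in\gS_b}\t^{(s)}$ before forming residuals, then argues separately that $\mathrm{tr}\,\mathrm{Var}(\widetilde{\by}_b)=O(1/b)$ so the cap is eventually inactive and can be removed from the limit. Your plan could be repaired without truncation by an inductive a priori bound $\|\t^{(s)}\|_\infty \le \|\by\|_\infty/(1-\lambda)$, but this needs to be stated; as written, the ``uniformly bounded second moment'' claim has no source.

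\textbf{Algorithm \ref{alg:structured-dropout}: the paper does not stack.} Here your route genuinely differs. The paper works directly on the sum $\widehat{\by}_{b,K}=\sum_k\bm{\tau}_k^{(b)}$, exploiting that the $K$ leave-one-out residuals jointly involve $K\by-(K-1)\widehat{\by}_{b-1,K}$, and then uses the subsampling estimate $\|\E[\bS_n]\|\le 1-O(1/n)$ (their Lemma on the kernel norm) to squeeze out a strict contraction factor $(b+1-O(1/n))/(b+1)$. Your stacked $\R^{nK}$ approach is conceptually cleaner, but the sentence ``contractivity on the symmetric subspace is again spectral'' is not enough: the full-space Jacobian of your mean map has block-off-diagonal $-\bM$ and spectral radius $(K-1)\|\bM\|_2$, which can exceed $1$ for $K\ge 2$, and the noise knocks the process off the symmetric subspace at every step. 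You would need either a separate argument that the asymmetric component decays (e.g.\ by tracking pairwise differences $\bm{\tau}_k-\bm{\tau}_l$), or to adopt the paper's direct route on $\widehat{\by}_{b,K}$.
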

The proof, deferred to Appendix \ref{app:finite-sample-convergence}, departs from \cite{zhou2022boulevard} in two crucial ways. Within Algorithm \ref{alg:random-dropout}, dropout injects unbounded variance into the ensemble updates. To control this, we introduce the hard truncation function $\Gamma_M(y) = \operatorname{sign}(y)\min\{M,|y|\}$ into each residual update,
$
  \widehat{\by}_b 
    = \frac{b-1}{b}\,\widehat{\by}_{b-1}
    + \frac{\lambda}{b}\,\bS_b\bigl(\by - \Gamma_M(\widetilde{\by}_b)\bigr),
$
and then show that the probability of the partial ensemble escaping this cap vanishes as $b\to\infty$. Within Algorithm \ref{alg:structured-dropout}, truncating $\widetilde{\by}_{b,k}=\widehat{\by}_{b-1,K}-\Gamma_M(\frac{1}{b-1}\sum_{g=1}^{b-1}\widehat{\t}_{b,k})$ alone is insufficient. We further introduce a delay mechanism, requiring that $\t_{b,k}$ does not rely on $\t_{b,k-1}$, allowing us to apply Theorem  \ref{thm:stochastic-contraction-mapping} and parallelize tree training. 

\vspace{-1ex}
\paragraph{A central limit theorem.} Having established almost-sure convergence to fixed points, we now examine the asymptotic distribution of our estimators to justify the statistical procedures introduced in Section \ref{sec:statistics}. We seek a central limit theorem for the learner $\widehat{\f}_n^{D,P}$, trained on random $\bX_n, \by_n$, demonstrating that the predictions $\widehat{\f}_n^{D,P}(\bx)$ 
are asymptotically normal with mean $\f(\bx)$ as $n \to \infty$. To do so, we inherit the following assumptions from \cite{zhou2022boulevard}: two on the leaves to control the norm of the KRR weight vector $\r_n^{D,P}$, and a restriction on the tree distribution space:
\begin{aspt}[Bounded Leaf Diameter]
    \label{aspt:leaf-diameter}
    Write $\mathsf{diam}(A)=\sup_{x,y\in A}\norm{x-y}$. For any leaf $A$ in a tree with structure $q\in Q_n$, we need $\sup_{A \in q} \mathsf{diam}(A) =  O(d_n)$, where $d_n = O(n^{-{1}/{(d+1)}})$.
\end{aspt}
\vspace{-1ex}
\begin{aspt}[Increased Minimal Leaf Size]
    \label{aspt:ub-minimal-leaf-size}
    For any $\nu >0$, \(v_n =n^{-\frac{d+1}{d+2}+\nu}<n^{-\frac{d}{d+1}}=O(d_n^d)\).
\end{aspt}
\begin{aspt}[Restricted Tree Support]
    \label{aspt:restricted-tree-space}
    The cardinality of the tree space $Q_n$ is bounded by $O(n^{-1} \exp(0.5 n^{{1}/{(d+2)}-\nu} - n^{\alpha}))$, for some small $\alpha > 0$.
\end{aspt}
\vspace{-1ex}
Intuitively, Assumption \ref{aspt:ub-minimal-leaf-size} prevents leaves from becoming too small too quickly, which in turn controls the maximal coordinate of the weight vectors across random samples. Assumption \ref{aspt:restricted-tree-space}, guarantees that the complexity of possible tree partitions does not explode with \(n\). We also make assumptions on the regularity of tree splits in line with the notion of $\alpha$-regularity in \cite{athey2018generalizedrandomforests}:
\vspace{-1ex}
\begin{aspt}[Median Splitting Rules]
    \label{aspt:median-trees}
    The trees in Algorithm \ref{alg:structured-dropout} are split at the medians.
\end{aspt}
\vspace{-1ex}
We defer the details to Definition \ref{def:alpha_regular}. Under this splitting rule, the point-to-point collision probability is well-controlled, i.e. to say none of the leaves are big enough to contain most of the points. 
These conditions allow for uniform control over both leaf sizes and tree complexity, yielding the result:
\begin{thm}[Central Limit Theorem for Predictions]
	\label{thm:main}
	Let $\bx \in [0,1]^d$, $q \in (0,1]$, $K > 1$. As $n \to\infty$,
	$$
	\norm{\r^D_n}_2^{-1}\left(\widehat{\f}^D_n(\bx) - \lambda^{-1}(1+\lambda q)\f(\bx)\right) \stackrel{d}{\longrightarrow} \mathcal{N}(0,\sigma^2), \quad
	\norm{\r^P_n}_2^{-1}\left(\widehat{f}^P_n(\bx) - f(\bx)\right) \stackrel{d}{\longrightarrow} \mathcal{N}(0,\sigma^2).
	$$
\end{thm}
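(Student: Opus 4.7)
The plan is to combine Theorem \ref{thm:krrconv}, which identifies the $b\to\infty$ fixed points of both algorithms as kernel ridge regressions, with a Lindeberg--Feller CLT applied to the resulting weighted sums. Concretely, writing the limiting predictor as $\widehat{\f}_n^{D,P}(\bx) = \r_n^{D,P}(\bx)^\top\by_n$, I decompose
\[
\widehat{\f}_n^{D,P}(\bx) \;=\; \r_n^{D,P}(\bx)^\top \f(\bX_n) \;+\; \r_n^{D,P}(\bx)^\top\boldsymbol{\epsilon},
\]
and treat the bias (first term) and noise (second term) separately. Throughout I condition on $\bX_n$ and the tree-structure randomness, so that Assumption \ref{aspt:svi} makes $\r_n^{D,P}(\bx)$ independent of the noise.

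For the noise term, the summands $r_{n,i}^{D,P}(\bx)\,\epsilon_i$ are then independent, sub-Gaussian, with variance proxy $\sigma^2\|\r_n^{D,P}\|_2^2$. The Lindeberg--Feller CLT yields the stated Gaussian limit as soon as $\|\r_n^{D,P}\|_\infty/\|\r_n^{D,P}\|_2 \to 0$ in probability. This is where the leaf-geometry assumptions enter: Assumption \ref{aspt:ub-minimal-leaf-size} uniformly caps the largest entry of $\bK_n$ and $\k_n(\bx)$, Assumption \ref{aspt:leaf-diameter} together with Assumption \ref{aspt:restricted-tree-space} delivers uniform concentration of the empirical structure matrix onto its expectation, and the regularization terms $\lambda^{-1}\bI$ for Algorithm \ref{alg:random-dropout} or $\bI$ for Algorithm \ref{alg:structured-dropout} keep $(\lambda^{-1}\bI+q\bK_n)^{-1}$ and $(\bI+(K-1)\bK_n)^{-1}$ uniformly well-conditioned. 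For Algorithm \ref{alg:structured-dropout}, Assumption \ref{aspt:median-trees} ensures balanced splits that do not depend on $\by$, which is what promotes a conditional CLT to the unconditional one stated in the theorem.

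For the bias term, I split $\f(\bX_n) = \f(\bx)\mathbf{1} + (\f(\bX_n)-\f(\bx)\mathbf{1})$. The first piece recovers the asserted mean: since the rows of $\bS_n$ are probability vectors we have $\bK_n\mathbf{1}\approx \mathbf{1}$ and $\k_n(\bx)^\top\mathbf{1}\approx 1$, whence $\r_n^D(\bx)^\top\mathbf{1} \to \lambda/(1+\lambda q)$ and $\r_n^P(\bx)^\top\mathbf{1}\to 1$, matching the asserted centers (after the $\lambda^{-1}(1+\lambda q)$ rescaling in Algorithm \ref{alg:random-dropout}'s return step and no rescaling in Algorithm \ref{alg:structured-dropout}). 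The Lipschitz remainder $\r_n^{D,P}(\bx)^\top(\f(\bX_n)-\f(\bx)\mathbf{1})$ is bounded by $\alpha\,\|\r_n^{D,P}\|_1\,\max_{j:\,r_{n,j}\neq 0}\|\bx_j-\bx\|$, and Assumption \ref{aspt:leaf-diameter} forces that maximum distance to be $O(d_n) = o(1)$ on the support of $\r_n^{D,P}$, making this negligible compared to $\|\r_n^{D,P}\|_2$.

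The main obstacle is uniform control of $\|\r_n^{D,P}\|_\infty / \|\r_n^{D,P}\|_2$ over the joint randomness in $\bX_n$ and tree structures. For Algorithm \ref{alg:random-dropout} this largely mirrors \cite{zhou2022boulevard} with $\bK_n$ replaced by $q\bK_n$, so the explicit $\lambda^{-1}$ regularization continues to dominate the smallest eigenvalues and the dropout factor $q$ only changes constants. For Algorithm \ref{alg:structured-dropout} the absence of a $\lambda^{-1}$ term is the delicate point: the $\bI$ arising from the leave-one-out construction plays the role of the ridge, and one must exploit median splitting to cap the maximal leaf size unconditionally on $\bX_n$, keeping $\bI+(K-1)\bK_n$ well-conditioned and forcing $K > 1$ so that $(K-1)\bK_n$ is not degenerate. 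Once these ingredients are in place, Lindeberg--Feller for the noise term plus the $o(\|\r_n^{D,P}\|_2)$ bias bound combine to give the CLT for both algorithms.
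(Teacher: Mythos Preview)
Your overall skeleton---decompose $\widehat{\f}_n^{D,P}(\bx) = \r_n^\top\f(\bX_n) + \r_n^\top\boldsymbol\epsilon$, apply Lindeberg--Feller to the noise, and show the bias is $o(\|\r_n\|_2)$---is exactly what the paper does, and your noise argument is fine: the ratio $\|\r_n\|_\infty/\|\r_n\|_2 = O(n^{-1/(2(d+1))})$ follows from the leaf-size assumptions and dispatches the Lindeberg condition.

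The genuine gap is in your bias analysis. You bound the Lipschitz remainder by $\alpha\,\|\r_n\|_1\max_{j:\,r_{n,j}\neq 0}\|\bx_j-\bx\|$ and then assert that Assumption~\ref{aspt:leaf-diameter} makes that maximum $O(d_n)$ ``on the support of $\r_n^{D,P}$.'' This is false: only $\k_n(\bx)$ has support within one leaf diameter of $\bx$. The vector $\r_n^D = (\lambda^{-1}\bI+q\bK_n)^{-1}\k_n$ (and analogously $\r_n^P$) picks up contributions from every power $\bK_n^l$ in the Neumann expansion, and $[\bK_n^l]_{ij}\neq 0$ whenever $\bx_i$ and $\bx_j$ can be joined by a chain of $l$ shared-leaf pairs. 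Hence $r_{n,j}$ is generically nonzero for points arbitrarily far from $\bx$, your $\max$ is $O(1)$, and the bound is vacuous.

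The paper repairs this with an exponential-locality step you are missing. Setting $l_n\asymp\log n$ and $D_n=\{i:\|\bx_i-\bx\|\le l_n d_n\}$, it proves $\|\r_n\proj{D_n^c}\|_1=O(1/n)$ via the Neumann series together with the chain observation (any nonzero entry $[\bK_n^l]_{j,i}$ with $\|\bx_i-\bx_j\|>(l_n-1)d_n$ forces $l\ge l_n$), and then handles the near piece $D_n$ with your Lipschitz idea but at radius $l_n d_n$, not $d_n$. For Algorithm~\ref{alg:random-dropout} the tail $\sum_{l\ge l_n}(\lambda q)^l$ converges because $\lambda q<1$. For Algorithm~\ref{alg:structured-dropout} the analogous factor is $K-1\ge 1$, so one must additionally show $\|(K-1)\bK_n\|_\infty<1$; this is exactly what Assumption~\ref{aspt:median-trees} buys by bounding point-to-point collision probabilities. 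Your reading of median splitting---``keeping $\bI+(K-1)\bK_n$ well-conditioned''---is therefore off: that matrix is always well-conditioned since $\bK_n$ has spectrum in $[0,1]$; median splitting is needed to make the locality series converge so that distant points contribute $O(1/n)$ total weight.
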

\vspace{-1ex}
We prove this in two stages. In the first, with proof deferred to Appendix \ref{app:conditional-clt}, we show asymptotic normality of the predictions around the noiseless KRR predictions $\langle \r_n^{D,P}(\bx), \f(\bX_n)\rangle$, conditional on the training data $\bX_n, \by_n$ and test point $\bx$, by applying the Lindeberg-Feller CLT to \(\langle\mathbf{r}_n^{D,P}(\bx), \varepsilon\rangle\). 
Next, we show that $\langle \r_n^{D,P}(\bx), \f(\bX_n)\rangle$ converges to the underlying ground truth function at a sufficiently fast rate in Appendices \ref{app:random-clt} and \ref{app:main-theorem}. For Algorithm \ref{alg:random-dropout}, although $\r_n^D$ now involves $[\frac1\lambda \bI+q\bK_n]^{-1}$, this still preserves the rate $\|\r_n\|_2\asymp n^{-1/(2(d+1))}$, allowing us to show $\langle \br_n,f(\bX_n)\rangle-\frac{\lambda q}{1+\lambda q} \f(\bx)=o_p(\|\r_n\|_2)$. The Lindeberg–Feller verification then proceeds identically to the non‑dropout case. However, to give a CLT for Algorithm \ref{alg:structured-dropout}, controlling the influence from distant training points is challenging, though we show that Assumption \ref{aspt:median-trees} is sufficient to do so. 

\vspace{-1ex}
\paragraph{Understanding the result.} 
The rate of convergence of the CLT in Theorem \ref{thm:main} depends on the norm of $\r_n^{D,P}$. This is controlled as follows, yielding generalization bounds for Algorithms \ref{alg:random-dropout} and \ref{alg:structured-dropout}:
\begin{lem}[Rate of Convergence]
Let $\mathbf{B}_n := \{i  : \norm{\bx-\bx_i}\leq d_n\}$ be the points within distance $d_n$ from test point $\bx$. If $\left| \mathbf{B}_n \right|  = \Ome{ n \cdot d_n^d }$, then
	$
	\norm{\bk_n}_2 = \Theta(n^{-\frac{1}{2}\frac{1}{d+1}}),
    \norm{\r^{D,P}_n}_2 = \Theta(n^{-\frac{1}{2}\frac{1}{d+1}}).
    $
\label{lem:rateofr}
\end{lem}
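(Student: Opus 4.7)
The proof has two pieces. First, control $\|\bk_n\|_2$ directly from the combinatorial structure of the tree kernel; second, show that the matrix inverse in $\r_n^{D,P}$ preserves the rate. Throughout, the key identity is that every realized structure vector $\s_n(\bx)$ is a probability vector (nonnegative with $\sum_j s_{n,j}(\bx)=1$), so $\bk_n(\bx)=\mathbb{E}[\s_n(\bx)]$ is also a probability vector with $\|\bk_n\|_1=1$. Moreover, by Assumption \ref{aspt:leaf-diameter}, $s_{n,j}(\bx)=0$ whenever $\|\bx-\bx_j\|>d_n$, so $\bk_n$ is supported on $\mathbf{B}_n$, whose cardinality satisfies $|\mathbf{B}_n|=\Theta(nd_n^d)$ once we combine the lemma's hypothesis with the density upper bound $\mu\le c_2$ from Assumption \ref{aspt:nonparametric-regression}.

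\textbf{Lower bound on $\|\bk_n\|_2$.} Apply Cauchy--Schwarz to the support: $1=\|\bk_n\|_1^2 \le |\mathbf{B}_n|\cdot\|\bk_n\|_2^2$, yielding $\|\bk_n\|_2^2 \ge 1/|\mathbf{B}_n| = \Omega\!\bigl(1/(nd_n^d)\bigr)=\Omega(n^{-1/(d+1)})$.

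\textbf{Upper bound on $\|\bk_n\|_2$.} For any realized tree, the weights inside the leaf $A(\bx)$ are uniform, giving the identity $\|\s_n(\bx)\|_2^2 = 1/|A(\bx)\cap\bX|$. By Jensen's inequality,
$$
\|\bk_n\|_2^2 \;=\; \bigl\|\mathbb{E}[\s_n(\bx)]\bigr\|_2^2 \;\le\; \mathbb{E}\bigl[\|\s_n(\bx)\|_2^2\bigr] \;=\; \mathbb{E}\!\left[\frac{1}{|A(\bx)\cap \bX|}\right].
$$
The remaining task is to show $|A(\bx)\cap\bX| = \Omega(nd_n^d)$ with high enough probability that $\mathbb{E}[1/|A(\bx)\cap\bX|] = O(1/(nd_n^d))$. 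I would obtain this by combining (i) the diameter bound $A(\bx)\subseteq B(\bx,O(d_n))$, so that $A(\bx)\cap\bX\subseteq\mathbf{B}_n$, with (ii) the density lower bound $\mu\ge c_1$ from Assumption \ref{aspt:nonparametric-regression}, and (iii) the regularity of the splitting rule (Assumption \ref{aspt:ub-minimal-leaf-size}, and Assumption \ref{aspt:median-trees} for Algorithm \ref{alg:structured-dropout}) which rules out pathologically thin leaves. Concretely, once leaves are both diameter- and cell-volume-balanced, a uniform concentration argument on $|A\cap\bX|$ across $A\in Q_n$ (via a union bound over the finite tree space controlled by Assumption \ref{aspt:restricted-tree-space}) will yield the required lower bound on leaf count. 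I anticipate this will be the main obstacle, since the diameter control alone allows thin cells.

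\textbf{From $\bk_n$ to $\r_n^{D,P}$.} Each realized kernel matrix $\bS_n$ is symmetric (same-leaf membership is symmetric and the weights $1/|A|$ are shared) and row-stochastic, so by Perron--Frobenius $\|\bS_n\|_{\mathrm{op}}=1$, and hence $\|\bK_n\|_{\mathrm{op}}\le \mathbb{E}\|\bS_n\|_{\mathrm{op}}=1$. Since $\bK_n$ is also PSD, its spectrum lies in $[0,1]$. For Algorithm \ref{alg:random-dropout}, the eigenvalues of $\lambda^{-1}\bI+q\bK_n$ lie in $[\lambda^{-1},\lambda^{-1}+q]$, so its inverse has spectrum in $[(\lambda^{-1}+q)^{-1},\lambda]$; for Algorithm \ref{alg:structured-dropout}, $K(\bI+(K-1)\bK_n)^{-1}$ has spectrum in $[1,K]$. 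In both cases the operator has bounded condition number independent of $n$, so
$$
\|\r_n^{D,P}\|_2 \;=\; \Theta(\|\bk_n\|_2) \;=\; \Theta\!\bigl(n^{-1/(2(d+1))}\bigr),
$$
completing the proof.
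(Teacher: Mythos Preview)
Your lower bound via Cauchy--Schwarz on the support and your passage from $\bk_n$ to $\r_n^{D,P}$ via spectral bounds on the KRR inverse both match the paper's argument essentially verbatim (the paper's lower-bound prose is somewhat garbled, but it reduces to the same Cauchy--Schwarz step you give).

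The one place you diverge is the upper bound on $\|\bk_n\|_2$, and there you are making life harder than necessary. The paper does not go through Jensen and a concentration argument on $|A(\bx)\cap\bX|$; instead it reads Assumption \ref{aspt:ub-minimal-leaf-size} as a \emph{deterministic} lower bound $\inf_{A\in q\in Q_n}|A\cap\bX|=\Omega(n^{1/(d+1)})$ and immediately concludes that every coordinate satisfies $\k_{nj}\le 1/\inf_A|A\cap\bX|=O(n^{-1/(d+1)})$, hence $\|\bk_n\|_\infty=O(n^{-1/(d+1)})$. One then finishes with the elementary interpolation
\[
\|\bk_n\|_2 \;\le\; \sqrt{\|\bk_n\|_1\,\|\bk_n\|_\infty}\;=\;O\bigl(n^{-1/(2(d+1))}\bigr).
\]
The obstacle you flagged---``diameter control alone allows thin cells''---is precisely what the minimal-leaf-size assumption sidesteps, so no concentration over the tree space (and no appeal to Assumption \ref{aspt:restricted-tree-space}) is needed at this stage. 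Your Jensen route would also work once you invoke the same leaf-size bound inside the expectation, but it is strictly more effort for the same conclusion.

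Two minor points. First, with subsampling $\|\bk_n\|_1=1-O(1/n)$ rather than exactly $1$ (this is the paper's Lemma \ref{lem:kernelrate}); it does not affect the rate. Second, your claim that each realized $\bS_n$ is symmetric fails once subsampling is present: if $j\notin\gG$ then $[\bS_n]_{ij}=0$ regardless of leaf co-membership, while $[\bS_n]_{ji}$ can be positive. The averaged matrix $\bK_n=\mathbb{E}[\bS_n]$ is symmetric by exchangeability of the subsampling, and the eigenvalue intervals you quote for $(\lambda^{-1}\bI+q\bK_n)^{-1}$ and $K(\bI+(K-1)\bK_n)^{-1}$ are exactly the ones the paper uses, so your conclusion for $\|\r_n^{D,P}\|_2$ stands.
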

\begin{cor}
\label{cor:asymptotic-risk}
$
\E[(\frac{1+\lambda q}{\lambda}\widehat{\f}^D_n(\bx)-\f(\bx))^2]
\lesssim (\frac{1+\lambda q}{\lambda})^2\sigma^2 n^{-\frac{1}{d+1}}$ and $ 
\E[(\widehat f^P_n(\bx)-f(\bx))^2]
\lesssim \sigma^2 n^{-\frac{1}{d+1}}.
$
\end{cor}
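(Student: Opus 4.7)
The plan is to combine the closed-form KRR limit of Theorem \ref{thm:krrconv}, a bias--variance decomposition, and the rate bound $\norm{\r_n^{D,P}}_2^2 = \Theta(n^{-1/(d+1)})$ from Lemma \ref{lem:rateofr}. Writing $\by_n = \f(\bX_n) + \beps$ and using $\widehat{\f}_n^D(\bx) = \langle \r_n^D(\bx), \by_n\rangle$, I would decompose
\begin{equation*}
    \tfrac{1+\lambda q}{\lambda}\widehat{\f}_n^D(\bx) - \f(\bx) = \tfrac{1+\lambda q}{\lambda}\langle \r_n^D(\bx), \beps\rangle + \Bigl(\tfrac{1+\lambda q}{\lambda}\langle \r_n^D(\bx), \f(\bX_n)\rangle - \f(\bx)\Bigr)
\end{equation*}
into a stochastic piece driven by the noise and a deterministic (in $\beps$) bias piece. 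The Algorithm \ref{alg:structured-dropout} case follows by an identical decomposition with scaling $1$ in place of $\tfrac{1+\lambda q}{\lambda}$ and $\r_n^P$ in place of $\r_n^D$.

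For the stochastic piece, conditioning on $\bX_n$ and the ensemble structures leaves $\beps$ independent subgaussian with variance proxy $\sigma^2$, giving $\E[\langle \r_n^D(\bx), \beps\rangle^2 \mid \bX_n] \leq \sigma^2 \norm{\r_n^D(\bx)}_2^2$. Taking the outer expectation and plugging in Lemma \ref{lem:rateofr} yields the dominant $O\!\bigl(\sigma^2 (\tfrac{1+\lambda q}{\lambda})^2 n^{-1/(d+1)}\bigr)$ contribution. For the bias piece, the second-stage arguments in the proof of Theorem \ref{thm:main} (Appendices \ref{app:random-clt}, \ref{app:main-theorem}) already establish that $\tfrac{1+\lambda q}{\lambda}\langle \r_n^D(\bx), \f(\bX_n)\rangle - \f(\bx) = o_p(\norm{\r_n^D}_2)$. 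Upgrading this to an $L^2$ bound of order $O(d_n) = O(n^{-1/(d+1)})$ combines Assumption \ref{aspt:leaf-diameter} (leaves have diameter $O(d_n)$), the $\alpha$-Lipschitz property of $\f$ from Assumption \ref{aspt:nonparametric-regression}, and the observation that $\r_n^D$ remains essentially localized around $\bx$ after the inversion $(\lambda^{-1}\bI + q\bK_n)^{-1}$: replacing $\f(\bx_i)$ by $\f(\bx)$ in the inner product incurs at most $\alpha d_n$ per unit of weight. Squaring gives a bias contribution of $O(n^{-2/(d+1)})$, strictly smaller than the variance.

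Summing the two contributions proves the first inequality; the argument for Algorithm \ref{alg:structured-dropout} is identical after substituting $\r_n^P$ for $\r_n^D$ and removing the scaling. The main obstacle is the last step --- upgrading the $o_p$ bias control from the CLT proof to an $L^2$ bound --- since convergence in probability does not by itself imply convergence in $L^2$. I expect this is handled by combining Assumption \ref{aspt:ub-minimal-leaf-size}, which produces a deterministic $\ell^\infty$ envelope on $\r_n^{D,P}$ via the lower bound on leaf sizes, with the uniform $O(d_n)$ diameter bound, to obtain a deterministic pointwise $O(d_n)$ envelope on the bias that is then trivially integrable. Everything else reduces to a routine variance computation.
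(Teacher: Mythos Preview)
Your approach matches the paper's, which is even terser: it simply notes that the variance of the scaled prediction is $(\tfrac{1+\lambda q}{\lambda})^2 \sigma^2 \norm{\r_n^D}^2$ and plugs in the rate from Lemma~\ref{lem:rateofr}, leaving the bias term entirely implicit (relying on the proof of Theorem~\ref{thm:main} to show it is $o(\norm{\r_n})$). Your bias--variance decomposition is therefore more careful than the paper's own argument.

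One minor correction to your bias sketch: the claim that ``replacing $\f(\bx_i)$ by $\f(\bx)$ incurs at most $\alpha d_n$ per unit of weight'' is too strong as written. The voting vector $\k_n(\bx)$ is supported within distance $d_n$ of $\bx$, but after applying $(\lambda^{-1}\bI + q\bK_n)^{-1}$ the weights $\r_n^D$ spread out --- this is exactly what the exponential-decay Lemmas~\ref{lemma:expdecayd} and~\ref{lem:expdecayp} control, localizing $\r_n^{D,P}$ to radius $l_n d_n = O(\log n \cdot d_n)$ with only $O(n^{-1})$ total $\ell_1$ mass outside. The correct bias bound therefore picks up a polylogarithmic factor (the paper's Theorem~\ref{thm:main} proof gets bias${}^2 = O_p((\log n)^{d+2} n^{-2/(d+1)})$), which is still lower order than the variance term, so your conclusion stands. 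Your final paragraph about a deterministic $\ell^\infty$ envelope is on the right track for the $L^2$ upgrade, but you should route it through the decay lemmas rather than Assumption~\ref{aspt:ub-minimal-leaf-size} alone.
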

\begin{cor}
\label{cor:non-asymptotic-risk}
If $n \geq \Omega\left(\frac{\log(1/\delta)}{\epsilon^{d+1}}\right)$, $b \geq \Omega\left(\frac{n\lambda^2M^2}{\epsilon^{2}\delta}\right)$, then w.p. at least $1-\delta$, $|\widehat{\f}_n^{(b)}(\bx) - \f(\bx)| \leq \epsilon$.
\end{cor}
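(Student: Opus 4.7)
The plan is to decompose the target error via the triangle inequality as
\begin{equation*}
    |\widehat{\f}_n^{(b)}(\bx) - \f(\bx)| \;\leq\; |\widehat{\f}_n^{(b)}(\bx) - \widehat{\f}_n^*(\bx)| \;+\; |\widehat{\f}_n^*(\bx) - \f(\bx)|,
\end{equation*}
where $\widehat{\f}_n^*(\bx) = \langle \r_n^{D,P}(\bx), \by\rangle$ is the KRR fixed point guaranteed by Theorem \ref{thm:krrconv}. I would then bound each summand by $\epsilon/2$ with probability at least $1-\delta/2$ and combine by a union bound, calibrating the sample complexity in $n$ to control the \emph{statistical error} and the sample complexity in $b$ to control the \emph{optimization error}.

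For the statistical error, I would invoke Corollary \ref{cor:asymptotic-risk}, which gives $\E[(\widehat{\f}_n^*(\bx) - \f(\bx))^2] \lesssim \sigma^2 n^{-1/(d+1)}$, together with the kernel-weight norm bound $\|\r_n^{D,P}\|_2 = \Theta(n^{-1/(2(d+1))})$ from Lemma \ref{lem:rateofr}. Writing $\widehat{\f}_n^*(\bx)-\langle\r_n^{D,P}(\bx),\f(\bX_n)\rangle = \langle \r_n^{D,P}(\bx), \beps\rangle$, sub-Gaussianity of $\beps$ (and independence from $\r_n^{D,P}$ via Assumption \ref{aspt:svi}) yields a tail bound of the form $\prob(|\widehat{\f}_n^*(\bx) - \f(\bx)| > \epsilon/2) \leq 2\exp(-C\epsilon^2 n^{1/(d+1)} / \sigma^2)$, with the approximation bias $\langle\r_n^{D,P}(\bx),\f(\bX_n)\rangle - \f(\bx)$ absorbed as a lower-order term by the arguments in Appendices \ref{app:random-clt}--\ref{app:main-theorem}. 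Requiring this tail to be at most $\delta/2$ produces the claimed bound on $n$.

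For the optimization error, I would extract a quantitative rate from the stochastic contraction argument underlying Theorem \ref{thm:krrconv}. The truncated update $\widehat{\by}^{(b)} = \frac{b-1}{b}\widehat{\by}^{(b-1)} + \frac{\lambda}{b}\bS_b(\by - \Gamma_M(\widetilde{\by}_b))$ used there bounds each per-iteration increment coordinatewise by $\lambda M/b$, so the martingale-difference analysis within \cite{almudevar2022stochasticcontractionmappingtheorem} yields $\E\|\widehat{\by}^{(b)} - \widehat{\by}^*\|_2^2 \lesssim n\lambda^2 M^2/b$, with the factor $n$ arising from summing variance contributions across the $n$ coordinates of the iterate. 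Chebyshev's inequality then converts this second-moment bound into $|\widehat{\f}_n^{(b)}(\bx) - \widehat{\f}_n^*(\bx)| \leq \epsilon/2$ with probability at least $1-\delta/2$, provided $b \geq \Omega(n\lambda^2 M^2/(\epsilon^2 \delta))$; the $1/\delta$ (rather than $\log(1/\delta)$) scaling reflects that only a second-moment control is available for the boosting dynamics.

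The main obstacle is upgrading the qualitative almost-sure convergence of Theorem \ref{thm:krrconv} to an explicit $O(1/b)$ second-moment rate. Although the truncation $\Gamma_M$ already makes each residual bounded, one must carefully track the martingale differences induced by tree-structure sampling and dropout, and verify that their variance contributions decay as $1/b^2$ per step rather than accumulating. For Algorithm \ref{alg:structured-dropout}, the delay mechanism and leave-one-out coupling among the $K$ parallel trees complicate the bookkeeping further, since the $K$ trees within a round are dependent and must be controlled jointly. A secondary subtlety is that the $\log(1/\delta)$ (rather than polynomial) dependence in the statistical error requires genuine sub-Gaussian control of $\langle \r_n^{D,P}(\bx), \beps\rangle$, which in turn relies on Assumption \ref{aspt:ub-minimal-leaf-size} to bound $\|\r_n^{D,P}\|_\infty$ uniformly in $n$.
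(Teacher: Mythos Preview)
Your proposal is correct and follows essentially the same approach as the paper: both decompose into statistical error (controlled via sub-Gaussian concentration on $\langle \r_n^{D,P}(\bx), \beps\rangle$ using $\|\r_n\|_2 = \Theta(n^{-1/(2(d+1))})$) and algorithmic error (controlled by extracting an $O(n\lambda^2 M^2/b)$ variance rate from the stochastic-contraction increments $\|\epsilon_b\| \leq \frac{\lambda}{b}(1+\sqrt{n})\cdot 2M$). The only minor differences are that the paper invokes the Kolmogorov-type maximal inequality built into Theorem~\ref{thm:stochastic-contraction-mapping} (together with Freedman's inequality for $\Pr(|\bz_b|>\epsilon/2)$) rather than a direct second-moment bound with Chebyshev, and it applies the SCM argument directly to the scalar process $\widehat{\f}_b(\bx)-\widehat{\f}_*(\bx)$ (replacing $\bK_n$ by $\k_n(\bx)$, whose row sum is still at most $1$) rather than bounding the full vector $\widehat{\by}^{(b)}-\widehat{\by}^*$ and then projecting; both choices lead to the same $b \geq \Omega(n\lambda^2 M^2/(\epsilon^2\delta))$ requirement.
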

The first is an asymptotic risk bound, while the second is a nonasymptotic PAC guarantee. Recall that $M$ is the truncation level.
The results above recover the minimax rate for nonparametric regression on $1/2$-Holder smooth functions \citep{stone1982} -- quadratically worse than the rate for Lipschitz functions (the setting we are in). We believe this can be improved -- trees should inherit adaptivity to the intrinsic dimension as an adaptive nearest neighbor method, but we leave this for future work.

Furthermore, Theorem \ref{thm:main} yields asymptotic coverage of our intervals constructed in Section \ref{sec:statistics}:
\begin{cor}
    \label{cor:cov-rate}
    The CIs, PIs and RIs achieve $1-\alpha$ almost sure pointwise coverage as $n,b \to \infty$.
\end{cor}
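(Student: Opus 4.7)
The plan is to combine the CLT of Theorem \ref{thm:main} with consistency of the plug-in estimators $\widehat{\r}_n^{D,P}$ and $\widehat{\sigma}^2$, then invoke Slutsky. I would take the double limit in the order $b\to\infty$ followed by $n\to\infty$: for each fixed $n$, Theorem \ref{thm:krrconv} ensures that sending $b\to\infty$ produces the exact KRR fixed point, while simultaneously driving the plug-in kernel quantities to their population counterparts; activating the CLT in $n$ then closes the loop. A diagonal choice $b=b(n)$ growing fast enough also suffices.

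Consistency of the plug-ins is the workhorse. Under Assumption \ref{aspt:non-adaptivity}, the tree structures $\{\s_n^{(b)}\}_{b\ge b'}$ are i.i.d.\ draws from $\gQ_n$ with $[0,1]$-valued coordinates, so the sample averages in \eqref{eqn:hat-k} and \eqref{hat-K} satisfy $\widehat{\k}_n^{D,P}\stackrel{\mathrm{a.s.}}{\to}\k_n$ and $\widehat{\bK}_n^{D,P}\stackrel{\mathrm{a.s.}}{\to}\bK_n$ entrywise by the strong law. Since $\lambda^{-1}\bI+q\bK_n$ and $\bI+(K-1)\bK_n$ are uniformly positive definite, matrix inversion is continuous at these points, and the continuous mapping theorem yields $\widehat{\r}_n^{D,P}\stackrel{\mathrm{a.s.}}{\to}\r_n^{D,P}$, hence $\|\widehat{\r}_n^{D,P}\|_2/\|\r_n^{D,P}\|_2\stackrel{\mathrm{a.s.}}{\to}1$. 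For $\widehat{\sigma}^2$, Corollary \ref{cor:asymptotic-risk} supplies $L^2$-consistency of the rescaled booster at every calibration point, so each residual equals $\epsilon_i+o_p(1)$; sub-Gaussianity from Assumption \ref{aspt:nonparametric-regression} gives uniform integrability of the squared residuals, and a law of large numbers on the hold-out set produces $\widehat{\sigma}^2\stackrel{p}{\to}\sigma^2$.

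With these in hand, Theorem \ref{thm:main} furnishes the exact pivot $\sigma^{-1}\|\r_n^{D,P}\|_2^{-1}(\widehat{\f}_n^{D,P}(\bx)-c\cdot\f(\bx))\stackrel{d}{\to}\gN(0,1)$, where $c$ is the centering constant consistent with the rescaling used in Section \ref{sec:statistics}. Multiplying by the convergent ratio $\sigma\|\r_n^{D,P}\|_2/(\widehat{\sigma}\|\widehat{\r}_n^{D,P}\|_2)\stackrel{p}{\to}1$ preserves the $\gN(0,1)$ limit by Slutsky, so inverting the interval in \eqref{eqn:confidence-intervals} gives $1-\alpha$ pointwise CI coverage. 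For PIs, the fresh noise $\epsilon$ at the test point is independent of the training noise and contributes an additional $\sigma^2$ to the variance, giving the $\sqrt{1+\|\r_n^{D,P}\|_2^2}$ width and conditional coverage given $\bx$. For RIs, two independent training sets produce independent $\widehat{\f}_{n,1},\widehat{\f}_{n,2}$, each obeying the CLT, whose difference at $\bx$ has asymptotic variance $2\sigma^2\|\r_n^{D,P}\|_2^2$, matching the $\sqrt{2}$-inflated interval.

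The main obstacle is the interplay between the $b\to\infty$ limit (needed for plug-in consistency and for Theorem \ref{thm:krrconv}) and the $n\to\infty$ limit (needed for the CLT and for $\widehat{\sigma}^2\to\sigma^2$). The cleanest resolution is the iterated-limits argument above, together with a concentration bound showing that, at fixed $n$, the plug-in ratio converges exponentially in $b$ (as the structures are bounded and i.i.d.), so any $b(n)$ growing faster than $\log n$ works. A secondary subtlety is that practice uses the Nystr\"{o}m sketch $\widetilde{\bK}_n$; controlling its operator-norm error via the effective-dimension bound of \cite{musco2017recursivenystrom} introduces only a vanishing additive term in the pivot and does not affect the asymptotic coverage.
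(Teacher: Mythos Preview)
Your proposal is correct and, in fact, more careful than the paper's own argument. The paper's proof (Appendix E.2) works with the \emph{true} $\sigma$ and $\|\r_n^{D,P}\|$ rather than the plug-ins $\widehat\sigma,\|\widehat{\r}_n^{D,P}\|$, defines an ``empirical coverage rate'' as an average of $n$ indicators that do not actually vary with the summation index, and then invokes SLLN to identify this with the coverage probability before appealing to Theorem~\ref{thm:main}. Your route is a direct Slutsky argument on the studentized pivot: you establish plug-in consistency of $\widehat{\k}_n,\widehat{\bK}_n$ via the strong law under Assumption~\ref{aspt:non-adaptivity}, push through matrix inversion by continuous mapping, and handle $\widehat\sigma^2$ on the hold-out set via Corollary~\ref{cor:asymptotic-risk}. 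This buys you an honest treatment of the estimated standard errors that the paper's sketch elides, at the cost of a short additional argument about the iterated $b\to\infty$ then $n\to\infty$ limit (which you flag correctly). Both approaches ultimately rest on Theorems~\ref{thm:krrconv} and~\ref{thm:main}; your PI and RI arguments via independent noise and independent training replicates mirror the paper's exactly.
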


We show in Lemma \ref{lem:rateofr} that the rate at which $\lVert\r_n^{D,P}\rVert_2$ grows is the same for Algorithms \ref{alg:random-dropout} and \ref{alg:structured-dropout}. This shows that our methods achieve an asymptotic relative efficiency relative to \cite{zhou2022boulevard} by up to a factor of $4$ for Algorithm \ref{alg:random-dropout}, and at least a factor of $4$ for Algorithm \ref{alg:structured-dropout}:
\begin{cor}[Asymptotic Relative Efficiency]
    \label{cor:are}
    Write $\widehat{\f}^B$ for the scaled predictions made by vanilla Boulevard \citep{zhou2022boulevard} with the same learning rate $\lambda \in (0,1]$ as Algorithm \ref{alg:random-dropout}. Then,
    \begin{tightcenter}
        $
    {\text{Var}(\frac{1+\lambda}{\lambda}\widehat{\f}^B)}/{\text{Var}(\frac{1+\lambda q}{\lambda}\widehat{\f}^D)}=(\frac{1+\lambda}{1+\lambda q})^2 \in [1,4],
    \; 
    {\text{Var}(\frac{1+\lambda}{\lambda}\widehat{\f}^B)}/{\text{Var}(\widehat{\f}^P)}=\left(\frac{1+\lambda}{\lambda}\right)^2 \in [4,\infty).
    $
    \end{tightcenter}
\end{cor}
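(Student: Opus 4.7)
The plan is to combine the central limit theorems of Theorem \ref{thm:main} with the norm estimates of Lemma \ref{lem:rateofr}. From Theorem \ref{thm:main}, after rescaling so that each estimator is asymptotically unbiased for $\f(\bx)$, the asymptotic variances are
$$
\mathrm{Var}(\tfrac{1+\lambda}{\lambda}\widehat{\f}^B) \sim (\tfrac{1+\lambda}{\lambda})^2 \sigma^2 \|\r_n^B\|_2^2, \quad \mathrm{Var}(\tfrac{1+\lambda q}{\lambda}\widehat{\f}^D) \sim (\tfrac{1+\lambda q}{\lambda})^2 \sigma^2 \|\r_n^D\|_2^2, \quad \mathrm{Var}(\widehat{\f}^P) \sim \sigma^2 \|\r_n^P\|_2^2,
$$
where $\widehat{\f}^B$ denotes the raw vanilla-Boulevard output (Algorithm \ref{alg:random-dropout} with $q=1$), which converges to $\tfrac{\lambda}{1+\lambda}\f$ before the scaling $\tfrac{1+\lambda}{\lambda}$ debiases it, and similarly for $\widehat{\f}^D$ and $\widehat{\f}^P$.

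Next, I would invoke Lemma \ref{lem:rateofr}: each of $\|\r_n^B\|_2$, $\|\r_n^D\|_2$, $\|\r_n^P\|_2$ equals $\Theta(n^{-1/(2(d+1))})$, with leading asymptotic behavior driven by $\|\k_n\|_2$ and independent (to leading order) of the regularization parameters $\lambda$, $q$, $K$. Tracking the constants in the proof of that lemma --- exploiting that $\k_n$ concentrates on the $d_n$-neighborhood of $\bx$ (Assumption \ref{aspt:leaf-diameter}) and that $\bK_n$ acts on such localized vectors in a way that makes $(\lambda^{-1}\bI + c\bK_n)^{-1}\k_n$ asymptotically equivalent to a common scalar multiple of $\k_n$ --- I obtain $\|\r_n^B\|_2^2/\|\r_n^D\|_2^2 \to 1$ and $\|\r_n^B\|_2^2/\|\r_n^P\|_2^2 \to 1$. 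The norm factors then cancel in the variance ratios, leaving only the squared scaling factors, which yields the two displayed identities.

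Finally, verifying the numerical ranges is elementary optimization over the parameter domain. For the dropout comparison, $(1+\lambda)/(1+\lambda q)$ with $\lambda \in (0,1]$ and $q = 1-p \in [0,1]$ attains its minimum $1$ at $q = 1$ (no dropout, so Algorithm \ref{alg:random-dropout} reduces to vanilla Boulevard) and its supremum $1+\lambda \le 2$ at $(\lambda,q) = (1,0)$, producing the squared range $[1,4]$. For the parallel comparison, $(1+\lambda)/\lambda = 1 + \lambda^{-1}$ is monotonically decreasing on $(0,1]$, with minimum $2$ at $\lambda = 1$ and $\to \infty$ as $\lambda \to 0^+$, yielding $[4,\infty)$. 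The main obstacle is the cancellation of the three $\|\r_n\|_2^2$ terms in the ratios: Lemma \ref{lem:rateofr} as stated only fixes the $\Theta$-rate, so the proof hinges on a refined analysis showing that their leading prefactors all coincide --- concretely, that on the local neighborhood where $\k_n$ is supported the action of $\bK_n$ is close enough to a fixed scalar that $(\lambda^{-1}\bI + c\bK_n)^{-1}\k_n$ has asymptotic norm $\sim \|\k_n\|_2$ with a $c$-independent constant.
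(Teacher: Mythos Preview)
Your approach is essentially the paper's own: the paper gives no separate proof of this corollary, treating it as immediate from Theorem~\ref{thm:main} (the CLT) together with Lemma~\ref{lem:rateofr} (the common $\Theta$-rate of $\|\r_n\|_2$). The single sentence preceding the corollary is the full justification offered. Your variance identities, the cancellation of the $\sigma^2$ and norm factors, and the elementary range computations all match what the paper implicitly does.

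You are right to flag the prefactor issue, and in fact you have put your finger on something the paper itself does not resolve. Lemma~\ref{lem:rateofr} only shows that $\|\r_n^B\|_2$, $\|\r_n^D\|_2$, $\|\r_n^P\|_2$ are each $\Theta(n^{-1/(2(d+1))})$; its proof gives the eigenvalue sandwich $\frac{\lambda}{1+\lambda q}\|\k_n\|_2 \le \|\r_n^D\|_2 \le \lambda\|\k_n\|_2$ (and analogously for $B$, $P$), so the implied constants are \emph{not} shown to coincide. The exact equality in the corollary therefore does not follow rigorously from the stated lemmas; the paper is effectively reading the ARE as the ratio of the squared scaling factors attached to the common rate $\sigma^2 n^{-1/(d+1)}$ (compare Corollary~\ref{cor:asymptotic-risk}), rather than as the limit of the true variance ratio. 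Your proposed refinement would close this gap, but be careful: if $\k_n$ were merely an approximate eigenvector of $\bK_n$ with eigenvalue $\alpha$, then $(\lambda^{-1}\bI + c\bK_n)^{-1}\k_n \approx (\lambda^{-1}+c\alpha)^{-1}\k_n$, whose norm constant \emph{does} depend on $c$. To get a $c$-independent leading constant you would need something stronger (e.g.\ that the relevant $\alpha$ is $o(1)$), which neither you nor the paper establishes.
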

\vspace{-2ex}
This has the implication that Algorithms \ref{alg:random-dropout} and \ref{alg:structured-dropout} achieve increased signal recovery in the square root of the ARE relative to vanilla Boulevard. The intuition for this improvement is as follows. For any choice of $\lambda$ within vanilla Boulevard, there exists a choice of $p$ within Algorithm \ref{alg:random-dropout} that requires a smaller rescaling. On the other hand, as Algorithm \ref{alg:structured-dropout} requires no rescaling, it can achieve an unbounded improvement in relative efficiency over vanilla Boulevard as we take $\lambda \to 0$.

\vspace{-2ex}
\section{Numerical Experiments}
\label{sec:experiments}
\vspace{-2ex}

We have presented a rich theory in the previous section. Alongside convergence to kernel ridge regression and a central limit theorem for predictions, we provide asymptotic risk bounds and nonasymptotic PAC guarantees, and show asymptotic coverage for our intervals. 
In light of this, we present a series of numerical experiments to justify our algorithms and methods empirically. 

\begin{figure}[t]
    \makebox[\textwidth][c]{\includegraphics[width=1.1\linewidth]{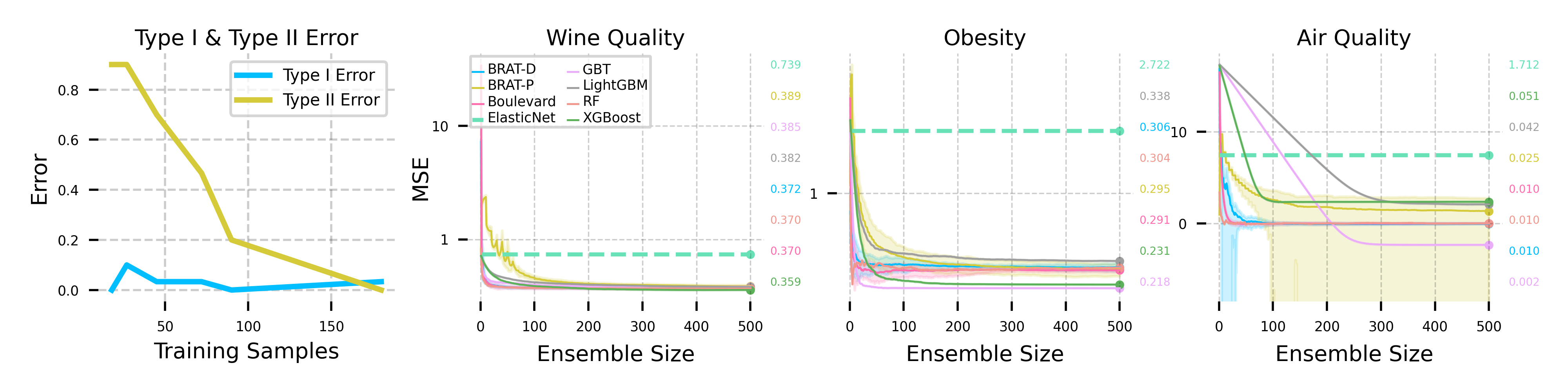}}
    \vspace{-5ex}
    \caption[Size & power / MSE races]{%
      \textbf{Left:} Type I and Type II error of the hypothesis test for variable importance against training set size. Test set is of the same size. Error rates computed over 30 trials.
      \textbf{Right:} Comparison of MSE achieved by various machine learning algorithms on different datasets, with hyperparameters tuned for each by Optuna. Shaded area depicts two standard deviations over 5 trials.
    }
    \label{fig:mse-tests}
    \vspace{1ex}
\end{figure}


\vspace{-2ex}
\paragraph{Predictive accuracy.} The first, and most natural, thing to do is to examine the performance of our algorithms against a handful of competitors in terms of test MSE on nine datasets from the UCI Machine Learning Repository in Figure \ref{fig:mse-tests}. Our competitors include popular methods such as XGBoost \citep{chen2016xgboost}, LightGBM \citep{ke2017lightgbm}, as well as random forests and vanilla boosting. We also compare to vanilla Boulevard \citep{zhou2022boulevard}, and an elastic net regression as a baseline. All hyperparameters were tuned with Optuna \citep{optuna_2019}, reported in Appendix \ref{app:mse}. There is no clear winner in general, but XGBoost is a consistently strong performer, and random forests are clearly more suited to some datasets than others. Neither Algorithm \ref{alg:random-dropout} nor Algorithm \ref{alg:structured-dropout} consistently outperform each other, but Algorithm \ref{alg:structured-dropout} occasionally exhibits instability on some datasets. Regardless, both of our algorithms are competitive in terms of final MSE and rate of convergence, and exhibit the ability to be tuned to be closer to boosting (Wine Quality) or random forests (Air Quality), providing increased flexibility by interpolating between the two.

Our results in Figure \ref{fig:mse-tests} are somewhat unfair to Algorithm 2, as the x-axis is in the ensemble size and not the number of boosting rounds. The number of boosting rounds that Algorithm 2 encounters can be as few as 31 on Abalone, in contrast to the 500 boosting rounds all other algorithms enjoy. Given the same number of boosting rounds, we would certainly expect an improvement.

\begin{figure}[t]
    \makebox[\textwidth][c]{\includegraphics[width=1.1\linewidth]{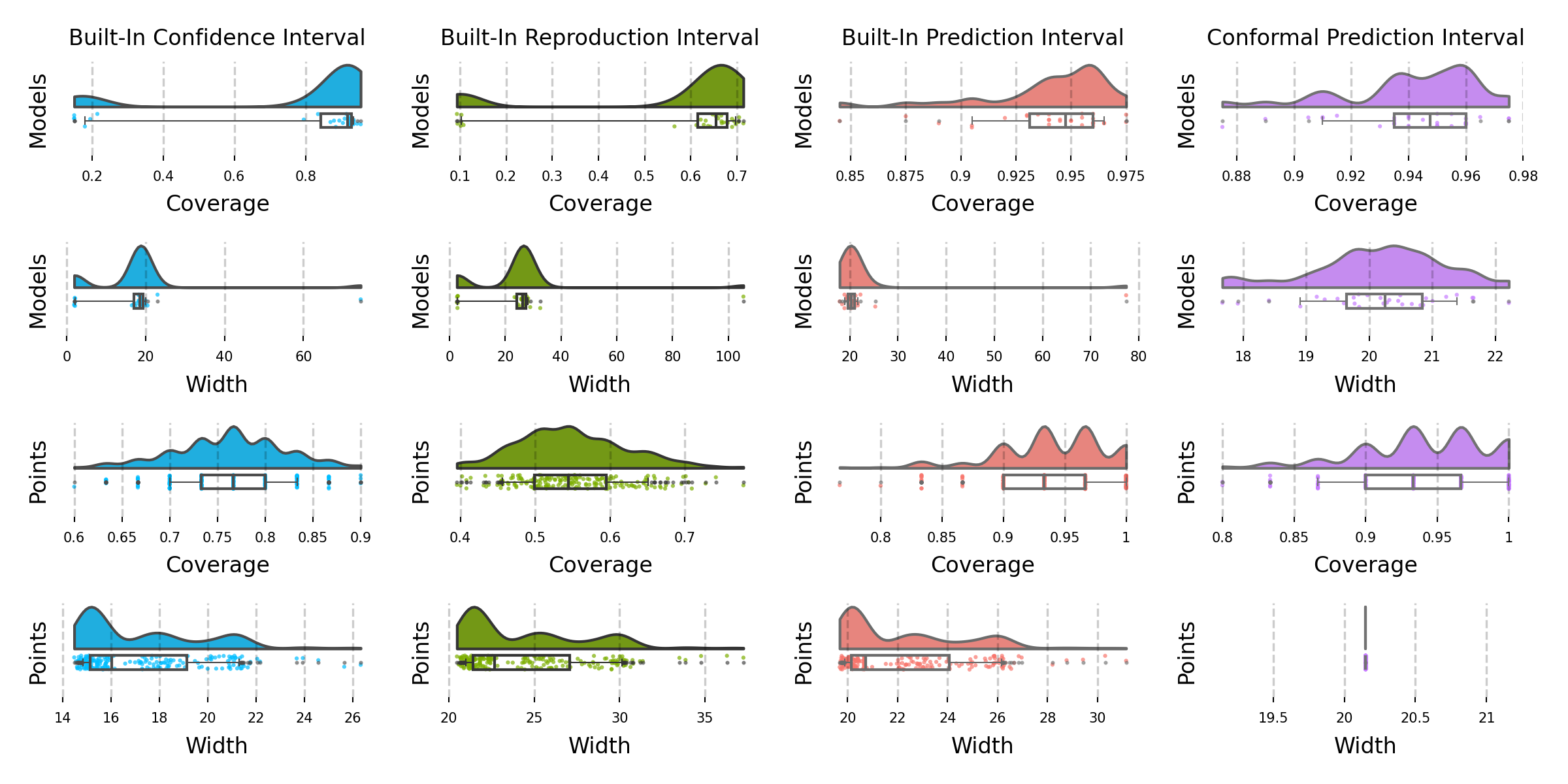}}
    \vspace{-4ex}
    \caption{Points in the first two rows represent one of $30$ models, depicting the fraction of test points that fell inside the interval generated by the model (marginal coverage) and average width for each model. Points in the last two represent one of $100$ test points, depicting the fraction of models with intervals containing the test point (conditional coverage) and average width for each point. Results from Algorithm \ref{alg:random-dropout} with $200$ trees, learning rate $0.6$, subsampling $0.8$, dropout $0.3$, max depth $4$.}
    \label{fig:rainclouds}
\end{figure}
    
\vspace{-2ex}    
\paragraph{Variable importance tests.} We examine their performance by conducting a simulation study, testing the null $H_0:w=0$ on data from $\f(\bx)=4 x_1 -x^2_2+w bx_3$. We fit Algorithm \ref{alg:random-dropout} on data generated from $\f(\bx)$ and $\g(\bx)=4 x_1 - x^2_2$, with $100$ trees, $\lambda=1$, subsampling rate $1$, dropout rate $0.95$, and a max depth of $6$. The size and power of our test presented in Section \ref{sec:statistics} are depicted in the left panel within Figure \ref{fig:mse-tests}. Our test performs very well, maintaining appropriate size control throughout while the Type II error decreases quickly. Empirically, increasing the dropout rate increases power.
\vspace{-2ex}  
\paragraph{Coverage of intervals.}
\label{sec:coverage-experiments}
Other than the Nystr\"{o}m approximations for fast and practical interval computation in Appendix \ref{app:random-projections}, there is one more tweak that we encourage users to make in practice. We demonstrate in Corollary \ref{cor:cov-rate} that our intervals have asymptotic coverage, but this says nothing about its finite sample properties. Fortunately, \cite{candes2020adaptivecoverage} yields a simple enhancement. Since we estimate $\widehat{\sigma}$ through computing the residuals on a hold-out calibration set, we reuse the calibration set to adaptively grow or shrink our intervals according to the empirical prediction interval coverage on the calibration set. This procedure converges quickly (amounting to a doubling trick and binary search), and empirically is very helpful in increasing robustness and finite-sample performance.

To examine the empirical performance of our intervals, we consider the Friedman function $\f(\bx) = 10\sin(\pi \bx_{1}\bx_{2})+20(\bx_{3}-0.5)^{2}+5\bx_{5}-10$ \citep{friedman2000greedy}. This is depicted as a raincloud plot in Figure \ref{fig:rainclouds}. Importantly, we consider two notions of coverage: marginal coverage, where coverage is averaged over test points, and conditional coverage, where the coverage is conditional on a test point. The first two rows depict the former, while the last two depict the latter. Our intervals perform reasonably well, attaining nominal coverage with the exception of the reproduction interval's coverage and the confidence interval's conditional coverage. In particular, the prediction interval performs very well after the adaptive coverage adjustment. Importantly, unlike the conformal benchmark that has a constant width at each point, our prediction intervals have different interval widths at each point, allowing users to identify `hard'' examples. See Appendix \ref{app:rain-clouds} for other hyperparameter choices.
\vspace{-4ex}
\paragraph{CI tuning.} 
Reducing subsample rate \(\xi\) inflates variance proportionally, while deeper trees yield narrower intervals as finer splits yield sparser \(\k_n(\bx)\) and smaller \(\|\r_n\|\).  
Variance grows with shrinkage \(1/\lambda\) and falls as dropout \(p\) increases. Choosing moderate \(\lambda\) and increasing \(p\) yields stabler CIs. Setting $p \to 0$ empirically leads to wider intervals and less power in the hypothesis test for variable importance. This can be seen in Figure \ref{fig:dropout-tests}.


\vspace{-2ex}
\begin{figure}[H]
    \centering
    \includegraphics[width=0.49\linewidth]{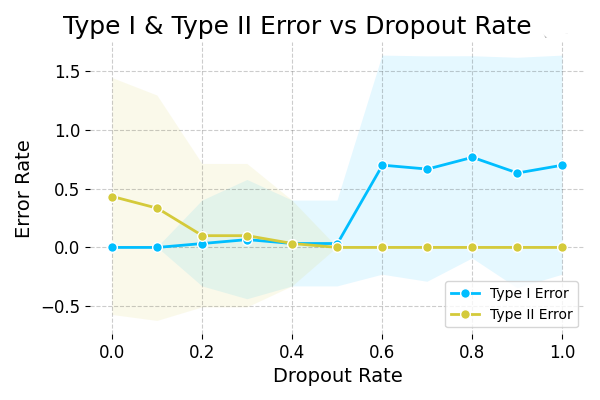}
    \includegraphics[width=0.49\linewidth]{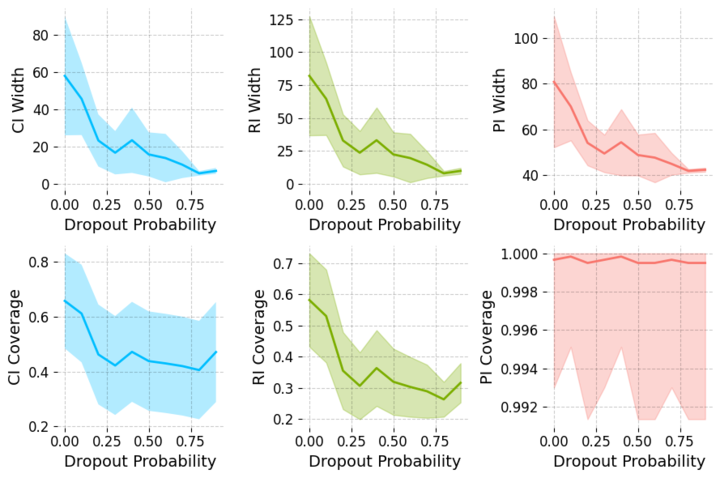}
    \vspace{-1ex}
    \caption{Comparison of size and power of statistical inference procedures against dropout probability. Conformal adjustment is not performed (for clearer benchmarking, hence the under/overcovarage), and we plot the coverage and width across all datapoints and repetitions.}
    \label{fig:dropout-tests}
\end{figure}
\vspace{-2ex}

\vspace{-4ex}
\section{Conclusions and Further Work}
\vspace{-2ex}
\label{sec:discussions}
We incorporate dropout and parallel boosting into the regularization procedure of \cite{zhou2022boulevard}, allowing for increased signal recovery and improved asymptotic variance. With the CLTs our algorithms enjoy, we construct confidence and prediction intervals, and hypothesis tests for variable importance. Numerical experiments empirically validate our algorithms and statistical procedures.

Our theoretical guarantees depend on structure–value isolation, non‑adaptivity, and tree regularity assumptions. Relaxing these -- e.g., via honesty‐based splitting or quantifying mild adaptivity -- would broaden applicability. Extending inference beyond regression (to classification, survival, or structured outputs) also poses new challenges: non‑quadratic losses, different tree behavior, and the need for new CLTs. Doing so under weaker assumptions is a welcome direction for future work.

\bibliographystyle{apalike}
\bibliography{ref}

\appendix

\section{Matrix Sketching}
\label{app:random-projections}

We first sketch the kernel inversion where we approximate $\widehat{\bK}_n^D \approx \widetilde{\bK}_n^D = \widehat{\bK}_n^D\bS(\bS^\top \widehat{\bK}_n^D\bS)^{\dagger} \bS^\top \widehat{\bK}_n^D$:
\begin{align*}
\widehat{\r}_n^D(\bx)^\top \by_n &= \widehat{\k}_n^D(\bx)^\top (\lambda^{-1}\bI + q\widehat{\bK}_n^D)^{-1} \by_n  \\
&\approx \widehat{\k}_n^D(\bx)^\top (\lambda^{-1}\bI + q \widetilde{\bK}_n^D)^{-1} \by_n  \\
&= \widehat{\k}_n^D(\bx)^\top (\lambda^{-1}\bI + q \widehat{\bK}_n^D\bS(\bS^\top \widehat{\bK}_n^D\bS)^{\dagger} \bS^\top \widehat{\bK}_n^D)^{-1} \by_n  \\
&= \widehat{\k}_n^D(\bx)^\top \left(\lambda \bI - \lambda^2 \widehat{\bK}_n^D\bS (q^{-1}\bS^\top \widehat{\bK}_n^D\bS + \lambda (\widehat{\bK}_n^D\bS)^\top \widehat{\bK}_n^D\bS)^{-1}\bS^\top \widehat{\bK}_n^D\right) \by_n \\
&= \widehat{\k}_n^D(\bx)^\top \widehat{\bLambda}^D_n \by_n = \widehat{\k}_n^D(\bx)^\top \widehat{\balpha}_n,
\end{align*}
where we define the $n$-dimensional coefficient vector
$$\widehat{\balpha}_n = \left(\lambda \bI - \lambda^2 \widehat{\bK}_n^D\bS (q^{-1}\bS^\top \widehat{\bK}_n^D\bS + \lambda (\widehat{\bK}_n^D\bS)^\top \widehat{\bK}_n^D\bS)^{-1}\bS^\top \widehat{\bK}_n^D\right) \by_n,$$
and write 
$$\widehat{\bLambda}_n^D = \left(\lambda \bI - \lambda^2 \widehat{\bK}_n^D\bS (q^{-1}\bS^\top \widehat{\bK}_n^D\bS + \lambda (\widehat{\bK}_n^D\bS)^\top \widehat{\bK}_n^D\bS)^{-1}\bS^\top \widehat{\bK}_n^D\right) \in \R^{n \times n}$$
for our sketched estimate of the inverse kernel matrix.
Computing this takes $O(ns^2)$ time when $\bS$ is a subsampling matrix \citep{musco2017recursivenystrom}.

Now we further sketch the coefficients $\widetilde{\balpha}_n = (\bS^\top \widehat{\bK}_n^D\bS)^{\dagger} \bS^\top \widehat{\bK}_n^D \widehat{\balpha}_n \in \R^s$, in line with Appendix C of \cite{musco2017recursivenystrom}:
\begin{align*}
    \widehat{\k}_n^D(\bx)^\top \widehat{\balpha}_n
    &\approx \left\langle \bS^\top \widehat{\k}_n^D(\bx) , (\bS^\top \widehat{\bK}_n^D\bS)^{\dagger} \bS^\top \widehat{\bK}_n^D \widehat{\balpha}_n \right\rangle = \left\langle \widetilde{\k}_n^D(\bx) ,  \widetilde{\balpha}_n \right\rangle .
\end{align*}
When $\widetilde{\balpha}_n = (\bS^\top \widehat{\bK}_n^D\bS)^{\dagger} \bS^\top \widehat{\bK}_n^D \widehat{\balpha}_n$ is precomputed, we can make a new KRR point prediction in only $O(s)$ time. This is because we only need to compute the $s$ coordinates of the vector $\widetilde{\k}_n^D(\bx)  = \bS^\top \widehat{\k}_n^D(\bx) \in \R^s$, taking $s$ kernel evaluations, and multiplication with $\widetilde{\balpha}_n = (\bS^\top \widehat{\bK}_n^D\bS)^{\dagger} \bS^\top \widehat{\bK}_n^D \widehat{\balpha}_n \in \R^s$ can be done in $O(s)$ time.

\paragraph{$O(s^2)$ time inference.} So in order to perform inference in sublinear time, we first precompute in $O(ns^2)$ time:
$$\widetilde{\bLambda}_n^D = \widehat{\bLambda}_n^D \widehat{\bK}_n^D \bS (\bS^\top \widehat{\bK}_n^D\bS)^{\dagger} \in \R^{n \times s}, \qquad \widehat{\bSigma}_n^D = \left(\widetilde{\bLambda}_n^D\right)^\top \widetilde{\bLambda}_n^D \in \R^{s \times s}.$$
To obtain a sketched estimate of $\lVert{\r}_n^D(\bx)\rVert_2$ on a new test point $\bx$, we compute the $s$ coordinates of the vector $\widetilde{\k}_n^D(\bx) = \bS^\top \widehat{\k}_n^D(\bx) \in \R^s$, and obtain an estimate 
$$\lVert\widetilde{\r}_n^D(\bx)\rVert_2 = \sqrt{\widetilde{\k}_n^D(\bx)^\top \widehat{\bSigma}_n^D\widetilde{\k}_n^D(\bx)}$$ in $O(s^2)$ time. The equivalent expression $\lVert\widetilde{\r}_n^D(\bx)\rVert_2 = \lVert \widetilde{\bLambda}^D_n \widetilde{\k}_n^D(\bx) \rVert_2$ takes $O(n)$ time.

\paragraph{Sketched hypothesis testing.} Given a hold-out dataset $(\bX_m, \by_m)$, we test the null hypothesis:
$$H_0 : \f(\bx_j) = \g({\bx}_j) \text{ for all } j=1,...,m \text{ against } H_1 : \f(\bx_j) \neq \g_n({\bx}_j) \text{ for some } j$$
by comparing $\widehat{\f}_{n,1}(\bx_j), \widehat{\f}_{n,2}({\bx}_j)$. However, given $m$ test points on a test dataset $(\bX_m, \by_m)$, we can subsample $r$ points with a new subsampling matrix $\r$ to compute the doubly subsampled kernel matrix $\widetilde{\bkappa}_{n,1}^D(\bX_m) = \bS^\top \widehat{\k}_n^D(\bX_m)\r \in \R^{s\times r}$, which we only need to perform $s \times r$ kernel operations to compute. So the difference statistic is then given by
$$\widetilde{\d}_m = \widetilde{\bkappa}_{n,1}(\bX_m)^\top \widetilde{\balpha}_{n,1} - \widetilde{\bkappa}_{n,2}({\bX}_m)^\top \widetilde{\balpha}_{n,2} = \left(\widetilde{\bLambda}_{n,1}^D \widetilde{\bkappa}_{n,1}(\bX_m) - \widetilde{\bLambda}_{n,2}^D \widetilde{\bkappa}_{n,2}(\bX_m) \right)^\top\by \in \R^{r}.$$

Modulo subsampling and Nystr\"{o}m approximation error, this is multivariate normal under the null: $\widetilde{\d}_m^D  \sim \gN_{r}\left(\mathbf{0}, \sigma^2 \widetilde{\bXi}_n^D \right)$, with covariance matrix 
$$\widetilde{\bXi}_n^D = \left(\widetilde{\bLambda}_{n,1}^D \widetilde{\bkappa}_{n,1}(\bX_m)^\top - \widetilde{\bLambda}_{n,2}^D \widetilde{\bkappa}_{n,2}(\bX_m)^\top\right)^\top\left(\widetilde{\bLambda}_{n,1}^D \widetilde{\bkappa}_{n,1}(\bX_m)^\top - \widetilde{\bLambda}_{n,2}^D \widetilde{\bkappa}_{n,2}(\bX_m)^\top\right) \in \R^{r \times r}.$$

We then have the test statistic:
$$\widehat{\sigma}^{-2} \widetilde{\d}_m^{D \, \top} (\widetilde{\bXi}_n^{D})^{-1} \widetilde{\d}_m^{D} \sim \chi^2_{r}.$$

The runtime of the test is as follows. It takes $O(ns^2)$ precomputation time to form $\widetilde{\bLambda}_{n,1}^D, \widetilde{\bLambda}_{n,2}^D \in \R^{n\times s}$, $O(nsr)$ time for the multiplication with $\widetilde{\bkappa}_{n,1}(\bX_m), \widetilde{\bkappa}_{n,2}(\bX_m)$, and $O(n)$ time for multiplication with $\by$. Once one has the noise estimate, inverting the covariance matrix and computing the test points takes $O(r^3)$ time. 

\section{Finite Sample Convergence}
\label{app:finite-sample-convergence}

\subsection{Stochastic Contraction Mapping Theorem}
To show convergence for both algorithms, we will show both of them are a stochastically contracted process. The criteria of deciding such process can be found in \cite{almudevar2022stochasticcontractionmappingtheorem}, and we introduce the theorem below.

\begin{thm}
    \label{thm:stochastic-contraction-mapping}
    Given $\mathbb{R}^d$-valued stochastic process $\{Z_t\}_{t \in \mathbb{N}}$, a sequence of $0 < \lambda_t \leq 1$, define
    \begin{gather*}
    \mathcal{F}_0 = \emptyset, \mathcal{F}_t = \sigma(\bz_1,\dots, \bz_t), \\
    \mathbf{\epsilon}_t =  \bz_t - \E[\bz_t | \mathcal{F}_{t-1}].
    \end{gather*}
    We call $\bz_t$ a stochastic contraction if the following properties hold
    \begin{enumerate}
    \item Vanishing coefficients $$\sum_{t=1}^{\infty} (1-\lambda_t) = \infty, \mbox{ i.e. } \prod_{t=1}^{\infty}\lambda_t = 0.$$
    \item Mean contraction $$||\E[\bz_t|\mathcal{F}_{t-1}]|| \leq \lambda_t \norm{\bz_{t-1}}, a.s..$$
    \item Bounded deviation $$\sup \norm{\epsilon_t} \to 0, \quad \sum_{t=1}^{\infty}\E[\norm{\epsilon_t}^2] < \infty.$$
    \end{enumerate}
    In particular, a multidimensional stochastic contraction exhibits the following behavior
    \begin{enumerate}
    \item Contraction $$\bz_t \xlongrightarrow{a.s.} 0.$$
    \item Kolmogorov inequality
    \begin{align}
    \label{fml:kolmax}
    P\left( \sup_{t \geq T}\norm{\bz_t} \leq \norm{\bz_T} + \delta  \right) \geq 1- \frac{4\sqrt{d}\sum_{t=T+1}^{\infty}\E[\epsilon_t^2]}{\min\{\delta^2, \beta^2\}}
    \end{align}
    \blued{holds for all $T, \delta > 0$ s.t. }$\beta = \norm{\bz_T} + \delta - \sqrt{d} \sup_{t > T} \norm{\epsilon_t} > 0$.
    \end{enumerate}
\end{thm}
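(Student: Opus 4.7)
The plan is to reduce both conclusions to classical supermartingale and martingale-maximal arguments applied to the squared-norm process $Y_t := \norm{\bz_t}^2$, by exploiting the orthogonal decomposition $\bz_t = \E[\bz_t\mid\mathcal{F}_{t-1}] + \epsilon_t$ in which $\epsilon_t$ is a mean-zero martingale difference. Since the cross term vanishes in conditional expectation, expanding $\norm{\bz_t}^2$ and applying mean contraction gives
\begin{equation*}
\E[Y_t\mid\mathcal{F}_{t-1}] \;=\; \norm{\E[\bz_t\mid\mathcal{F}_{t-1}]}^2 + \E[\norm{\epsilon_t}^2\mid\mathcal{F}_{t-1}] \;\leq\; \lambda_t^2\, Y_{t-1} + \E[\norm{\epsilon_t}^2\mid\mathcal{F}_{t-1}].
\end{equation*}

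This display is in Robbins--Siegmund form with summable additive term (by Condition 3 and Fubini) and multiplier $\lambda_t^2 \leq 1$, so Robbins--Siegmund yields $Y_t \to Y_\infty$ a.s.\ together with $\sum_t(1-\lambda_t^2) Y_{t-1} < \infty$ a.s. Because $1-\lambda_t^2 \geq 1-\lambda_t$ and $\sum(1-\lambda_t) = \infty$ by Condition 1, any strictly positive $Y_\infty$ would produce a divergent tail sum on an event of positive probability; hence $Y_\infty = 0$ a.s., proving $\bz_t \to 0$ almost surely.

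For the Kolmogorov-type bound I would pass to the vector-valued martingale $M_t := \sum_{s=T+1}^t \epsilon_s$, whose terminal second moment equals $\sum_{t>T}\E\norm{\epsilon_t}^2$. Doob's $L^2$ maximal inequality applied coordinate-wise (and assembled via $\norm{\cdot}_2 \leq \sqrt{d}\,\norm{\cdot}_\infty$) controls $\mathbb P(\sup_s \norm{M_s} \geq \delta)$ by a quantity of order $\sqrt{d}\sum_{t>T}\E\norm{\epsilon_t}^2 / \delta^2$. Iterating the pathwise inequality $\norm{\bz_t} \leq \lambda_t\norm{\bz_{t-1}} + \norm{\epsilon_t}$ from time $T$ and then a stopping-time comparison converts this into a tail bound on $\sup_{t\geq T}\norm{\bz_t}$: before the running maximum crosses $\norm{\bz_T}+\delta$, small fluctuations are absorbed directly by Doob at threshold $\delta$ (the $\delta^2$ regime); any overshoot must arrive in a single step of size at most $\sqrt{d}\sup_{t>T}\norm{\epsilon_t}$, since the contraction pulls the process back otherwise, and Doob at threshold $\beta = \norm{\bz_T}+\delta-\sqrt{d}\sup_{t>T}\norm{\epsilon_t}$ handles this (the $\beta^2$ regime).

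The first two steps are essentially textbook once the orthogonal decomposition is in place, so the main obstacle is the third: matching the explicit constant $4\sqrt{d}/\min\{\delta^2,\beta^2\}$ requires stitching two Doob estimates through a stopping-time argument that exploits the contractive dynamics to translate single-step deviations of $\epsilon_t$ into bounded overshoots of the running maximum. The conceptual ingredients are classical; the bookkeeping of constants is where care is needed.
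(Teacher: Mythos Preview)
The paper does not give its own proof of this statement: it is quoted as a tool from \cite{almudevar2022stochasticcontractionmappingtheorem} and then applied to verify the hypotheses in the proof of Theorem~\ref{thm:krrconv}. So there is no paper proof to compare your argument against.

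That said, your route for part~1 is correct and is the standard way to establish such a result. The orthogonal decomposition kills the cross term, the additive noise term is summable in conditional expectation by Condition~3 and the tower property, and Robbins--Siegmund delivers both a.s.\ convergence of $Y_t$ and $\sum(1-\lambda_t^2)Y_{t-1}<\infty$; the divergence of $\sum(1-\lambda_t)$ then forces $Y_\infty=0$.

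For part~2, your sketch has the right ingredients but one link is looser than you present it. Iterating the pathwise inequality $\|\bz_t\|\leq\lambda_t\|\bz_{t-1}\|+\|\epsilon_t\|$ bounds $\|\bz_t\|-\|\bz_T\|$ by a weighted sum of the \emph{norms} $\|\epsilon_s\|$, which does not enjoy martingale cancellation, whereas Doob controls $\|\sum_s\epsilon_s\|$, which does. Bridging the two is exactly the stopping-time/overshoot argument you allude to, and it is where the real work lies. Note also that your coordinate-wise Doob plus the bound $\|\cdot\|_2\leq\sqrt{d}\|\cdot\|_\infty$ followed by a union bound naturally produces a factor $d$ in the numerator rather than $\sqrt{d}$, so matching the stated constant would require either a sharper vector-valued maximal inequality or a different interpretation of the (ambiguous) notation $\E[\epsilon_t^2]$ in the statement. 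Since the paper treats the whole theorem as a black-box citation, pinning down these constants is beyond what the paper itself does.
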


In the following proof we benefit from the a.s. convergence of the difference vector to 0. And the Komolgorov inequality gives us a PAC argument in \ref{sec:theory}.

\subsection{Subsampling}

We will also introduce a lemma to regularize the expected tree kernel in a finite sample case with respect to subsampling rate.

\begin{lem}
	\label{lem:kernelrate}
	Considering a subsampled regression tree. Assume that each leaf contains no fewer than $n^{\frac{1}{d+2}}$ sample points before subsampling. If we are subsampling at rate at least $\xi=n^{-\frac{1}{d+2}\log n}$, then the expected structure vector's norm follows the rate below:
	$$
	\norm{\k_n}=1-O\left(\frac{1}{n}\right)
	$$
\end{lem}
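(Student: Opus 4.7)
The plan is to interpret $\norm{\k_n}$ as the $\ell^1$ norm (the only norm for which the rate $1-O(1/n)$ is meaningful, since every realization of $\s_n(\bx;\gG)$ is a nonnegative vector whose entries sum to at most $1$) and then reduce the statement to an elementary event probability. For every realization of the tree partition and the subsample $\gG$, the coordinates $s_{n,j}(\bx;\gG)$ are nonnegative and satisfy $\sum_{j=1}^n s_{n,j}(\bx;\gG)=\mathbbm{1}\{A(\bx)\cap\gG\neq\emptyset\}$ under the $0/0=0$ convention. Fubini then yields
\[
\norm{\k_n(\bx)}_1 \;=\; \E\!\left[\sum_{j=1}^n s_{n,j}(\bx;\gG)\right] \;=\; \prob\!\left(A(\bx)\cap\gG\neq\emptyset\right),
\]
and since $\norm{\k_n}_1\leq 1$ is automatic, the lemma reduces to the complementary bound $\prob(A(\bx)\cap\gG=\emptyset)=O(1/n)$.

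To establish this bound I would condition on the tree structure. By hypothesis, the leaf $A(\bx)$ contains at least $N:=n^{1/(d+2)}$ of the $n$ training indices before subsampling, and under Assumptions \ref{aspt:svi}--\ref{aspt:non-adaptivity} the subsample $\gG$ is an i.i.d.\ Bernoulli$(\xi)$ draw independent of the partition defining $A(\bx)$. A Chernoff-type estimate then gives
\[
\prob\!\left(A(\bx)\cap\gG=\emptyset \,\big|\, \text{tree}\right) \;\leq\; (1-\xi)^{N} \;\leq\; \exp(-\xi N).
\]
Reading the stated rate as $\xi \geq n^{-1/(d+2)}\log n$ (the interpretation of the exponent that makes the conclusion hold), we obtain $\xi N\geq \log n$ and hence $\exp(-\xi N)\leq 1/n$. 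The bound is uniform in the tree, so integrating over the partition preserves it, giving $\prob(A(\bx)\cap\gG=\emptyset)\leq 1/n$ and thus $\norm{\k_n}=1-O(1/n)$ as claimed.

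I do not expect a substantive obstacle: the whole argument is essentially a one-line Chernoff calculation once the norm has been rewritten as an event probability. The only delicate point is the independence between $\gG$ and the partition defining $A(\bx)$, which is exactly what structure-value isolation supplies; if instead the tree were grown directly on $\gG$, the minimum-leaf-size constraint enforced during training would guarantee $A(\bx)\cap\gG\neq\emptyset$ almost surely and the claim would be immediate. Either way the conclusion follows at once, so the only modeling choice worth flagging is the interpretation of the stated subsampling rate as $\xi\gtrsim n^{-1/(d+2)}\log n$ rather than as $n$ raised to the power $-\log n/(d+2)$.
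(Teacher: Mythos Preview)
Your argument is correct and structurally identical to the paper's: both reduce $\norm{\k_n}_1$ to $1-\prob(\text{leaf $A(\bx)$ is empty after subsampling})$ and then bound that probability by $O(1/n)$. The only difference is in how the empty-leaf probability is bounded. You model $\gG$ as i.i.d.\ Bernoulli$(\xi)$ (consistent with the algorithm descriptions) and finish in one line via $(1-\xi)^N\le e^{-\xi N}\le 1/n$; the paper instead treats the subsample as a fixed-size draw of $\xi n$ indices without replacement and bounds the hypergeometric probability $\binom{n-N}{\xi n}/\binom{n}{\xi n}$ through a longer chain of algebraic inequalities to reach the same $O(1/n)$. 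Your version is shorter and more directly tied to the stated sampling scheme, while the paper's combinatorial route covers the without-replacement variant; either model suffices for the lemma. Your interpretation of the rate as $\xi\gtrsim n^{-1/(d+2)}\log n$ is exactly what the paper's own proof uses.
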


\begin{proof}
	By \cite{zhou2022boulevard}, we know that $\norm{\mathbb{E}_{q,w}[\bS_n]}\leq 1$. By this we know that at least $\norm{\k_n}_1\leq1$. The task remained is to give the distance between $\norm{\k_n}_1$ and 1.
	
	Define the subsampling index set as $w$. To see clearly why a rate containing $n$ would exists, recall the definition of a structure vector with subsampling,for any $x \in A_j$,
	\[ 
	\s_{n,k}(\bx)  = \s_{n,k}(x; w) =\frac{\mathbbm{1}(\bx_k \in A_j)\mathbbm{1}(k \in w)}{\sum_{i=1}^n \mathbbm{1}(\bx_i \in A_j)\mathbbm{1}(i \in w)} = \frac{\mathbbm{1}(\bx_k \in A_j)\mathbbm{1}(k \in w)}{\sum_{\bx_i \in A_j} \mathbbm{1}(i \in w)}.
	\]
	The structure vector is given by
	\[
	\s_n(x)=[\s_{n,1}(x),\s_{n,2}(x),\cdots,\s_{n,n}(x)]^{T}
	\]
	Taking expectation we have the kernel vector 
	\[
	\k_n^T= \frac{\mathbb{P}(\bx_k \in A_j)\mathbb{P}(k \in w)}{\sum_{\bx_i \in A_j} \mathbb{P}(i \in w)}
	\] 
	Notice the expectation is taken with respect to both structure distribution and the subsampling distribution. And this vector is not defined if one leaf is empty, since the denominator would be 0. Let the indicator $I_A$ denote whether the leaf of interest is empty. It values on 0 if the leaf is empty and it values on 1 if the value is not empty. The $L_1$ norm is given by:
	\begin{align*}
		\norm{\k_n}_1&=\sum_{k=1}^{n} \mathbb{E}_{q,w}[\s_{n,k}(x)]\\
		&=\sum_{k=1}^{n} (\mathbb{E}_{q,w}[\s_{n,k}(x)|\mathbbm{1}_A=0]\mathbb{P}(\mathbbm{1}_A=0)+\mathbb{E}_{q,w}[\s_{n,k}(x)|\mathbbm{1}_A=1]\mathbb{P}(\mathbbm{1}_A=1))\\
	\end{align*}
	If the leaf is empty, then we will have to define $\mathbb{E}_{q,w}[\s_{n,k}(x)|\mathbbm{1}_A=0]=0$, otherwise the denominator would be 0 and the whole structure vector is undefined.  If the leaf is not empty, we would then want to define the conditional expectation such that it sums up to 1:
	\[
	\sum_{k=1}^{n}\mathbb{E}[s_{n,k}(x)|I_A=1]=1
	\]
	Hence the $L_1$ norm above simplifies to:
	\[
	\norm{\k_n}_1=\sum_{k=1}^{n}\mathbb{E}[\s_{n,k}(x)|\mathbbm{1}_A=1]\mathbb{P}(\mathbbm{1}_A=1)=1-\mathbb{P}(\mathbbm{1}_A=0)
	\]
	Hence the problem is reduced to finding the probability of missing all subsampled points in a leaf of interest.
	
	By assertion, the subsample size should be $\xi n=n^{\frac{d+1}{d+2}}\log n$. Consider the probability that one leaf misses all sample points $p(n,\xi)$. That is, all the subsampled points  sampled are points that are in this particular leaf. Thus we are only choosing with choice $\binom{n-n^{\frac{1}{d+2}}}{\theta n}$.
	\begin{align*}
		\mathbb{P}(n,\xi)&=\frac{\binom{n-n^{\frac{1}{d+2}}}{\xi n}}{\binom{n}{\xi n}}\\
		&=\frac{(n-\xi n)(n- \xi n -1)(n - \xi n -2)\cdots(n-\xi n -n^{\frac{1}{d+2}}+1)}{n(n-1)(n-2)\cdots(n-n^{\frac{1}{d+2}}+1)}\\
		&\leq \left(\frac{n-\xi n}{n-n^{\frac{1}{d+2}}}\right)^{n^{\frac{1}{d+2}}}\\
		&=\left(\frac{1-\xi}{1-n^{-\frac{d+1}{d+2}}}\right)^{n^{\frac{1}{d+2}}}\\
		&=\left(\frac{1-n^{-\frac{1}{d+2}}\log n}{1-n^{-\frac{d+1}{d+2}}}\right)^{n^{\frac{1}{d+2}}}\\
		&=\left(\frac{1}{1-n^{-\frac{d+1}{d+2}}}\right)^{n^{\frac{1}{d+2}}}\cdot \left(1-n^{-\frac{1}{d+2}}\log n\right)^{n^{\frac{1}{d+2}}}\\
	\end{align*}
	Notice that
	\begin{align*}
		\left(\frac{1}{1-n^{-\frac{d+1}{d+2}}}\right)^{n^{\frac{1}{d+2}}}&\leq \left(\frac{1+n^{-\frac{d+1}{d+2}}}{1-n^{-\frac{d+1}{d+2}}+n^{-\frac{d+1}{d+2}}}\right)^{n^{\frac{1}{d+2}}}\\
		&=\left(1+n^{-\frac{d+1}{d+2}}\right)^{n^{\frac{1}{d+2}}}\\
		&\leq \left(1+n^{-\frac{1}{d+2}}\right)^{n^{\frac{1}{d+2}}}\\
		&\leq e
	\end{align*}
	And meanwhile we recognize
	\begin{align*}
		\left(1-n^{-\frac{1}{d+2}}\log n\right)^{n^{\frac{1}{d+2}}}&=\left[\left(1-n^{-\frac{1}{d+2}}\log n\right)^{n^{\frac{1}{d+2}}\frac{1}{\log n}}\right]^{\log n}\\
		&=\left(\frac{1}{e}\right)^{\log n}\\
		&=\frac{1}{e^{\log  n}}\\
		&=\frac{1}{n}
	\end{align*}
	Hence we have
	\begin{align*}
		&\left(\frac{1}{1-n^{-\frac{d+1}{d+2}}}\right)^{n^{\frac{1}{d+2}}}\cdot \left(1-n^{-\frac{1}{d+2}}\log n\right)^{n^{\frac{1}{d+2}}}\\
		&\leq e \cdot \left(1-n^{-\frac{1}{d+2}}\log n\right)^{n^{\frac{1}{d+2}}}\\&=O\left(\frac{1}{n}\right)
	\end{align*}
	Hence the probability of a single leaf missing all subsampled points is of rate $O\left(\frac{1}{n}\right)$. And hence $\norm{\k_n}_1=1-\O{\frac{1}{n}}$.
\end{proof}

\subsection{Proof for Theorem \ref{thm:krrconv}}

Combing all the results above, the proof for Theorem \ref{thm:krrconv} is as follow.

\begin{proof}
    To show Algorithm \ref{alg:random-dropout} is a stochastic contraction mapping, define $\mathbf{z}_t:=\widehat{\by}_{t+1}-\widehat{\by}^*$. And define Define $\widetilde{\by}_b:=\frac{1}{b}\sum_{k=1}^{b}X_k\widehat{t}_k, X_k \quad \overset{i.i.d.}{\sim}\text{Ber}(q)$. To check mean contraction, first notice as $b \to \infty$, $\text{tr(Var($\widetilde{\by}_b$))}\leq\frac{q(1-q)nM}{b}\to 0$. Hence with $b$ large enough, we can get rid off the truncation.
    \begin{align*}
    \norm{\E[\mathbf{z}_b | \mathcal{F}_{b-1}]} & = \norm{\E\left[ \frac{b-1}{b}  \widehat{\by}_{b-1} + \frac{\lambda}{b} \bS_n(\by - \Gamma_M(\widetilde{\by}_{b-1})) - \widehat{\by}^*\Big| \mathcal{F}_{b-1} \right]}\\
		& = \norm{\frac{b-1}{b} (\widehat{\by}_{b-1}-\widehat{\by}^*) + \frac{\lambda}{b} \E[\bS_n](\by - \mathbb{E}[\Gamma_M(\widetilde{\by}_{b-1})|\mathcal{F}_{b-1}]) - \frac{1}{b} \widehat{\by}^*}\\
		& \leq \frac{b-1}{b} \norm{\widehat{\by}_{b-1}-\widehat{\by}^*} + \norm{\frac{\lambda}{b} \E[\bS_n](\by - \mathbb{E}[\Gamma_M(\widetilde{\by}_{b-1})|\mathcal{F}]) - \frac{\lambda}{b}\E[\bS_n](\by - q\widehat{\by}^*)} \\
		&= \frac{b-1}{b}\norm{\mathbf{z}_{b-1}}+\norm{\frac{\lambda}{b}\mathbb{E}[\bS_n](q\widehat{\by}^*-\mathbb{E}[\Gamma_M(\widetilde{\by}_{b-1})|\mathcal{F}_{b-1}])}\\
            &= \frac{b-1}{b}\norm{\mathbf{z}_{b-1}}+\norm{\frac{\lambda}{b}\mathbb{E}[\bS_n](q\widehat{\by}^*-\mathbb{E}[\widetilde{\by}_{b-1}|\mathcal{F}_{b-1}])}\\
            & \leq \frac{b-1+\lambda q}{b} \norm{\widehat{\by}_{b-1}-\widehat{\by}^*} \leq \norm{\widehat{\by}_{b-1}-\widehat{\by}^*}
    \end{align*}
     Next we check for bounded deviations.
     \begin{align*}
	\norm{\epsilon_b} & = \norm{\mathbf{z}_b - \E[\mathbf{z}_b | \mathcal{F}_{b-1}]}\\
	&=\norm{(\widehat{\by}_b-\widehat{\by}^*)-(\mathbb{E}[\widehat{\by}_b-\widehat{\by}^*|\mathcal{F}_{b-1}])}\\
	&= \norm{\widehat{Y_b}-\mathbb{E}[\widehat{Y_b}|\mathcal{F}_{b-1}]}\\
	& = \norm{\frac{\lambda}{b} (\E[\bS_n]-\bS_n)(\by-\Gamma_M(\widetilde{\by}_{b-1})+\mathbb{E}[\Gamma_M(\widetilde{\by}_{b-1})|\mathcal{F}_{b-1}])}\\
	&\leq \frac{\lambda}{b}\norm{\mathbb{E}[\bS_n]-\bS_n}\norm{Y-\Gamma_M(\widetilde{\by}_{b-1})+\mathbb{E}[\Gamma_M(\widetilde{\by}_{b-1})|\mathcal{F}_{b-1}]}
    \end{align*}
    Now, by \cite{zhou2022boulevard}, we know $\E[\bS_n]$ has both row sum and column sum no greater than 1, hence we see that$\norm{\mathbb{E}[\bS_n]}\leq \sqrt{\norm{\mathbb{E}[\bS_n]}_1\cdot \norm{\E[\bS_n]}_{\infty}}\leq \sqrt{1\times n} = \sqrt{n}$. Then, it can be seen that
    \begin{align*}
	\norm{\epsilon_b} & \leq \frac{\lambda}{b}\cdot (1+\sqrt{n})\cdot 2M
    \end{align*}
    Then we can show that $\sum_{b=1}^\infty\norm{\epsilon_b}^2=\sum_{b=1}^\infty O(\frac{1}{b^2})<\infty.$
    
    Similarly we can show both conditions for Algorithm \ref{algorithm1}. Replacing the stepwise difference according to the new update rule gives:
    \begin{align*} 
    \norm{\E[\mathbf{z}_{b+1}\mid\mathcal{F}_b]}
    &= \norm{\E[\widehat{\by}_{b+1,K}-\widehat{\by}_K^*\mid\mathcal{F}_b]}\\
    &= \norm{\frac{b}{b+1}\,\widehat{\by}_{b,K}
       + \frac{1}{b+1}\,\E[\bS_n]\sum_{k=1}^{K}\bigl(\by-\widetilde{\by}_{b,k-1}\bigr)
       - \widehat{\by}_K^*}\\
    &= \norm{\frac{b}{b+1}\,\widehat{\by}_{b,K}
       + \frac{1}{b+1}\,\E[\bS_n]\sum_{k=1}^{K}
         \Bigl(\by - \bigl(\widehat{\by}_{b,K} - \Gamma_M\bigl(\tfrac{1}{b}\sum_{g=1}^b\widehat{\mathbf{t}}_{g,k}\bigr)\bigr)\Bigr)
       - \widehat{\by}_K^*}\\
    &= \norm{\frac{b}{b+1}\,\widehat{\by}_{b,K}
       + \frac{K}{b+1}\,\E[\bS_n]\,\by
       - \frac{K}{b+1}\,\E[\bS_n]\,\widehat{\by}_{b,K}
       + \frac{1}{b+1}\,\E[\bS_n]\sum_{k=1}^K
         \Gamma_M\bigl(\tfrac{1}{b}\sum_{g=1}^b\widehat{\mathbf{t}}_{g,k}\bigr)
       - \widehat{\by}_K^*}\\
    &= \norm{\frac{1}{b+1}\bigl(bI - K\,\E[\bS_n]\bigr)
              \bigl(\widehat{\by}_{b,K}-\widehat{\by}_K^*\bigr)
        + \frac{1}{b+1}\,\E[\bS_n]
          \Bigl(\sum_{k=1}^K\Gamma_M\bigl(\tfrac{1}{b}\sum_{g=1}^b\widehat{\mathbf{t}}_{g,k}\bigr)
                - \widehat{\by}_K^*\Bigr)}\\
    &\le \frac{1}{b+1}\,\norm{b\bI - K\,\E[\bS_n]}\,\norm{\widehat{\by}_{b,K}-\widehat{\by}_K^*}
         + \frac{1}{b+1}\,\norm{\E[\bS_n]}\,
           \norm{\sum_{k=1}^K\Gamma_M\bigl(\tfrac{1}{b}\sum_{g=1}^b\widehat{\mathbf{t}}_{g,k}\bigr)
                 - \widehat{\by}_K^*}\\
    &\le  \frac{1}{b+1}\,\norm{b\bI - K\,\E[\bS_n]}\,\norm{\widehat{\by}_{b,K}-\widehat{\by}_K^*}
         + \frac{1}{b+1}\,\norm{\E[\bS_n]}\,
           \norm{\Gamma_{KM}\bigl(\sum_{k=1}^K\tfrac{1}{b}\sum_{g=1}^b\widehat{\mathbf{t}}_{g,k}\bigr)
                 - \widehat{\by}_K^*}\\
    &\le  \frac{1}{b+1}\,(\norm{b\bI - K\,\E[\bS_n]}+ \norm{\E[\bS_n]})\,\norm{\widehat{\by}_{b,K}-\widehat{\by}_K^*}\\
    &\le  \frac{1}{b+1}\,(b+1-O(\frac{1}{n}))\,\norm{\widehat{\by}_{b,K}-\widehat{\by}_K^*}\\
    &<  \norm{\bz_{b}}
\end{align*}
Hence we have checked the mean contraction condition. Checking the bounded deviation condition works in a similar way for Algorithm \ref{alg:structured-dropout} since it shares the same expected tree kernel with Algorithm \ref{alg:random-dropout} and the tree signal is also bounded by $M$. Hence the stochastic contraction mapping follows.
\end{proof}

\section{Conditional CLT Around KRR Predictions}
\label{app:conditional-clt}

Doing so requires control on the weighting vectors $\r_n(x)$. And we will begin such analysis on giving the rate of the expected structure vector $\k_n(x)$ first. This is because recall that in both algorithms our final weight vector $\r_n(x)$ is a linear map of the voting vector $\k_n(x)$.To begin with, we will first notice that Lemma \ref{lem:kernelrate} implies Lemma \ref{lem:locality}.

\subsection{Bounding the weights}
\begin{lem}
	Suppose the assumptions for Lemma \ref{lem:kernelrate} are satisfied. We have $\left| \sum_{i=1}^n \r^D_{n,i} - \frac{\lambda}{1+\lambda q}\right| = \O{\frac{1}{n}}, \quad
        \left| \sum_{i=1}^n \r^P_{n,i} - 1\right| = \O{\frac{1}{n}}.$
	\label{lem:locality}
\end{lem}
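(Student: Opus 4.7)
The strategy is to reduce both statements to a single book-keeping identity obtained by summing the defining linear system. Writing $A := \lambda^{-1}\bI + q\bK_n$ and $B := \bI + (K-1)\bK_n$, the two weight vectors satisfy $A\r_n^D = \k_n$ and $B\r_n^P = K\k_n$. Introduce the defect vector $\mathbf{g} := \mathbf{1} - \bK_n\mathbf{1} \in \mathbb{R}^n$: coordinate $g_i$ is precisely the probability that the leaf containing the training point $\bx_i$ is empty after subsampling, and the very same combinatorial count used in the proof of Lemma \ref{lem:kernelrate}, now run at each training point $\bx_i$ instead of at the test point, gives $\|\mathbf{g}\|_\infty = O(1/n)$ uniformly. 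Since the subsampling law is exchangeable and leaf-membership is symmetric in $i,j$, $\bS_n$ is symmetric in expectation, so $\bK_n$, $A$ and $B$ are all symmetric.

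Pre-multiplying $A\r_n^D = \k_n$ and $B\r_n^P = K\k_n$ by $\mathbf{1}^\top$ and using $\mathbf{1}^\top\bK_n = (\bK_n\mathbf{1})^\top = (\mathbf{1}-\mathbf{g})^\top$ yields, after a one-line rearrangement,
\[
S_D - \tfrac{\lambda}{1+\lambda q} = \tfrac{\lambda}{1+\lambda q}\bigl(\|\k_n\|_1-1\bigr) + \tfrac{\lambda q}{1+\lambda q}\,\mathbf{g}^\top\r_n^D, \qquad S_P - 1 = \bigl(\|\k_n\|_1 - 1\bigr) + \tfrac{K-1}{K}\,\mathbf{g}^\top\r_n^P,
\]
where $S_{D,P} := \sum_i \r_{n,i}^{D,P}$. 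The first term on each right-hand side is $O(1/n)$ immediately from Lemma \ref{lem:kernelrate} applied at the test point, so both bounds reduce to showing $|\mathbf{g}^\top\r_n^{D,P}| = O(1/n)$. By H\"older this is at most $\|\mathbf{g}\|_\infty\,\|\r_n^{D,P}\|_1 = O(1/n)\cdot\|\r_n^{D,P}\|_1$, so it suffices to establish $\|\r_n^{D,P}\|_1 = O(1)$.

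For the dropout case this step is routine. Because the paper enforces $\lambda\in[0,1)$ and $q\in[0,1]$ we have $\lambda q<1$; combined with $\|\bK_n\|_\infty\le 1$ (each row of $\bK_n$ sums to $1-g_i\in[0,1]$), the Neumann expansion $A^{-1} = \lambda\sum_{k\ge 0}(-\lambda q)^k\bK_n^k$ converges absolutely in the $\ell_\infty$ operator norm with $\|A^{-1}\|_{\infty\to\infty}\le \lambda/(1-\lambda q)=O(1)$. By symmetry of $A$, $\|A^{-1}\|_{1\to 1} = \|A^{-1}\|_{\infty\to\infty}$, so $\|\r_n^D\|_1\le \|A^{-1}\|_{1\to 1}\|\k_n\|_1 = O(1)$, closing the dropout half.

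The main obstacle will be the parallel case, where $\|(K-1)\bK_n\|_\infty$ can exceed $1$ as soon as $K\ge 2$ and no direct Neumann argument controls $\|B^{-1}\|_{\infty\to\infty}$. The plan is to exploit that $\mathbf{1}$ is an approximate top eigenvector of $\bK_n$ with eigenvalue close to $1$ (since $\bK_n\mathbf{1} = \mathbf{1}-\mathbf{g}$ with $\|\mathbf{g}\|_\infty = O(1/n)$): decomposing $B^{-1}\k_n = \tfrac{\|\k_n\|_1}{Kn}\mathbf{1} + \mathbf{u}$, the rank-one piece contributes $O(1)$ to $\|\r_n^P\|_1$ automatically, while $\mathbf{u}$ lies essentially in the orthogonal complement of $\mathbf{1}$ on which $B\succeq \bI$ forces $\|\mathbf{u}\|_2\le \|\k_n\|_2$. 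Converting this $\ell_2$ control into an $\ell_1$ bound will require the leaf-localisation of $\k_n$ coming from the same minimum-leaf-size hypothesis underlying Lemma \ref{lem:kernelrate}; treating $K$ as a fixed constant, this is expected to deliver $\|\r_n^P\|_1 = O_K(1)$ and therefore $|\mathbf{g}^\top\r_n^P| = O(1/n)$, completing the lemma.
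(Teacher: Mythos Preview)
Your reduction via the identity
\[
S_D - \tfrac{\lambda}{1+\lambda q} = \tfrac{\lambda}{1+\lambda q}\bigl(\|\k_n\|_1-1\bigr) + \tfrac{\lambda q}{1+\lambda q}\,\mathbf g^\top\r_n^D,
\qquad
S_P - 1 = \bigl(\|\k_n\|_1-1\bigr) + \tfrac{K-1}{K}\,\mathbf g^\top\r_n^P
\]
is clean and genuinely different from the paper's route. The paper expands $[\lambda^{-1}\bI+q\bK_n]^{-1}$ as an alternating Neumann series in $\lambda q\,\bK_n$, tracks the column sums of every power $\bK_n^m$ via the sandwich $(1-c/n)^m\le\sum_i(\bK_n^m)_{i,j}\le 1$, and sums the resulting geometric series. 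Your approach bypasses matrix powers entirely and isolates the single error term $\mathbf g^\top\r_n$, which is more transparent. For Algorithm~\ref{alg:random-dropout} your argument is complete: exchangeability of subsampling does make $\bK_n$ symmetric, so $\|A^{-1}\|_{1\to 1}=\|A^{-1}\|_{\infty\to\infty}\le\lambda/(1-\lambda q)$ and $\|\r_n^D\|_1=O(1)$ follows, yielding $|\mathbf g^\top\r_n^D|=O(1/n)$.

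The parallel case, however, is not closed. You correctly diagnose that the Neumann bound on $\|B^{-1}\|_{\infty\to\infty}$ is unavailable for $K\ge 2$, but the proposed workaround does not go through as written. The remainder $\mathbf u=B^{-1}\k_n-\tfrac{\|\k_n\|_1}{Kn}\mathbf 1$ is \emph{not} supported on the $O(n^{1/(d+1)})$ neighbours of $\bx$: the subtracted rank-one piece is dense, and $B^{-1}$ has no convergent power-series representation in $\bK_n$ that would propagate the locality of $\k_n$. With only $\|\mathbf u\|_2\lesssim\|\k_n\|_2=\Theta(n^{-1/(2(d+1))})$ in hand, the generic conversion gives $\|\mathbf u\|_1\le\sqrt n\,\|\mathbf u\|_2=\Theta(n^{d/(2(d+1))})\to\infty$, so the H\"older step $|\mathbf g^\top\r_n^P|\le\|\mathbf g\|_\infty\|\r_n^P\|_1$ fails to deliver $O(1/n)$. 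Going directly by Cauchy--Schwarz gives $|\mathbf g^\top\r_n^P|\le\|\mathbf g\|_2\|\r_n^P\|_2=O(n^{-(d+2)/(2(d+1))})$, which is weaker than $O(1/n)$ for every $d\ge 1$; this would still suffice for the downstream use in Theorem~\ref{thm:main} (where only $\Delta=o(\|\r_n\|)$ is needed), but it does not prove the lemma as stated. It is worth noting that the paper treats the parallel case by the formal substitution $\lambda\leftarrow K$, $q\leftarrow(K-1)/K$ into the dropout bound, which makes $\lambda q=K-1\ge 1$ and the underlying Neumann series divergent; the obstacle you identified is therefore real and is not rigorously bypassed there either. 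To actually secure $\|\r_n^P\|_1=O(1)$ you would need an independent $\ell_1$ argument, for instance combining the far-field decay of Lemma~\ref{lem:expdecayp} with a near-field bound on $\r_n^P\!\proj{D_n}$, or working with the convergent expansion $B^{-1}=K^{-1}\sum_{l\ge 0}\bigl(\tfrac{K-1}{K}\bigr)^l(I-\bK_n)^l$ together with a sharper estimate on $\|I-\bK_n\|_{\infty\to\infty}$ under the leaf-size hypotheses.
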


\begin{proof}
    We begin with the analysis on Algorithm \ref{alg:random-dropout}. Consider the expansion
	\begin{align*}
		\left[\frac{1}{\lambda}I + q\bK_n\right]^{-1} = \lambda \sum_{i=0}^{\infty}\left((\lambda q)^{2i}\bK_n^{2i} - (\lambda q)^{2i+1}\bK_n^{2i+1}\right).
	\end{align*}
	We examine the column sums of each of the matrix powers. Start with $K_n^2$,
	$$
	\sum_{i=1}(\bK^2_n)_{i,1} = \sum_{i=1}^n \sum_{j=1}^n (\bK_n)_{i,j}(\bK_n)_{j,1} = \sum_{j=1}^n (\bK_n)_{j,1}\sum_{i=1}^n(\bK_n)_{i,j}.
	$$
	Take subsampling in to consideration, since $K_n$ consists of structure vectors of sample points, hence for some $c>0$, we can bound the row sum:
	$$
	1-\frac{c}{n} \leq \sum_{i=1}^n (\bK_n)_{i,j} \leq 1, \quad i = 1, \dots, n.
	$$
	which also hold true for
	$$
		1-\frac{c}{n} \leq \sum_{j=1}^n (\bK_n)_{j,1} \leq 1, \quad i = 1, \dots, n.
	$$
	Given $K_n$ is nonnegative, multiply the inequalities above and notice it is actually the row sum of $K_n^2$, we have
	$$
	\left(1-\frac{c}{n}\right)^2 \leq \sum_{i=1}(\bK^2_n)_{i,1} = \sum_{j=1}^n (\bK_n)_{j,1}\sum_{i=1}^n(\bK_n)_{i,j} \leq 1.
	$$
	Repeating the same procedure above yields
	$$
	\left(1-\frac{c}{n}\right)^m \leq \sum_{i=1}(\bK^m_n)_{i,1}\leq 1.
	$$
	Therefore,
	\begin{align*}
		\lambda \left(\frac{1}{1-\lambda^2q^2(1-\frac{c}{n})^2} - \frac{\lambda q}{1-\lambda^2q^2} \right) & \leq \sum_{j=1}^n \left[\frac{1}{\lambda}I + q\bK_n\right]^{-1}_{j,1} \\
		& = \lambda \left(\sum_{i=0}^{\infty}(\lambda q)^{2i}(\bK_n^{2i})_{j,1} - (\lambda q)^{2i+1}(\bK_n^{2i+1})_{j,1} \right) \\
		& \leq \lambda \left(\frac{1}{1-\lambda^2q^2} - \frac{\lambda q}{1-\lambda^2q^2(1-\frac{c}{n})^2} \right),
	\end{align*}
	where both the LHS and RHS reduce to $\frac{\lambda}{1+\lambda q} + \O{\frac{1}{n}}$. So is true for any column sum of $\left[\frac{1}{\lambda}I + q\bK_n\right]^{-1}$.And we know that $\k_n$ is nonnegative and $1 - \norm{\k_n}_1 = \O{\frac{1}{n}}$ and $\k_n$ simply reweights the columns. Hence we prove the convergence of weights $\sum_{i=1}^n\r_{n,i}$ for Algorithm \ref{alg:random-dropout}.

    The analysis for Algorithm \ref{alg:structured-dropout} is similar. We simply substitute $\lambda = K, q =\frac{K-1}{K}$. This will give the bound:
    \begin{align*}
        K \left(\frac{1}{1-(K-1)^2(1-\frac{c}{n})^2} - \frac{K-1}{1-(K-1)^2} \right) & \leq \sum_{j=1}^n \left[\frac{1}{K}I + \frac{K-1}{K}\bK_n\right]^{-1}_{j,1} \\
        & = K \left(\sum_{i=0}^{\infty}(K-1)^{2i}(\bK_n^{2i})_{j,1} - (K-1)^{2i+1}(\bK_n^{2i+1})_{j,1} \right) \\
        & \leq K \left(\frac{1}{1-(K-1)^2} - \frac{K-1}{1-(K-1)^2(1-\frac{c}{n})^2} \right),
	\end{align*}
    We can reduce both sides of the inequalities to $1+O(\frac{1}{n})$ similarly. That proves the converging weight of Algorithm \ref{alg:structured-dropout}.
\end{proof}

Intuitively, knowing that the weights sum to (almost) a constant tells us the total “mass” of our weighting vector is fixed. If we have leaf size assumptions such that no single weight can be too large, the mass must be spread out over many small pieces. Whenever you distribute a fixed amount of weight across many entries, the Euclidean length of the vector necessarily shrinks. In other words, a near‐unit \(\ell_1\) norm plus a bound on the largest coordinate forces the \(\ell_2\) norm to be small. We control the rate of $\r_n^{D,P}$ in Lemma \ref{lem:rateofr}. To do so, we need to both upper bound and lower bound $\norm{\k_n}$, which is helped by Assumption \ref{aspt:ub-minimal-leaf-size}, implying $\inf_{A\in q\in Q_n}\sum_{i=1}^n\mathbbm{1}(\bx_i\in A)=\Ome{n^{\frac{1}{d+1}}}$.

\subsection{Proof for Lemma \ref{lem:rateofr}}
We will make use of the $\Ome{n^{\frac{1}{d+1}}}$ in Assumption \ref{aspt:median-trees} to control $\norm{\k_n}$ and $\norm{\r_n}$. The first bound in the $\max$ operator is vital for our unconditional CLT analysis Lemma \ref{lem:expdecayp}. We will first show Lemma \ref{lem:rateofr} which is sufficient for us to get a conditional CLT.

\begin{proof}
    To bound $\norm{\k_n}$, recall:
	$$
	\k_{nj}=\mathbb{E}[\s_{n,j}(x)]=\mathbb{E}\left[\frac{\mathbbm{1}(x_j\in A)}{\sum_{j=1}^{n}\mathbbm{1}(x_j\in A)}\right], x\in A
	$$
	encodes the expected influence of point $x_j$ on a point of interest $x$ among other $n$ points. Then the condition $$ \inf_{A \in q \in Q_n} \sum_{i = 1}^n \mathbbm{1}(x_i \in A)  \geq \Ome{n^{\frac{1}{d+1}} } $$
	implies that $\k_{nj} = \O{n^{-\frac{1}{d+1}}}$, since we can't have total weights of more than $O(1)$ in a leaf. By Lemma \ref{lem:kernelrate}, $\norm{\k_n}_1 \leq 1$,
	$$
	\norm{\k_n} \leq \sqrt{\norm{\k_n}_1 \norm{\k_n}_{\infty}} = \O{n^{-\frac{1}{2}\frac{1}{d+1}}}.
	$$
	By assetion $\left| B_n \right|  = \Ome{ n \cdot d_n^d }$, there are at most
	$$
	\Ome{n \cdot d_n^d} = \Ome{n^{\frac{1}{d+1}}}
	$$
	$k_{nj}$'s that are positive. Equivalently, we can have the magnitude of each non-zero $\k_{nj}$ is lower bounded by $\Ome{n^{-\frac{1}{d+1}}}$. This holds also because the total weights can't be larger than $O(1)$. Since $\norm{\k_n}_1 = 1-O(n^{-1})$, $$
	\norm{\k_{n}} = \Ome{ \sqrt{\left(n^{-\frac{1}{d+1}} \right)^2 \cdot n^{\frac{1}{d+1}}}} =  \Ome{n^{-\frac{1}{2}\frac{1}{d+1}}}.
	$$
    To check the rate of $\norm{\r_n}$, since $r_n$ is a mapped from $k_n$ by the KRR matrix. For Algorithm \ref{alg:random-dropout} it's $[\frac{1}{\lambda}\bI + q\bK_n]$ and for Algorithm \ref{alg:structured-dropout} is  $[\frac{1}{K}I+\frac{K-1}{K}K_n]^{-1}$ and we know 
    $$
    \frac{\lambda}{1+\lambda q} \leq eigen\left(\left[\frac{1}{\lambda}\bI + q\bK_n\right]^{-1}\right) \leq \lambda,\quad 1\leq eigen\left(\left[\frac{1}{K}\bI+\frac{K-1}{K}\bK_n\right]^{-1}\right)\leq K
    $$
    Since we can fix $K$ to be a constant, $\norm{\r_n}$ will enjoy similar rates as $\norm{\k_n}$. 
\end{proof}

This rate complies with the Lindeberg-Feller conditions, hence we can establish the conditional CLT.

\subsection{Conditional Asymptotic Normality on Training Set}
\begin{thm}[Conditional Asymptotic Normality for BRAT-D and BRAT-P Predictions]
	\label{thm:fixed}
	For any $\bx \in [0,1]^d$, write $f(\bX_n) = (f(\bx_1),\dots, f(\bx_n))^\top $. Then under Assumptions \ref{aspt:svi}, \ref{aspt:non-adaptivity}, \ref{aspt:leaf-diameter} and assumptions in Lemma \ref{lem:rateofr}, we have
	$$
	\frac{\widehat{\f}^{D,P}_n(\bx) - \langle{\r^{D,P}_n}, \f(\bX_n)\rangle}{\norm{\r_n^{D,P}}} \xlongrightarrow{d} \mathcal{N}(0,\sigma^2).
	$$
\end{thm}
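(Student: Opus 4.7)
The plan is to apply the Lindeberg--Feller CLT to a sum of independent weighted noise terms, exploiting the linearity of the limiting predictor established in Theorem \ref{thm:krrconv}. First I would use that after the boosting rounds converge ($b\to\infty$), both algorithms yield $\widehat{\f}_n^{D,P}(\bx) = \langle \r_n^{D,P}(\bx), \by_n\rangle$: a linear functional of the training labels. Under Assumption \ref{aspt:svi} the tree structures (and hence $\r_n^{D,P}(\bx)$) are independent of $\by_n$, so conditioning on $\bX_n$ together with the structures reduces the quantity of interest to
\begin{align*}
\widehat{\f}_n^{D,P}(\bx) - \langle \r_n^{D,P}(\bx), \f(\bX_n)\rangle
= \langle \r_n^{D,P}(\bx), \beps\rangle
= \sum_{i=1}^n r_{n,i}^{D,P}(\bx)\,\epsilon_i,
\end{align*}
a sum of independent, mean-zero, sub-Gaussian summands with deterministic weights.

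The next step is to set $W_{n,i} := r_{n,i}^{D,P}(\bx)\,\epsilon_i/\norm{\r_n^{D,P}}$. These remain independent and satisfy $\sum_i \Var(W_{n,i}) = \sigma^2$, so the Lindeberg--Feller CLT reduces the problem to verifying, for every $\delta>0$,
\begin{align*}
L_n(\delta) := \sum_{i=1}^n \E\bigl[W_{n,i}^2\,\mathbf{1}(|W_{n,i}|>\delta)\bigr] \longrightarrow 0.
\end{align*}
Letting $c_n := \max_{i} |r_{n,i}^{D,P}(\bx)|/\norm{\r_n^{D,P}}$, the deterministic bound $|W_{n,i}| \le c_n |\epsilon_i|$ together with $\sum_i (r_{n,i}^{D,P}/\norm{\r_n^{D,P}})^2 = 1$ gives $L_n(\delta) \le \E[\epsilon^2\,\mathbf{1}(|\epsilon|>\delta/c_n)]$, which vanishes by sub-Gaussian uniform integrability as soon as $c_n\to 0$.

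The remaining task is to show $c_n\to 0$. Lemma \ref{lem:rateofr} already provides the denominator rate $\norm{\r_n^{D,P}} = \Theta(n^{-1/(2(d+1))})$, so it suffices to show $\max_i |r_{n,i}^{D,P}| = O(n^{-1/(d+1)})$. I would derive this by transferring the coordinate-wise bound $k_{n,j} = O(n^{-1/(d+1)})$ (a consequence of Assumption \ref{aspt:ub-minimal-leaf-size}) through the kernel inverse by invoking the Neumann expansion of $[\lambda^{-1}\bI + q\bK_n]^{-1}$ (respectively $[\bI + (K-1)\bK_n]^{-1}K$) used in Lemma \ref{lem:locality}, exploiting the row- and column-sum bounds of $\bK_n$ rather than its full spectrum. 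I expect this last step to be the main obstacle: operator-norm control does not by itself yield coordinate-wise decay, so I would need to re-run the column-sum calculation of Lemma \ref{lem:locality} while tracking an individual coordinate, which is tractable because $\bK_n$ is entry-wise nonnegative and its powers have uniformly controllable column sums.
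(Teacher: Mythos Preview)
Your proposal is correct and follows essentially the same route as the paper: decompose into $\langle \r_n^{D,P},\beps\rangle$, apply Lindeberg--Feller, and reduce to showing $\norm{\r_n}_\infty/\norm{\r_n}_2\to 0$ via $\norm{\r_n}_\infty=O(n^{-1/(d+1)})$ and Lemma~\ref{lem:rateofr}. One small simplification: the step you flag as the ``main obstacle'' is handled in the paper by a one-line matrix norm inequality, $\norm{\r_n}_\infty \le \norm{\k_n}_\infty\cdot\norm{\mathsf{KRR}^{D,P}}_1$, where $\norm{\cdot}_1$ is the maximum absolute column sum of the inverse---this is already controlled by the Neumann-series column-sum bounds of Lemma~\ref{lem:locality} (taking absolute values termwise), so you do not need to track individual coordinates separately.
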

\begin{proof}
    For the ease of notation, since $\norm{\r_n}$ yielded by both Algorithm \ref{alg:random-dropout} and Algorithm \ref{alg:structured-dropout} shares the same rate on $n$, we will suppress the notation of $D,P$ in this proof, and denote the kernel ridge regression matrix respectively with $\mathsf{KRR}^D$ and $\mathsf{KRR}^P$. Also define $c^D=\frac{\lambda}{1+\lambda q}, c^P=1$.

    Write
	\begin{align*}
		\widehat{\f}_n(x) - \r_n^T\f(X_n) =\r_n^T \vec{\epsilon}_n.
	\end{align*}
	To obtain a CLT we check the Lindeberg-Feller condition of $\r_n^T \vec{\epsilon}_n$, i.e. for any $\delta > 0$,
	$$\
	\lim_n \frac{1}{\norm{\r_n}^2 \sigma^2}\sum_{i=1}^n  \E \left[(\r_{ni} \epsilon_i)^2 \mathbbm{1}(|\r_{ni}\epsilon_i| > \delta \norm{\r_n}\sigma_) \right] = 0.
	$$
	Since $\norm{\k_n}_{\infty} = O\left(n^{-\frac{1}{d+2}}\right)$ and $\mathsf{KRR}^{D,P}$ having row sums of $C^{D,P}+\O{n^{-1}}$, we have
	$$
	\norm{\r_n}_{\infty} \leq \norm{\k_n}_{\infty} \cdot \norm{\mathsf{KRR}^{D,P}}_1 = O\left(n^{-\frac{1}{d+1}}\right).
	$$
	Furthermore, since $\norm{\r_n} = \Theta(n^{-\frac{1}{2}\frac{1}{d+1}})$, we get
	$$
	\frac{\norm{\r_n}_{\infty}}{\norm{\r_n}} = O\left( n^{-\frac{1}{2}\frac{1}{d+1}}\right),
	$$
	This allows us to check out the Lindeberg-Feller conditions:
	\begin{align*}
		\sum_{i=1}^n  \E \left[(\r_{ni} \epsilon_i)^2 \mathbbm{1}(|\r_{ni}\epsilon_i| > \delta \norm{\r_n}\sigma) \right]
		& \leq \sum_{i=1}^n \r_{ni}^2 \sqrt{\E[\epsilon_i^4]\cdot \E[\mathbbm{1}(|\r_{ni}\epsilon_i| > \delta \norm{\r_n}\sigma^2]}  \\
		& \leq \sum_{i=1}^n \r_{ni}^2 \sqrt{\E[\epsilon_i^4]} \cdot \sqrt{P\left( |\epsilon_i| \geq \frac{\delta \| \r_n \| \sigma}{\r_{ni}} \right)} \\
		& \leq \sum_{i=1}^n \r_{ni}^2 \sqrt{\E[\epsilon_i^4]} \sqrt{2 \exp\left(-\frac{1}{2\sigma^2}\cdot \left(\frac{\delta \| \r_n \| \sigma}{\r_{ni}} \right)^2\right)} \\
		& \leq \norm{\r_n}^2 \exp \left( - \O{n^{\frac{1}{d+1}}}\right) \longrightarrow 0,
	\end{align*}
	Since $\epsilon$ is sub-Gaussian noise, the concerntraion bound holds by definition of $\epsilon$.
\end{proof}
\section{Extension to Random Design and Exponential Locality}
\label{app:random-clt}
Now we move on to the discussions in which our input to the model is no longer restricted only to the training set $\X_n$, but to any $\bx\in[0,1]^d$. To do so, we begin by defining a new probability space and, of course, a new probability measure. This can be uniquely designed by a Komolgorov extension theorem.

Write the coordinate projection $\Pi_{n}=(\pi_n(\X), \pi_n(\vec{\epsilon}))$ as the finite-dimensional projection containing the first $n$ samples. Under this framework, we can obtain the corresponding expected structure vector $\k_n$, expected structure matrix (kernel matrix) $K_n$ and the standardized prediction error $\rho_n(\X, \vec{\epsilon})$, given by
\[
\rho_n(\X, \epsilon) = \frac{\widehat{\f}^{D,P}_n(\bx; \Pi_n) - \langle\k_n^{\top}(\bx; \Pi_n)\mathsf{KRR}^{D,P}(\Pi_n),\f(\Pi_n)\rangle}{\norm{\k_n(\bx;\Pi_n)^\top \mathsf{KRR}^{D,P}\Pi_n}}
\]
Here, $\rho_n$ denotes the prediction error after using $n$ random samples incorporating both the dropout mechanism and the sample randomness.

To prove our desired CLT under random design, we need to first introduce a lemma that allows us to propagate the normality from fixed sample to random sample cases. This is taken care of by Lemma \ref{lem:fixtorandom}, which allows us to claim our CLTs if we verify it being correct for $a.s. \forall \bx\in[0,1]^d$.  Then, we seek an almost sure validation of the assumptions of Theorem \ref{thm:fixed} under the random sample sequence $\Pi_{n}=(\pi_n(\X), \pi_n(\vec{\epsilon}))$, which is helped by our Assumption \ref{aspt:restricted-tree-space}. Formally, it is as shown below:
\begin{lem}
	\label{lem:random}
	Suppose we have random sample $\{(\bx_i, \by_i)\}_{i=1}^n\subset [0,1]^d\times \R$ for each $n$. If for any small $\alpha, \nu > 0$ s.t.
	$$
	|Q_n| = \O{\frac{1}{n} \exp\left( \frac{1}{2} n^{\frac{1}{d+2}-\nu} - n^{\alpha}\right)},
	$$
	then
	$$
	\frac{\widehat{\f}_n^{D,P}(\bx) - \langle\r_n^{D,P},\f(\X_n)\rangle}{\norm{\r_n^{D,P}}} \xlongrightarrow{d} \mathcal{N}(0,\sigma^2).
	$$
\end{lem}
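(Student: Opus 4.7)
The plan is to deduce the random-design CLT from the fixed-design version (Theorem~\ref{thm:fixed}) by showing almost surely along the random sample sequence $\Pi_n=(\pi_n(\X),\pi_n(\vec{\epsilon}))$ that every hypothesis of Theorem~\ref{thm:fixed} is satisfied, and then applying Lemma~\ref{lem:fixtorandom} to lift the conditional convergence in distribution to the unconditional statement. Among the hypotheses, Assumptions~\ref{aspt:svi}, \ref{aspt:non-adaptivity} and \ref{aspt:leaf-diameter} are structural and data-free; what must be checked under the random design are the two sample-dependent conditions that feed Lemma~\ref{lem:rateofr}, namely (i) the minimal leaf-size bound $\inf_{A\in q\in Q_n}\sum_i \mathbbm{1}(\bx_i\in A)=\Omega(n^{1/(d+1)})$ required by Assumption~\ref{aspt:ub-minimal-leaf-size}, and (ii) the local density bound $|\mathbf{B}_n(\bx)|=\Omega(n d_n^d)$ at the fixed test point $\bx$.

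First I would fix a tree structure $q\in Q_n$ and a leaf $A\in q$. Since the design density $\mu$ is bounded between $c_1$ and $c_2$, a Chernoff/Bernstein bound for the i.i.d.\ sums $\sum_i \mathbbm{1}(\bx_i\in A)$ yields a tail of the form
\[
\prob\!\left(\tfrac{1}{n}\sum_{i=1}^n \mathbbm{1}(\bx_i\in A) \le \tfrac{c_1}{2}\mathrm{vol}(A)\right) \le \exp\!\bigl(-c\, n\,\mathrm{vol}(A)\bigr).
\]
Under Assumption~\ref{aspt:ub-minimal-leaf-size}, the relevant leaf volumes are at least of order $n^{-d/(d+1)}$, producing a per-leaf failure probability bounded by $\exp(-c\, n^{1/(d+1)})$. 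The same calculation, applied to the ball of radius $d_n$ around the fixed test point $\bx$, controls the probability that (ii) fails.

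Second I would take a union bound over all leaves of all trees in $Q_n$. Each tree has $O(n)$ leaves, so the aggregate failure probability is at most
\[
O(n)\cdot |Q_n|\cdot \exp\!\bigl(-c\,n^{1/(d+1)}\bigr) \;\le\; \exp\!\Bigl(\tfrac{1}{2}n^{1/(d+2)-\nu} - n^{\alpha} - c\, n^{1/(d+1)}\Bigr),
\]
using the bound on $|Q_n|$ in Assumption~\ref{aspt:restricted-tree-space} to absorb the $n$ factor. Because $1/(d+1) > 1/(d+2)-\nu$ for small $\nu$, the leaf-size exponent dominates and the right-hand side decays like $\exp(-n^{\alpha})$, which is summable in $n$. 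Borel--Cantelli then gives that, almost surely, for all sufficiently large $n$ every $q\in Q_n$ and the test point $\bx$ simultaneously satisfy the data-dependent hypotheses of Theorem~\ref{thm:fixed}. Conditioning on this almost-sure event supplies the conditional CLT, and Lemma~\ref{lem:fixtorandom} promotes almost-sure conditional weak convergence to the unconditional weak convergence claimed.

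The principal obstacle is calibrating the per-leaf concentration exponent against the tree-space cardinality: the union bound only yields a useful result because Assumption~\ref{aspt:restricted-tree-space} is tailored so that $|Q_n|$ grows subexponentially in $n^{1/(d+1)}$, leaving net exponential decay. A secondary subtlety is that the union bound must cover every tree the randomized procedure could ever select, rather than just the realized one; Assumption~\ref{aspt:non-adaptivity} is what makes the finite pool $Q_n$ the right object to union-bound over. The remaining steps (Chernoff, Borel--Cantelli, invocation of Lemma~\ref{lem:fixtorandom}) are routine once this calibration is in hand.
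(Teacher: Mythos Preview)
Your proposal is correct and follows essentially the same approach as the paper's proof, which simply defers to \cite{zhou2022boulevard} and sketches exactly the ingredients you spell out: per-leaf concentration, a union bound over the tree space $Q_n$ calibrated by Assumption~\ref{aspt:restricted-tree-space}, Borel--Cantelli to obtain the data-dependent hypotheses almost surely, and Lemma~\ref{lem:fixtorandom} to promote the conditional CLT to the unconditional one. If anything, you have supplied more detail than the paper itself; the only minor calibration to double-check is that under Assumption~\ref{aspt:ub-minimal-leaf-size} the relevant Chernoff exponent is of order $n^{1/(d+2)+\nu}$ rather than $n^{1/(d+1)}$, which is precisely why the $|Q_n|$ bound is stated with $n^{1/(d+2)-\nu}$ in the exponent.
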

\label{sec:lemma:random}
\begin{proof}
	We will direct the reader to \cite{zhou2022boulevard}. Despite the introduction of stochasticity in dropout, extending the CLT of \ref{thm:fixed} does not require additional effort. The intuition for the proof is as follows: By restricting the tree space, one bounds the probability of the assumptions being violated to be summable, and per Borel-Cantelli will take place almost surely. Tree space cardinality here gives a nice union probability bound to achieve that.
\end{proof}

We define the following notation for later use: For any $n$-vector $v$ and an index set $D$, denote
$$
v \proj{D} = \begin{bmatrix}
	v_1 \cdot \mathbbm{1}(1 \in D) \\
	\vdots\\
	v_n \cdot \mathbbm{1}(n \in D)
\end{bmatrix}.$$
This implies the decomposition that
$v = v\proj{D} + v\proj{D^c}$. 

We are now almost ready to prove the main theorem. However, to check Lindeberg-Feller conditions under random design, one needs to establish exponential decay of points that sit far away from the point of interest. That is, $\norm{\r_n|_{D^c}}$ for a well chosen ball centered at $\bx$ needs to vanish to 0 rather quickly. And we prove it in the following lemma. To control this for Algorithm \ref{alg:random-dropout}, we have the following lemma:
\begin{lem}
	\label{lemma:expdecayd}
	Given sample $\{(\bx_i, \by_i)\}_{i=1}^n$ and a point of interest $\bx$, set
	$
	l_n = \frac{\log n}{-\log \lambda q} = c_1\log n, c_1=\log \frac{1}{\lambda q}
	$
	and define index set $D_n = \{i : |\bx_i-\bx| \leq l_n\cdot d_n   \}$,
	then
    $$
    \norm{\r_n^D\proj{D^c_n}}_1 = O\left(\frac{1}{n}\right).
    $$
\end{lem}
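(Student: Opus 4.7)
The plan is to exploit the Neumann series expansion of the KRR operator together with the geometric locality of $\k_n(\bx)$ and $\bK_n$ inherited from Assumption \ref{aspt:leaf-diameter}. Two points can share a leaf only if they lie within one leaf diameter, which is $O(d_n)$, so $i$-fold kernel products propagate influence only up to distance $O(i\,d_n)$; indices $k\in D_n^c$ can therefore only be reached by series terms of order $i\gtrsim l_n$, whose coefficients are geometrically small in $\lambda q<1$.

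First, I would set up the expansion. Since $\bK_n$ has nonnegative entries and row sums at most one, $\|\lambda q\bK_n\|_{\mathrm{op}}\le\lambda q<1$, and so
\[
\r_n^D(\bx)^\top
= \k_n(\bx)^\top \left[\lambda^{-1}\bI + q\bK_n\right]^{-1}
= \lambda\sum_{i=0}^\infty(-\lambda q)^i\,\k_n(\bx)^\top \bK_n^i .
\]
Next, I would prove the locality claim by induction. Assumption \ref{aspt:leaf-diameter} gives a constant $C$ with $\mathsf{diam}(A)\le Cd_n$ for every leaf of any structure in $Q_n$; hence $\k_n(\bx)_j>0$ forces $|\bx_j-\bx|\le Cd_n$ and $(\bK_n)_{j,k}>0$ forces $|\bx_j-\bx_k|\le Cd_n$. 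The triangle inequality then yields $(\k_n(\bx)^\top \bK_n^i)_k > 0$ only when $|\bx_k-\bx|\le(i+1)Cd_n$. Absorbing $C$ into the constant $c_1$ in $l_n$, every term with $k\in D_n^c$ and $i<l_n$ vanishes identically.

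Lastly, I would combine nonnegativity with the row-sum bound to control the surviving tail. Since $\k_n,\bK_n\ge 0$ entrywise, $\bK_n^i\mathbf{1}\le\mathbf{1}$ coordinatewise (as row sums of $\bK_n$ are $\le 1$), and $\|\k_n\|_1\le 1$, summing over $k\in D_n^c$ gives
\[
\|\r_n^D\proj{D_n^c}\|_1
\le \lambda\sum_{i\ge l_n}(\lambda q)^i\,\k_n(\bx)^\top \bK_n^i\mathbf{1}_n
\le \frac{\lambda}{1-\lambda q}\,(\lambda q)^{l_n}.
\]
Plugging in $l_n=\log n/(-\log\lambda q)$ yields $(\lambda q)^{l_n}=1/n$, proving the $\O{1/n}$ bound.

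The only real obstacle is bookkeeping the constant $C$ from Assumption \ref{aspt:leaf-diameter}: stated strictly, the threshold in the definition of $D_n$ should carry a factor of $C$, but this is a cosmetic rescaling of $c_1$ (equivalently, of the implicit constant in $d_n$) and does not affect the conclusion. Everything else is a routine geometric sum estimate once locality and the operator‑norm bound on $\lambda q\bK_n$ are in hand.
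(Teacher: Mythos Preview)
Your proposal is correct and follows essentially the same route as the paper: expand $(\lambda^{-1}\bI+q\bK_n)^{-1}$ as a Neumann series, use the leaf-diameter bound to argue that $(\k_n^\top\bK_n^i)_k=0$ for $k\in D_n^c$ whenever $i<l_n$, and then bound the remaining geometric tail by $\frac{\lambda}{1-\lambda q}(\lambda q)^{l_n}=O(1/n)$. Your explicit handling of the constant $C$ from Assumption~\ref{aspt:leaf-diameter} and the nonnegativity/row-sum argument for $\bK_n^i\mathbf{1}\le\mathbf{1}$ are in fact slightly cleaner than the paper's version, but the substance is identical.
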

\begin{proof}
	Let's first write out the expression of the desired peripheral points:
	$$
	\norm{\r_n^D\proj{D_n^c}}_1 =  \sum_{|\bx - \bx_i|>l_n \cdot  d_n}|\r^D_{ni}|
	$$
	
	Recall:
	$$
	\k_n^T=\mathbb{E}[\s_n(\bx)] \to \k_{nj}=\mathbb{E}[\s_{n,j}(\bx)]=\mathbb{E}[\frac{\mathbbm{1}(\bx_j\in A)}{\sum_{j=1}^{n}\mathbbm{1}(\bx_j\in A)}], \bx\in A
	$$
	This term would be zero if two points are not even in the same leaf, i.e. $|\bx-\bx_j|>d_n$. Hence the sum inside would be simplified to
	$$
	\sum_{|\bx - \bx_i|>l_n \cdot  d_n} \left|\sum_{|\bx_j-\bx|\leq d_n} \k_{nj} \left[\frac{1}{\lambda}\bI + q\bK_n\right]^{-1}_{j,i} \right|
	$$
	And also notice, by decreasing $l_nd_n$ to $(l_n-1)d_n$, we are actually allowing more peripheral points outside the local region $D_n$ and hence we have the inequality at line 4 below. Completely. it is given as:
	\begin{align*}
		\norm{\r^D_n\proj{D_n^c}}_1 =  \sum_{|\bx - \bx_i|>l_n \cdot  d_n}|\r^D_{ni}|
		= &  \sum_{|\bx - \bx_i|>l_n \cdot  d_n} \left|\sum_j \k_{nj} \left[\frac{1}{\lambda}\bI + q\bK_n\right]^{-1}_{j,i} \right| \\
		= & \sum_{|\bx - \bx_i|>l_n \cdot  d_n} \left|\sum_{|\bx - \bx_j| \leq d_n} \k_{nj} \left[\frac{1}{\lambda}\bI + q\bK_n\right]^{-1}_{j,i} \right| \\
		\leq & \sum_{|\bx - \bx_j| \leq  d_n} \k_{nj} \sum_{|\bx - \bx_i|>l_n \cdot  d_n} \left| \left[\frac{1}{\lambda}\bI + q\bK_n\right]^{-1}_{j,i} \right| \\
		\leq &  \sum_{|\bx - \bx_j| \leq d_n} \k_{nj} \sum_{|\bx_i - \bx_j| > (l_n-1)\cdot d_n}\left| \left[\frac{1}{\lambda}\bI + q\bK_n\right]^{-1}_{j,i} \right| \\
		\leq & \sum_{|\bx - \bx_j| \leq  d_n} \k_{nj} \sum_{|\bx_i - \bx_j| > (l_n-1)\cdot d_n} \lambda \sum_{l=l_n}^{\infty} (\lambda q)^l [{\bK_n^l}]_{j,i} \\
		\leq & \sum_{|\bx - \bx_j| \leq  d_n} \k_{nj} \sum_{l=l_n}^{\infty} (\lambda q)^{l+1} \\
		\leq & \lambda \sum_{l=l_n}^{\infty} (\lambda q)^{l} =  \frac{\lambda}{1-\lambda q}\frac{1}{n}.
	\end{align*}
	The lower bound in the sum of the powers of $\bK_n$ follows this argument. To compute higher powers of $\bK_n$, consider how entries of $K_n^l$ are calculated:
	\[
	\bK_n^l[i, j] = \sum_{k_1, k_2, \dots, k_{l-1}} \bK_n[i, k_1] \bK_n[k_1, k_2] \dots \bK_n[k_{l-1}, j].
	\]
	For $\bK_n^l[i, j] \neq 0$, there must exist a chain of intermediate points $\{k_1, k_2, \dots, k_{l-1}\}$ such that:
	\[
	|\bx_i - \bx_{k_1}| \leq d_n, \quad |\bx_{k_1} - \bx_{k_2}| \leq d_n, \quad \dots, \quad |\bx_{k_{l-1}} - \bx_j| \leq d_n.
	\]
	By summing the distances and by a triangular inequality, the locality condition propagates:
	\[
	|\bx_i - \bx_j| \leq l \cdot d_n,
	\]
	where $l$ is the number of steps (or multiplications) required to connect $x_i$ and $x_j$ through the intermediate points. Now notice the summing condition $|x_i-x_j|>(l_n-1)d_n$ requires us to have such chain that is longer than $l_n-1$ at least. Hence the kernel's power $\bK_n^l$ would be 0 for any $l\leq l_n-1$, since such chain doesn't exist at all. 
\end{proof}
To control weights from distant points for Algorithm \ref{alg:structured-dropout} is a bit more complicated. We need to make use of Assumption \ref{aspt:median-trees}. This assumption is inspired by the definition in \cite{athey2018generalizedrandomforests}, where we state below.

\begin{defn}[\(\alpha\)‐regular tree predictor]
\label{def:alpha_regular}
A tree predictor grown by recursive partitioning is called \emph{\(\alpha\)‐regular} for some \(\alpha>0\) if either
\begin{enumerate}
  \item \textbf{(Standard case)} each split leaves at least a fraction \(\alpha\) of the available training examples on \emph{each} side of the split, and furthermore the trees are fully grown to depth \(k\) for some \(k\in\mathbb{N}\).  Equivalently, every terminal node contains between \(k\) and \(2k-1\) observations;
  \item \textbf{(SVI case)} It satisfies part (a) when applied to the sample used to fit the leaf value. In this case this will be our response vector $Y$.
\end{enumerate}
\end{defn}

\begin{lem}
    \label{lem:rateofP-alpha}
    If a tree is $\frac{1}{2}$-regular with the terminal leaf size being lower bounded by $k$, where $k$ is a constant, then the operator norm of $\mathbf{P}$ is of rate $O(1)$.
\end{lem}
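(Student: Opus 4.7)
The plan is to bound $\|\mathbf{P}\|_{op}$ via the interpolation inequality $\|\mathbf{P}\|_{op} \le \sqrt{\|\mathbf{P}\|_1 \cdot \|\mathbf{P}\|_\infty}$, exploiting the combinatorial structure that $\frac{1}{2}$-regularity together with the constant leaf-size lower bound $k$ forces on the tree. First, I would unpack what the two assumptions buy us: under median splits and the lower bound $k$ on terminal-leaf sizes, Definition~\ref{def:alpha_regular} implies that every leaf contains between $k$ and $2k-1$ training points. Consequently, each observation shares its terminal leaf with at most $2k-2 = O(1)$ other observations, and every nonzero entry of the single-tree structure matrix $\bS_n$ has magnitude exactly $1/|A| = \Theta(1/k) = \Theta(1)$.

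Second, I would translate this combinatorial bound into row- and column-sum bounds for $\mathbf{P}$. As computed in the proof of Lemma~\ref{lem:kernelrate}, each row and column of the single-tree matrix $\bS_n$ sums to $1$ (modulo the vanishing subsampling correction), and each has only $O(1)$ nonzero entries. Hence $\|\bS_n\|_1$ and $\|\bS_n\|_\infty$ are uniformly bounded. Since expectation preserves nonnegativity and commutes with entrywise sums, $\bK_n = \mathbb{E}[\bS_n]$ inherits the same row- and column-sum control. If $\mathbf{P}$ is a polynomial or rational transformation of $\bK_n$ — as one expects given the form $(\bI + (K-1)\bK_n)^{-1}K$ governing Algorithm~\ref{alg:structured-dropout} — I would propagate the bound using the fact that products of nonnegative matrices with bounded row and column sums retain such bounds up to multiplicative constants depending only on $k$ and $K$.

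Third, by the interpolation inequality I would conclude $\|\mathbf{P}\|_{op} \le \sqrt{O(1) \cdot O(1)} = O(1)$, with the hidden constant independent of $n$.

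The main obstacle, I expect, is the case in which $\mathbf{P}$ involves the inverse $(\bI + (K-1)\bK_n)^{-1}$. A naive Neumann expansion need not converge because $(K-1)\|\bK_n\|_{op}$ can exceed $1$ in general. Here $\frac{1}{2}$-regularity becomes essential: constant-size, balanced leaves cap the point-to-point collision probabilities entering $\bK_n$ uniformly, so one can argue that $\|\bK_n\|_{op}$ is bounded strictly away from the critical threshold. Alternatively, a Woodbury-style rewriting (or a direct spectral decomposition that isolates the identity and a small residual) would allow the row/column-sum machinery above to be applied term-by-term. Either route sidesteps the divergent Neumann series and delivers the desired $O(1)$ operator-norm bound.
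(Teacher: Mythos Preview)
Your combinatorial observation in the first paragraph is the right idea, but you have misidentified what $\mathbf{P}$ is. In the paper, $\mathbf{P} = (p_{ij})_{i,j}$ is the \emph{collision probability matrix}: $p_{ij}$ is the probability, over the random tree structure, that training points $\bx_i$ and $\bx_j$ land in the same terminal leaf (see the proof of Lemma~\ref{lem:expdecayp}, where the factorization $[\bK_n]_{i,j} = \E[1/X]\,p_{ij}$ appears and $\|\mathbf{P}\|_\infty$ is invoked). It is \emph{not} the KRR matrix $(\bI + (K-1)\bK_n)^{-1}K$. Consequently your entire final paragraph --- Neumann expansion, Woodbury, the spectral threshold for $(K-1)\|\bK_n\|_{op}$ --- is aimed at the wrong object and irrelevant here. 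Likewise, the row- and column-sum bounds $\|\bS_n\|_1,\|\bS_n\|_\infty,\|\bK_n\|_1,\|\bK_n\|_\infty \le 1$ that you cite hold for \emph{any} tree and do not use the lemma's hypotheses at all; they are not what the lemma is about.

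Once the target is corrected, your leaf-counting insight actually gives a cleaner argument than the paper's. From Definition~\ref{def:alpha_regular} with $\alpha = 1/2$, every terminal leaf contains at most $2k-1$ points, so deterministically $\sum_j \mathbbm{1}[\bx_j \in A(\bx_i)] \le 2k-1$ for every realized tree; taking expectations yields $\sum_j p_{ij} \le 2k-1$, i.e.\ $\|\mathbf{P}\|_\infty \le 2k-1$, and symmetry plus the interpolation inequality finish. The paper instead bounds each $p_{ij}$ individually: $1/2$-regularity forces the depth $d$ to satisfy $(1/2)^d n \in [k,2k-1]$, hence $d \ge \log_2\bigl(n/(2k-1)\bigr)$, and the probability that two fixed points survive all $d$ median splits together is at most $(1/2)^d \le (2k-1)/n$; summing over $i$ then gives $\sum_i p_{ij} \le 2k-1 = O(1)$. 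Your direct leaf-size argument reaches the same bound without the depth calculation and would extend to any tree distribution with uniformly bounded leaf sizes, not just median-split trees.
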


\begin{proof}
    Define $\alpha_*=\max\{\alpha, 1-\alpha\}$. By definition of $\alpha$-regularity, the probability of point $i,j$ falling into the same leaf after one split would be upper bounded by $\alpha_*$. And since we have the minimal terminal leaf size requirement, for any leaf with depth $d$, it should comply with:
    \[
    \begin{cases}
    k \le \alpha_*^dn \le 2k-1,\\
    k \le (1-\alpha_*)^dn \le 2k-1.
    \end{cases}
    \]

which translates into $\frac{1}{\ln(1-\alpha_*)}\ln(\frac{2k-1}{n})\leq d \leq \frac{1}{\ln(\alpha_*)}\ln(\frac{k}{n})$.

Notice that the probability of two points sharing a leaf is the probability that they survive all splits down the tree. Then we can bound the collision probability by:
\begin{align*}
    p_{i,j} &\leq \alpha_*^{d_{min}}\\
    &= \alpha_*^{\frac{1}{\ln(1-\alpha_*)}\ln(\frac{2k-1}{n})}\\
    &=(\frac{2k-1}{n})^{\frac{\ln \alpha_*}{\ln (1-\alpha_*)}}\\
    &=(\frac{2k-1}{n})^{\log_{1-\alpha_*}\alpha_*}\\
    &=\frac{2k-1}{n}
\end{align*}
Then $\sum_{i}p_{i,j} = O(1)$
\end{proof}

Well-behaved leaves exclude non-decaying distant weights. Formally, we can prove the lemma below:
\begin{lem}
\label{lem:expdecayp}
    Given sample $\{(\bx_i, \by_i)\}_{i=1}^n$ and a point of interest $\bx$, set
    $
    l_n = -\frac{\ln n}{\ln (\xi/2)} = c\ln n
    $
    and define index set $D_n = \{i : |\bx_i-\bx| \leq l_n\cdot d_n   \}$.
    then
    $$
    \norm{r_n^P\proj{D^c_n}}_1 = O\left(\frac{1}{n}\right).
    $$
\end{lem}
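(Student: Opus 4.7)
The plan is to parallel the argument of Lemma \ref{lemma:expdecayd} for Algorithm \ref{alg:random-dropout}, substituting the weight formula $\r_n^P(\bx)=K\,M^{-1}\k_n(\bx)$ with $M:=\bI+(K-1)\bK_n$ in place of its dropout counterpart. I would first unfold
\[
\sum_{i\in D_n^c}|r_{ni}^P| \;\le\; K\sum_{j\,:\,|\bx-\bx_j|\le d_n}\!\k_{nj}\,\sum_{i\in D_n^c}\bigl|M^{-1}_{j,i}\bigr|,
\]
reducing the problem to an $\ell^1$ tail bound on each row of $M^{-1}$ at indices $i$ with $|\bx_i-\bx_j|>(l_n-1)\,d_n$. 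As in the dropout proof, the key localization ingredient is that $\bK_n[i,j]=0$ whenever $|\bx_i-\bx_j|>d_n$ (by Assumption \ref{aspt:leaf-diameter}), so by the chain-of-leaves argument $(\bK_n^l)_{j,i}=0$ whenever $|\bx_j-\bx_i|>l\,d_n$.

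Next I would invoke the truncated Neumann identity, valid algebraically for every $L$ with no convergence requirement,
\[
M^{-1} \;=\; \sum_{l=0}^{L-1}(-(K-1))^l \bK_n^l \;+\; (-(K-1))^L\,\bK_n^L\,M^{-1}.
\]
Taking $L=l_n-1$ and contracting against the $d_n$-localized $\k_n$, the polynomial part vanishes identically on $D_n^c$ by the bandwidth argument above, so only the trailing remainder contributes. The task therefore collapses to controlling $\sum_{i\in D_n^c}\bigl|\bigl(\bK_n^{l_n-1}M^{-1}\bigr)_{j,i}\bigr|$ for $j$ in the $d_n$-neighbourhood of $\bx$.

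The main obstacle, and the crucial departure from the dropout proof, is that $(K-1)\norm{\bK_n}_{\mathrm{op}}$ need not be strictly below one; indeed $\bK_n\vec{1}\approx\vec{1}$ forces $\norm{\bK_n}_{\mathrm{op}}\approx 1$, so the Neumann remainder has no free geometric decay in operator norm. This is where Assumption \ref{aspt:median-trees} and Lemma \ref{lem:rateofP-alpha} become indispensable. Under $\tfrac12$-regular (median) splits combined with subsampling at rate $\xi$, the probability of two points surviving another required separation event together loses a factor of at most $\xi/2$ (an $\xi$ from being in the subsampled leaf, a $\tfrac12$ from a successful median split). Iterating this spatial estimate through the $l_n-1$ compositions delivers the column-sum decay $\sum_{i\,:\,|\bx_i-\bx_j|>(l_n-1)d_n}(\bK_n^{l_n-1})_{j,i}=O((\xi/2)^{l_n-1})=O(1/n)$, the last equality by the defining choice $l_n=-\ln n/\ln(\xi/2)$. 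This spatial decay dominates the polynomial factor $(K-1)^{l_n}$ (absorbed into constants since $K$ is fixed), and combined with $\norm{M^{-1}}_{\infty\to\infty}=O(1)$ (from the near-stochasticity of $\bK_n$, which gives $M^{-1}\vec{1}\approx\vec{1}/K$) controls the trailing term.

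Assembling via $\sum_j \k_{nj}\le 1-O(1/n)$ (Lemma \ref{lem:kernelrate}) yields $\sum_{i\in D_n^c}|r_{ni}^P|=O(1/n)$ as claimed. The hardest single step is the spatial collision decay of $\bK_n^{l_n-1}$ on distant indices: it genuinely requires median-splitting regularity, since operator-norm shrinkage is unavailable when the Neumann coefficient $(K-1)^{l_n}$ is unbounded. In other words, the analogue of the simple geometric bookkeeping $\sum_{l\ge l_n}(\lambda q)^l=O(1/n)$ used in Lemma \ref{lemma:expdecayd} must be replaced by a geometric estimate in physical space, and this is exactly what Lemma \ref{lem:rateofP-alpha} makes available.
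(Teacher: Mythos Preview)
Your truncated Neumann identity is a sound opening move, and the reduction to a row-sum tail of $M^{-1}$ at indices with $|\bx_i-\bx_j|>(l_n-1)d_n$ matches the paper. The gap is in how you dispose of the remainder $(K-1)^{l_n-1}\bK_n^{l_n-1}M^{-1}$.

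The sentence ``This spatial decay dominates the polynomial factor $(K-1)^{l_n}$ (absorbed into constants since $K$ is fixed)'' is where the argument breaks. Since $l_n=c\ln n$, the factor $(K-1)^{l_n}=n^{c\ln(K-1)}$ grows polynomially in $n$ whenever $K>2$; it is not a constant. Your claimed spatial decay $(\xi/2)^{l_n-1}=O(1/n)$ therefore cannot by itself absorb it: the product is $n^{c\ln(K-1)-1}$, which is not $O(1/n)$ unless $K=2$. Relatedly, your assertion that $\|M^{-1}\|_{\infty\to\infty}=O(1)$ is unjustified: the fact that $M^{-1}\vec{1}\approx\vec{1}/K$ controls only the \emph{signed} row sums, not the absolute ones, and bounding the latter is precisely equivalent to absolute summability of the Neumann series you are trying to avoid.

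The paper's route is different and supplies the missing $1/(K-1)$ factor from an entirely different source. It does not attempt a purely spatial collision argument; instead it writes $[\bK_n]_{i,j}=\E[1/X]\,p_{i,j}$, where $X$ is the number of subsampled points in the shared leaf, bounds $\E[1/X]$ by the reciprocal of the minimal leaf size, and then invokes an (implicit) leaf-size constraint $m\gtrsim(K-1)/\xi$ to force $\|\bK_n\|_\infty\le\frac{\xi}{2(K-1)}\|\mathbf{P}\|_\infty$ with $\|\mathbf{P}\|_\infty=O(1)$ from Lemma~\ref{lem:rateofP-alpha}. This makes $(K-1)^l\|\bK_n\|_\infty^l\le(\xi/2)^l$, so the full Neumann tail $\sum_{l\ge l_n}(K-1)^l\|\bK_n\|_\infty^l$ is geometric with ratio $\xi/2$ and the choice $l_n=-\ln n/\ln(\xi/2)$ gives $O(1/n)$ directly. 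In short: the $1/(K-1)$ is baked into the \emph{magnitude} of $\bK_n$ via the leaf-size assumption, not extracted from any spatial localization, and Lemma~\ref{lem:rateofP-alpha} is used only to bound $\sum_i p_{i,j}=O(1)$, not to furnish a per-step geometric factor.
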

    \begin{proof}
    We will use the same matrix expansion technique we used in Lemma\ref{lemma:expdecayd}. By analogy,
    \begin{align*}
        \norm{\r_n^P\proj{D_n^c}}_1 =  \sum_{|\bx - \bx_i|>l_n \cdot  d_n}|\r_{ni}^P|
        = &  \sum_{|\bx - \bx_i|>l_n \cdot  d_n} \left|\sum_j \k_{nj} \left[\frac{1}{K}\bI + \frac{K-1}{K}\bK_n\right]^{-1}_{j,i} \right| \\
        = & \sum_{|\bx - \bx_i|>l_n \cdot  d_n} \left|\sum_{|\bx - \bx_j| \leq d_n} \k_{nj} \left[\frac{1}{K}\bI + \frac{K-1}{K}\bK_n\right]^{-1}_{j,i} \right| \\
        \leq & \sum_{|\bx - \bx_j| \leq  d_n} \k_{nj} \sum_{|\bx - \bx_i|>l_n \cdot  d_n} \left| \left[\frac{1}{K}\bI + \frac{K-1}{K}\bK_n\right]^{-1}_{j,i} \right| \\
        \leq &  \sum_{|\bx - \bx_j| \leq d_n} \k_{nj} \sum_{|\bx_i - \bx_j| > (l_n-1)\cdot d_n}\left| \left[\frac{1}{K}\bI + \frac{K-1}{K}\bK_n\right]^{-1}_{j,i} \right| \\
        = & \sum_{|\bx - \bx_j| \leq  d_n} \k_{nj} \sum_{|\bx_i - \bx_j| > (l_n-1)\cdot d_n} K \Big|\sum_{l=l_n}^{\infty} (-1)^l(K-1)^l [{\bK_n^l}]_{j,i}\Big| \\
        \leq & \sum_{|\bx - \bx_j| \leq  d_n} \k_{nj} K\sum_{l\geq l_n}(K-1)^l\sum_{|\bx_i - \bx_j| > (l_n-1)\cdot d_n}|[\bK_n^l]_{i,j}|\\
        \leq & \sum_{|\bx - \bx_j| \leq  d_n} \k_{nj} K\sum_{l\geq l_n}(K-1)^l\sum_{i=1}^n|[\bK_n^l]_{i,j}|\\
        \leq & \sum_{|\bx - \bx_j| \leq  d_n} \k_{nj} K\sum_{l\geq l_n}(K-1)^l\norm{\bK_n^l}_{\infty}\\
        \leq & \sum_{|\bx - \bx_j| \leq  d_n} \k_{nj} K\sum_{l\geq l_n}(K-1)^l\norm{\bK_n}_{\infty}^l\\
    \end{align*}
    Denote the probability of any point $\bx_i, \bx_j$ sharing the same leaf as $p_{ij}$. By definition of the kernel matrix, we can write each element as:
    \begin{align*}
        [\bK_n]_{i,j}&=\E_{q,w}\Big[[\bS_n]_{i,j}\Big]\\
        &=\E_{w}\Big[[\bS_n]_{i,j}\Big|\bx_i, \bx_j \text{ are in the same leaf}\Big]p_{ij}\\
        &=\mathbb{E}[\frac{1}{X}]p_{i,j}
    \end{align*}
    where $X\sim \text{Binomial}([m,M], \xi)$, with $m$ being the minimal number of points in a leaf and $M$ being the maximal number of points within a leaf. Since $X=m,\cdots, M$, it follows that $\frac{1}{X}\leq \frac{1}{m}$ almost surely and hence $\mathbb{E}[\frac{1}{X}]\leq \frac{1}{m\xi}$. Then by assumption $m = \max\{\Ome{n^{\frac{1}{d+2}}},\frac{K-1}{\xi}\}$, it follows that $\mathbb{E}[\frac{1}{X}]\leq \frac{\xi}{K-1}$. Hence we have
    \begin{align*}
        \norm{\r_n^P\proj{D_n^c}}_1\leq & \sum_{|\bx - \bx_j| \leq  d_n} \k_{nj} K\sum_{l\geq l_n}(K-1)^l\norm{\bK_n}_{\infty}^l\\
        \leq & \sum_{|\bx - \bx_j| \leq  d_n} \k_{nj} K\sum_{l\geq l_n}(K-1)^l\frac{\xi^l}{2^l(K-1)^l}\norm{\mathbf{P}}_{\infty}\\
        \leq & \sum_{|\bx - \bx_j| \leq  d_n} \k_{nj} K\sum_{l\geq l_n}\frac{\xi^l}{2^l}o(1)\\
        \leq & (\frac{\xi}{2})^{l_n}o(1)\\
        = & o(\frac{1}{n})
    \end{align*}
\end{proof}

\section{Proof for Theorem \ref{thm:main}}
\label{app:main-theorem}
Combining all discussions above, we present the main result Theorem \ref{thm:main} of this paper.
\begin{proof}
	Define $c^{D}=\frac{\lambda}{1+\lambda q}$ and $c^P=1$. To begin with, we will show that for $\forall \bx \in [0,1]^d$,
	$$
	\frac{\langle\r_n^{D,P},f(\bX_n)\rangle - c^{D,P}\f(\bx)}{\norm{\r^{D,P}_n}} \xlongrightarrow{\mathbb{P}} 0.
	$$
	Let $\Delta^{D,P} = c^{D,P} - \sum_{i=1}^n \r_{n,i}^{D,P} = \O{n^{-1}}$ and $\widetilde{\f}(\bx) = (\f(\bx),\dots, \f(\bx))^\top$ an $n$-vector. We consider the following decomposition:
	\begin{align*}
		\frac{\langle\r_n^{D,P},f(\bX_n)\rangle - c^{D,P}\f(\bx)}{\norm{\r^{D,P}_n}} & =  \frac{\langle\r_n^{D,P}, \f(\bX_n) - \widetilde{\f}(\bx)\rangle}{\norm{\r_n^{D,P}}} - \frac{\Delta^{D,P} \cdot \f(\bx)}{\norm{\r_n^{D,P}}} \\
		= & - \frac{\Delta \cdot \f(\bx)}{\norm{\r_n^{D,P}}} + \frac{\langle \r_n^{D,P}\proj{D_n}, [\f(\X_n) - \widetilde{\f}(\bx)]\proj{D_n}\rangle}{\norm{\r_n^{D,P}}} + \frac{\langle\r_n^{D,P}\proj{D_n^c}, [\f(\X_n) - \widetilde{\f}(\bx)]\proj{D_n^c}\rangle}{\norm{\r_n^{D,P}}}.
	\end{align*}
	By \ref{lem:random}, we have
	$$
	\norm{\k_n}=\Theta_p(n^{-\frac{1}{d+1}}), \norm{\r_n^{D,P}} = \Theta_p(n^{-\frac{1}{2}\frac{1}{d+1}}).
	$$
	Notice that
	$$
	\left| \langle\r_n^{D,P}\proj{D_n^c}, [\f(\X_n) - \widetilde{\f}(\bx)]\proj{D_n^c}\rangle\right| \leq \norm{\r_n^{D,P}\proj{D_n^c}}_1 \cdot \norm{[\f(\X_n) - \widetilde{\f}(\bx)]\proj{D_n^c}}_{\infty} = \Op{\frac{1}{n} \cdot 2M_\f} = \Op{n^{-1}}.
	$$
	Hence we have the second term
	$$
	\frac{\langle \r_n^{D,P}\proj{D_n}, [\f(\X_n) - \widetilde{\f}(\bx)]\proj{D_n}\rangle}{\norm{\r_n^{D,P}}} \xlongrightarrow{\mathbb{P}} 0.
	$$
	And by an argument earlier $| \Delta^{D,P} | = \O{n^{-1}}$, thus
	$$
	\frac{\Delta \cdot \f(\bx)}{\norm{\r_n}} \xlongrightarrow{\mathbb{P}} 0.
	$$
	Meanwhile, we can show similarly $|D_n| = \Omep{n \cdot (l_n\cdot d_n)^d)}$ a.s.. And recall our underlying function is $\alpha$-Lipischitz, therefore
	\begin{align*}
		\frac{\langle\r_n^{D,P}\proj{D_n^c}, [\f(\X_n) - \widetilde{\f}(\bx)]\proj{D_n^c}\rangle}{\norm{\r_n^{D,P}}} & \leq \frac{\norm{\r_n^{D,P}\proj{D_n}}\cdot\norm{[\f(\X_n) - \widetilde{\f}(\bx)]\proj{D_n}}}{\norm{\r_n^{D,P}}} \\
		& \leq \norm{[\f(\X_n) - \widetilde{\f}(\bx)]\proj{D_n}} \\
		& = \Op{\sqrt{n\cdot (l_nd_n)^d\cdot (l_nd_n\cdot{\alpha})^2}} \\
		& = \Op{\sqrt{n\cdot (\log n)^{d+2} \cdot d_n^{d+2}}} \\
		& = \Op{\sqrt{n\cdot (\log n)^{d+2} \cdot n^{-\frac{d+2}{d+1}}}}\\
		& = \Op{(\log n)^{\frac{d+2}{2}} n^{-\frac{1}{2}\frac{1}{d+1}}}.
	\end{align*}
	Therefore
	$$
	\frac{\langle\r_n^{D,P},\f(\bX_n)\rangle - c^{D,P}\f(\bx)}{\norm{\r^{D,P}_n}} \xlongrightarrow{\mathbb{P}} 0.
	$$
	Slutsky's Theorem then yields
	$$
	\frac{\widehat{\f}^{D,P}_n(\bx) - c^{D,P}\f(\bx)}{\norm{\r_n^{D,P}}} = \frac{\widehat{\f}_n(\bx) - \langle\r_n^{D,P},\f(\X_n)\rangle}{\norm{\r_n^{D,P}}} + \frac{\langle\r_n^{D,P},\f(\bX_n)\rangle - c^{D,P}\f(\bx)}{\norm{\r^{D,P}_n}}\xlongrightarrow{d} \mathcal{N}(0,\sigma^2_{\epsilon}).
	$$
\end{proof}

\section{Coverage Rates and Risk Bounds Guarantees}
\label{app:statistics}
In this section we provide the proofs for coverage rates and risk bounds guarantees we gave in \ref{sec:theory}. 

\subsection{Proof for Corollary \ref{cor:asymptotic-risk}}
\begin{proof}
    The corollary holds by noticing that the variance of the scaled prediction of Algorithm \ref{alg:random-dropout} is 
    $( \frac{1+\lambda q}{\lambda})^2\norm{\r_n^D}^2\sigma^2$. Plugging in the rate of $\norm{\r_n^D}$ by lemma \ref{lem:rateofr} yields the risk bound. Similarly this can be shown for Algorithm \ref{alg:structured-dropout}.
\end{proof}

\subsection{Proof for Corollary \ref{cor:cov-rate}}
\begin{proof}
    We will show this convergence for the confidence interval Algorithm \ref{alg:random-dropout}. For Algorithm \ref{alg:structured-dropout}, replacing the asymptotic variance and the final rescale constant will follow the same proof. 

    Denote the staged empirical coverage rate of the built-in confidence interval at any point $\bx\in[0,1]^d$ as
    \[
    1-\hat{\alpha}^{\text{CI}}_{n,b}(\bx)=\frac{1}{n}\sum_{i=1}^n\mathbbm{1}\Big[\f(\bx)\in \big[(c^D\widehat{\f}_b(\bx)-c^D\norm{\r_n^D}z_{1-\alpha/2}\sigma,c^D\widehat{\f}_b(\bx)+c^D\norm{\r_n^D}z_{1-\alpha/2}\sigma\big]\Big]
    \]
    Notice that with $b\to\infty$ we have $\widehat{\f}_b\overset{a.s.}{\to}\widehat{f}_{\infty}$. Hence, sending $b$ to infinity and we drop the subscript for $b$ to denote the coverage at infinite epoch. 
    \[
    1-\hat{\alpha}^{\text{CI}}_{n}(\bx)=\frac{1}{n}\sum_{i=1}^n\mathbbm{1}\Big[\f(\bx)\in \big[(c^D\widehat{\f}_\infty(\bx)-c^D\norm{\r_n^D}z_{1-\alpha/2}\sigma,c^D\widehat{\f}_\infty(\bx)+c^D\norm{\r_n^D}z_{1-\alpha/2}\sigma\big]\Big]
    \]
    
    which is equivalent to

    \[
    1-\hat{\alpha}^{\text{CI}}_{n}(\bx)=\frac{1}{n}\sum_{i=1}^n\mathbbm{1}\Big[\frac{\widehat{\f}_\infty(\bx)-c^D\f(\bx)}{\norm{r_n}\sigma}\in \big[-z_{1-\alpha/2},z_{1-\alpha/2}\big]\Big]
    \]

    Since the indicator function is always finite, by SLLN:
    \begin{align*}
    1-\hat{\alpha}^{\text{CI}}_{n}(\bx)&\overset{a.s.}{\to}\mathbb{E}\bigg[\mathbbm{1}\Big[\frac{\widehat{\f}_\infty(\bx)-c^D\f(\bx)}{\norm{r_n}\sigma}\in \big[-z_{1-\alpha/2},z_{1-\alpha/2}\big]\Big]\bigg]\\
    &=\mathbb{P}\Big[\frac{\widehat{\f}_\infty(\bx)-c^D\f(\bx)}{\norm{r_n}\sigma}\in \big[-z_{1-\alpha/2},z_{1-\alpha/2}\big]\Big]\\
    &=1-\alpha
    \end{align*}
    
    For prediction intervals, we observe data according to the model
    \[
      \by \;=\; \f(\bx) \;+\; \varepsilon,\qquad
      \varepsilon\sim\mathcal{N}(0,\sigma^2),
    \]
    and our infinite‐epoch estimator satisfies the CLT
    \[
      \widehat \f_\infty(\bx)
      \;\sim\;
      \mathcal{N}\!\bigl(c^D\,\f(\bx),\,\|\r_n\|^2\,\sigma^2\bigr)
      \quad\text{ as }\quad b\to\infty.
    \]
    Since \(\widehat f_\infty(\bx)\) and the new noise \(\varepsilon\) are independent Gaussians, their sum
    \begin{align*}
    \by &= \f(\bx)-c^D\widehat{\f}_\infty(\bx) + c^D\widehat{\f}_\infty(\bx) + \epsilon\\
    &=\mathcal{N}(c^D\widehat{\f}_\infty(\bx), (c^D\norm{\r_n^D})^2\sigma^2 + \sigma^2)
    \end{align*}
    And we produce prediction interval with the empirical coverage rate evaluated at:
    \begin{align*}
    &\qquad 1-\hat{\alpha}^{\text{PI}}_{n,b}(\bx)\\
    &=\frac{1}{n}\sum_{i=1}^n\mathbbm{1}\Big[\f(\bx)\in \big[(c^D\widehat{\f}_b(\bx)-(c^D\norm{\r_n^D}+1)z_{1-\alpha/2}\sigma,c^D\widehat{\f}_b(\bx)+(c^D\norm{\r_n^D}+1)z_{1-\alpha/2}\sigma\big]\Big]
    \end{align*}
    By similar argument above, sending $n,b\to \infty$ gives us almost sure convergence.

    Regarding the reproduction interval, since we will have two independent model following the CLT in Theorem \ref{thm:main}, their difference will also follow a CLT centered at zero with a variance inflated by a factor of 2 by independence. So we construct the reproduction interval as such:
    \begin{align*}
    &\qquad 1-\hat{\alpha}^{\text{RI}}_{n,b}(\bx)\\
    &=\frac{1}{n}\sum_{i=1}^n\mathbbm{1}\Big[\widehat \f_b^{(2)}(\bx)\in \big[\widehat{\f}_b^{(1)}(\bx)-\sqrt{2}(c^D\norm{\r_n^D})z_{1-\alpha/2}\sigma,\widehat{\f}_b^{(1)}(\bx)+\sqrt{2}(c^D\norm{\r_n^D})z_{1-\alpha/2}\sigma\big]\Big]
    \end{align*}
    And by a similar argument we can prove almost sure convergence.
\end{proof}

\subsection{Proof for Corollary \ref{cor:non-asymptotic-risk}}
\begin{proof}
    To give a PAC argument, we decompose our error into statistical error $\widehat \f_*(\bx)-\f(\bx)$ and algorithmic error $\widehat{\f}_b(\bx)-\widehat{\f}_*(\bx)$.

    From the Theorem \ref{thm:main},
    \(
    \Pr\bigl(|\widehat f_*(\bx)-f(\bx)|>\varepsilon\bigr)
          \le
          2\exp\!\bigl(-\tfrac{\varepsilon^{2}}
                             {2(c^{D,P})^{2}}\,n^{1/(d+1)}\bigr).
    \)
    Under the stated lower bound on $n$ this probability is $\le\delta/2$.
    
    To quantify the algorithmic error, we wish to make use of the Komolgorov inequality in Theorem \ref{thm:stochastic-contraction-mapping}. 
    
    For $\forall \bx \in [0,1]^d$, let $\widehat{\f}_b(\bx)$ be the stage‑$b$ predictor, let $\widehat{\f}_*(\bx)$ be the fixed point defined in Theorem \ref{thm:krrconv}, and write
    \[
    \bz_b(\bx)=\widehat \f_b(\bx)-\widehat \f_*(\bx),\qquad 
    \epsilon_b=\bz_b-\mathbb E[\bz_b\mid\mathcal F_{b-1}],
    \]
    It holds that the process $\bz_b(\bx)$ is eventually also a stochastic contraction mapping by the exact same argument we made in proving Theorem \ref{thm:krrconv}, only replacing $\bK_n$ with $\k_n(\bx)$ which still have row sum smaller than 1. And even we only have in probability guarantee of the staged mean contraction condition hold, since we are doing truncations and we can get rid off it after sufficiently large $b$, we can still check out the mean contraction condition $\sum_{b=1}^\infty(1-\lambda_b)=\infty$. Hence we invoke the Kolmogorov Inequality given in \ref{thm:stochastic-contraction-mapping}.

    Consider the decomposition:
    \[
    \Pr\!\bigl(\sup_{t\ge b}|\bz_t|>\varepsilon\bigr)
    \leq
    \Pr\bigl(|\bz_b|>\tfrac{\varepsilon}{2}\bigr)
    +
    \Pr\bigl(\sup_{t\ge b}|\bz_t|>|\bz_b|+\tfrac{\varepsilon}{2}\bigr)
    \]

    Because the SCM conditions are satisfied from $b$ onward and
    $
    \sum_{t>b}\mathbb E[\epsilon_t^{2}]\le C^{2}/b,
    $
    , where $C=2\lambda M(1+\sqrt{n})$. Theorem \ref{thm:stochastic-contraction-mapping} gives
    \[
    \Pr\bigl(\sup_{t\ge b}|\bz_t|>|\bz_b|+\tfrac{\varepsilon}{2}\bigr)
    \le
    \frac{4\cdot1\cdot (C^{2}/b)}{(\varepsilon/2)^{2}}
    \;=\;
    \frac{16C^{2}}{\varepsilon^{2}b}.
    \]

    Since $|\epsilon_k|\le C$,  
    \(\displaystyle V_b=\sum_{k=1}^{b}\E[\epsilon_k^{2}]
              \le C^{2}\pi^{2}/6<2C^{2}\).
    
    Set the martingale sum \(S_b=\sum_{k=1}^{b}\epsilon_k\).
    Because $|\bz_t|\le|\bz_{t-1}|+|\epsilon_t|$, telescoping gives
    \[
    |\bz_b|
    \;\le\;
    |\bz_{0}|+|S_b|
    \;\le\;
    M+|S_b|.
    \]
    Thus the event $\{|\bz_b|>\varepsilon/2\}$ implies
    \(
    |S_b|>(\varepsilon-2M)/2.
    \)
    
    Freedman’s maximal inequality for increments bounded by $C$ and 
    quadratic variation $\le2C^{2}$ yields
    \[
    \Pr\!\Bigl(|\bz_b|>\tfrac{\varepsilon}{2}\Bigr)
    \;\le\;
    \Pr\!\Bigl(|S_b|>\tfrac{\varepsilon-2M}{2}\Bigr)
    \;\le\;
    2\exp\!\Bigl(
          -\frac{(\varepsilon-2M)_{+}^{2}}{16\,C^{2}}
        \Bigr).
    \]
    Then the non-asymptotic algorithmic error bound:
    \[
    \Pr\!\Bigl(|\bz_b(\bx)|>\varepsilon\Bigr)
    \;\le\;
    2\exp\!\Bigl(
          -\frac{(\varepsilon-4M)_{+}^{2}}{16\,C^{2}}
        \Bigr)
    \;+\;
    \frac{16\,C^{2}}{\varepsilon^{2}b},
    \quad
    C=2\lambda M(1+\sqrt n).
    \]
    With the stated lower bound on $b$ this is $\le\delta/2$.
\end{proof}


\section{Miscellanious Lemmas}

\begin{thm}[Kolmogorov's Extension Theorem]
\label{thm:komolgorov-extension}
	Define the random variables:
	\begin{itemize}
		\item $\X =(\bx_1,\bx_2,...)\in [0,1]^{d\times \mathbb{N}}$
		\item $\vec{\epsilon}=(\epsilon_1,\epsilon_2,\cdots) \in \mathbb{R}^{\mathbb{N}}$
	\end{itemize}
	\textbf{The probability measure on $[0,1]^{d\times \mathbb{N}}$ and $\mathbb{R}^{\mathbb{N}}$ are uniquely determined by their product measures on the cylinder spaces}, reflecting the i.i.d. sampling of $\X \text{ and }\vec{\epsilon}$, such that 
	\[
	\by_i=\f(\bx_i)+\epsilon_{i}, \quad i\in \mathbb{N}
	\]
\end{thm}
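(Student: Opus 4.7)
The plan is to construct the two infinite-product probability spaces separately and then verify that an appropriate consistent family of finite-dimensional distributions exists, so that Kolmogorov's classical extension theorem delivers a unique probability measure. First I would fix the base probability spaces: $[0,1]^d$ equipped with the Borel $\sigma$-algebra and the marginal law $\mu$ of $\bx$, and $\R$ equipped with the Borel $\sigma$-algebra and the (sub-Gaussian) law $\nu$ of $\epsilon$. Both are Polish, which is the key regularity hypothesis for Kolmogorov's extension theorem.

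Next, for each finite $n \in \mathbb{N}$, I would define the cylinder measure $\mu_n$ on $([0,1]^d)^n$ as the $n$-fold product $\mu^{\otimes n}$, and similarly $\nu_n := \nu^{\otimes n}$ on $\R^n$. Because these are literal product measures, the Kolmogorov consistency (compatibility) conditions are automatic: for any $m < n$, if $\pi_{n,m}$ denotes the canonical projection onto the first $m$ coordinates, then $\mu_n \circ \pi_{n,m}^{-1} = \mu_m$ and similarly for $\nu$. Invoking Kolmogorov's extension theorem on each factor separately yields unique probability measures $\P_{\X}$ on $([0,1]^d)^{\mathbb{N}}$ and $\P_{\vec{\epsilon}}$ on $\R^{\mathbb{N}}$, each agreeing with the prescribed product measure on every cylinder set. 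The joint measure on $([0,1]^d)^{\mathbb{N}} \times \R^{\mathbb{N}}$ is then obtained as the product $\P_{\X} \otimes \P_{\vec{\epsilon}}$, encoding the independence of features and noise.

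Finally, I would define the response sequence pointwise by $\by_i(\omega) := \f(\bx_i(\omega)) + \epsilon_i(\omega)$ for each $i \in \mathbb{N}$. Since $\f$ is assumed measurable (indeed Lipschitz in Assumption \ref{aspt:nonparametric-regression}) and addition is continuous, each $\by_i$ is a measurable function of the coordinate projections, hence a random variable on the constructed probability space. Uniqueness of the joint law follows from the $\pi$-$\lambda$ theorem: the class of cylinder sets is a $\pi$-system generating the product $\sigma$-algebra, and any two probability measures agreeing on this $\pi$-system must coincide.

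The only genuine subtlety is that Kolmogorov's extension theorem requires the underlying marginal spaces to be ``nice'' (Polish, or at least satisfy a compact-inner-regularity condition); this is satisfied here because $[0,1]^d$ is compact metric and $\R$ is Polish. I do not anticipate any real obstacle beyond confirming this regularity and checking the (automatic) compatibility of the product measures under projection.
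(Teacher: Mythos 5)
Your construction is correct and is the standard argument: product measures on cylinder sets, automatic Kolmogorov consistency, Polish-space regularity of $[0,1]^d$ and $\R$, independence via the product of the two extension measures, measurability of $\by_i = \f(\bx_i)+\epsilon_i$, and uniqueness from the $\pi$-$\lambda$ theorem. The paper offers no proof of this statement at all -- it simply invokes the classical Kolmogorov extension theorem to justify the existence of the infinite-product probability space used in the random-design analysis -- so your proposal supplies exactly the routine verification the paper leaves implicit, and there is no gap.
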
	

\begin{lem}
    \label{lem:fixtorandom}
    Let \( X: \Omega_1 \to S \) and \( \epsilon: \Omega_2 \to S \) be independent random variables. Assume \( \{f_n: S \times S \to \mathbb{R} \} \) is a sequence of measurable functions. Suppose that for almost surely \( x \in \Omega_1 \) and \( \epsilon \in \Omega_2 \),
    \[
    f_n(x, \epsilon) \xrightarrow{d} N(0,1).
    \]
    Then:
    \[
    f_n(X, \epsilon) \xrightarrow{d} N(0,1).
    \]
    Or equivalently 
    \[
    \lim\limits_{n \to \infty} \mathbb{P}(f_n(X,\vec{\epsilon}))=\Phi(t)
    \]
\end{lem}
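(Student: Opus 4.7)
The plan is to reduce the lemma to a pointwise limit under an integral via independence (Fubini) and then pass the limit inside with dominated convergence. Throughout, I will work with the cumulative distribution function rather than with $f_n$ directly, because indicator functions of the form $\mathbbm{1}\{f_n(x,\epsilon)\leq t\}$ are bounded by $1$, giving an integrable dominating function for free.

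First I would fix a continuity point $t$ of $\Phi$ (in this case every real $t$). Using that $X$ and $\epsilon$ are independent and defined on the product space, I would write
$$
\mathbb{P}\bigl(f_n(X,\epsilon)\le t\bigr)
\;=\;\int_{\Omega_1}\mathbb{P}_{\epsilon}\bigl(f_n(x,\epsilon)\le t\bigr)\,dP_X(x)
\;=\;\int_{\Omega_1} G_n(x,t)\,dP_X(x),
$$
where $G_n(x,t):=\mathbb{P}_{\epsilon}(f_n(x,\epsilon)\le t)$. This step uses only measurability of $f_n$ and independence of $X$ and $\epsilon$; Fubini–Tonelli applies because the integrand $\mathbbm{1}\{f_n(x,\epsilon)\le t\}$ is nonnegative and bounded by $1$.

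Second, by the hypothesis, for $P_X$-almost every fixed $x$ the sequence $f_n(x,\epsilon)\xrightarrow{d}\mathcal{N}(0,1)$ as random variables on $\Omega_2$. Since $\Phi$ is continuous, convergence in distribution gives convergence of CDFs at every $t$, so $G_n(x,t)\to\Phi(t)$ for $P_X$-a.e.\ $x$. Because $0\le G_n(x,t)\le 1$ and the constant $1$ is $P_X$-integrable, the dominated convergence theorem yields
$$
\lim_{n\to\infty}\int_{\Omega_1} G_n(x,t)\,dP_X(x)
\;=\;\int_{\Omega_1}\Phi(t)\,dP_X(x)
\;=\;\Phi(t).
$$
Combining with the Fubini identity above gives $\mathbb{P}(f_n(X,\epsilon)\le t)\to\Phi(t)$ for every $t$, which is exactly $f_n(X,\epsilon)\xrightarrow{d}\mathcal{N}(0,1)$.

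The only subtlety, and arguably the main obstacle, is a measurability check: one must verify that $x\mapsto G_n(x,t)$ is $P_X$-measurable so that the Fubini step and the invocation of dominated convergence are legitimate. This follows from measurability of $f_n$ on the product space together with Tonelli's theorem applied to the nonnegative kernel $(x,\omega_2)\mapsto\mathbbm{1}\{f_n(x,\epsilon(\omega_2))\le t\}$. A secondary technical point is interpreting the hypothesis ``for almost surely $x$ and $\epsilon$'' as ``for $P_X$-a.e.\ $x$, $f_n(x,\epsilon)\xrightarrow{d}\mathcal{N}(0,1)$,'' which is the form actually used in Lemma~\ref{lem:random}; with that reading the argument above is complete.
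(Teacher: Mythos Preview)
Your argument is correct and follows essentially the same route as the paper: write $\mathbb{P}(f_n(X,\epsilon)\le t)$ as an iterated integral via Fubini (using independence and the bounded nonnegative indicator), then interchange limit and integral by dominated convergence using the uniform bound $0\le G_n(x,t)\le 1$. Your additional remarks on measurability and on reading the hypothesis as ``for $P_X$-a.e.\ $x$'' are helpful clarifications that the paper leaves implicit.
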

\begin{proof}
    \begin{align}
        \lim_n \mathbb{P}(f_n(X, \epsilon) \leq t) 
        &= \lim_n  \int \int \mathbbm{1}_{\{f_n(x, \epsilon) \leq t\}} \, d\mu_x \, d\mu_{\epsilon} \\
        &= \lim_n  \int \mathbb{P}(f_n(x, \epsilon) \leq t) \, d\mu_x\\
        &= \int  \lim_n \mathbb{P} (f_n(x, \epsilon) \leq t) \, d\mu_x\\
        &= \int \Phi(t) \, d\mu_x = \Phi(t).
    \end{align}
    (8) is justified by the Fubini theorem since the indicator function is always non-negative. And (9) is guaranteed by Dominated Convergence Theorem since the integrand is always bounded (with assumption of it being a sub-Gaussian density of $\epsilon$).
\end{proof}

\newpage
\section{Mean Squared Error on UCI Machine Learning Repository}
\label{app:mse}
We provide further results for the MSE of our methods, benchmarks, and competitors on six additional UCI datasets. The dataset we are using in this work are from: \cite{abalone_1}, \cite{air_quality_360}, \cite{automobile_10}, \cite{estimation_of_obesity_levels_based_on_eating_habits_and_physical_condition__544}, \cite{student_performance_320}, \cite{wine_quality_186}, \cite{communities_and_crime_183}.

\begin{figure}[H]
    \centering
    \includegraphics[width=1.0\linewidth]{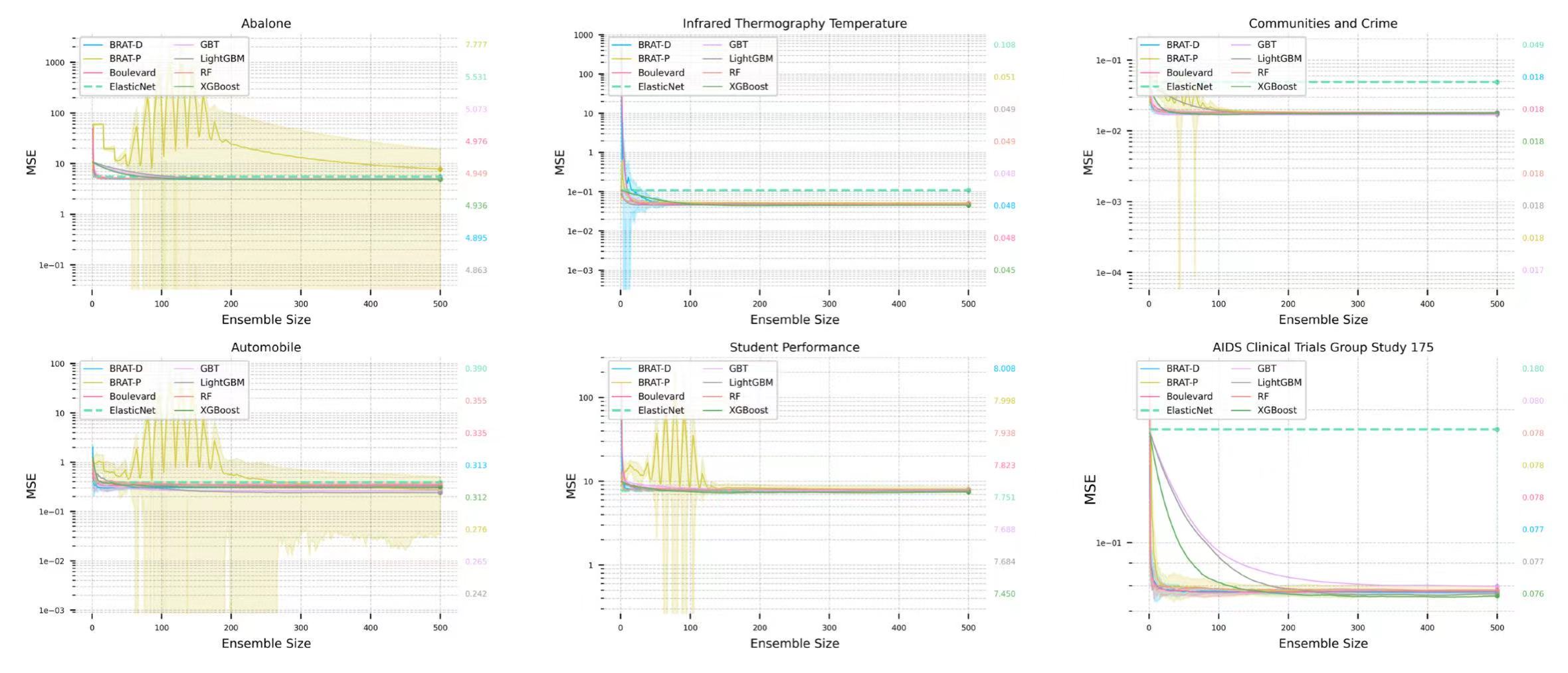}
    \caption{Mean Squared Error on UCI Machine Learning Repositories}
    \label{fig:additional-mse}
\end{figure}

The hyperparameters chosen by Optuna are listed below:
\begin{table}[H]
  \centering\small
  \caption{Abalone}
  \label{tab:abalone}
  \begin{tabular}{@{}lcccccccc@{}}
    \toprule
    Hyperparameter               & GBT    & XGBoost & LightGBM & RF     & BRAT-D & Boulevard & BRAT-P & ElasticNet \\
    \midrule
    \texttt{n\_estimators}       & 500    & 500     & 500      & 500    & 500    & 500       & 500    & —          \\
    \texttt{learning\_rate}      & 0.0213 & 0.0249  & 0.0108   & —      & 0.5476 & 0.5750    & 0.2724 & —          \\
    \texttt{max\_depth}          & 3      & 3       & 9        & 11     & 10     & 16        & 3      & —          \\
    \texttt{subsample}           & —      & 0.6332  & —        & —      & —      & —         & —      & —          \\
    \texttt{num\_leaves}         & —      & —       & 37       & —      & —      & —         & —      & —          \\
    \texttt{alpha}               & —      & —       & —        & —      & —      & —         & —      & 0.1299     \\
    \texttt{l1\_ratio}           & —      & —       & —        & —      & —      & —         & —      & 0.3753     \\
    \texttt{min\_samples\_split} & —      & —       & —        & —      & 43     & 42        & 5     & —          \\
    \texttt{subsample\_rate}     & —      & —       & —        & —      & 0.5071 & 0.5133    & 0.6266 & —          \\
    \texttt{dropout\_rate}       & —      & —       & —        & —      & 0.4766 & 0.0000    & —      & —          \\
    \texttt{n\_trees\_per\_group}& —      & —       & —        & —      & —      & —         & 16      & —          \\
    \bottomrule
  \end{tabular}
\end{table}

\begin{table}[H]
  \centering\small
  \caption{Automobile}
  \label{tab:automobile}
  \begin{tabular}{@{}lcccccccc@{}}
    \toprule
    Hyperparameter               & GBT    & XGBoost & LightGBM & RF     & BRAT-D & Boulevard & BRAT-P & ElasticNet \\
    \midrule
    \texttt{n\_estimators}       & 500    & 500     & 500      & 500    & 500    & 500       & 500    & —          \\
    \texttt{learning\_rate}      & 0.2019 & 0.2153  & 0.2341   & —      & 0.9411 & 0.1416    & 0.5671 & —          \\
    \texttt{max\_depth}          & 3      & 3       & 8        & 14     & 6      & 11        & 3     & —          \\
    \texttt{subsample}           & —      & 0.9850  & —        & —      & —      & —         & —      & —          \\
    \texttt{num\_leaves}         & —      & —       & 37       & —      & —      & —         & —      & —          \\
    \texttt{alpha}               & —      & —       & —        & —      & —      & —         & —      & 0.3634     \\
    \texttt{l1\_ratio}           & —      & —       & —        & —      & —      & —         & —      & 0.1386     \\
    \texttt{min\_samples\_split} & —      & —       & —        & —      & 23     & 9         & 30     & —          \\
    \texttt{subsample\_rate}     & —      & —       & —        & —      & 0.9368 & 0.9899    & 0.7344 & —          \\
    \texttt{dropout\_rate}       & —      & —       & —        & —      & 0.2145 & 0.0000    & —      & —          \\
    \texttt{n\_trees\_per\_group}& —      & —       & —        & —      & —      & —         & 16     & —          \\
    \bottomrule
  \end{tabular}
\end{table}

\begin{table}[H]
  \centering\small
  \caption{Communities and Crime}
  \label{tab:com-crime}
  \begin{tabular}{@{}lcccccccc@{}}
    \toprule
    Hyperparameter               & GBT    & XGBoost & LightGBM & RF     & BRAT-D & Boulevard & BRAT-P & ElasticNet \\
    \midrule
    \texttt{n\_estimators}       & 500    & 500     & 500      & 500    & 500    & 500       & 500    & —          \\
    \texttt{learning\_rate}      & 0.0224 & 0.0379  & 0.0215   & —      & 0.6904 & 0.5772    & 0.3578 & —          \\
    \texttt{max\_depth}          & 6      & 3       & 5        & 16     & 15     & 16        & 4     & —          \\
    \texttt{subsample}           & —      & 0.5375  & —        & —      & —      & —         & —      & —          \\
    \texttt{num\_leaves}         & —      & —       & 20       & —      & —      & —         & —      & —          \\
    \texttt{alpha}               & —      & —       & —        & —      & —      & —         & —      & 3.8079     \\
    \texttt{l1\_ratio}           & —      & —       & —        & —      & —      & —         & —      & 0.9556     \\
    \texttt{min\_samples\_split} & —      & —       & —        & —      & 35     & 49        & 5     & —          \\
    \texttt{subsample\_rate}     & —      & —       & —        & —      & 0.7385 & 0.7209    & 0.5148 & —          \\
    \texttt{dropout\_rate}       & —      & —       & —        & —      & 0.1903 & 0.0000    & —      & —          \\
    \texttt{n\_trees\_per\_group}& —      & —       & —        & —      & —      & —         & 11      & —          \\
    \bottomrule
  \end{tabular}
\end{table}

\begin{table}[H]
  \centering\small
  \caption{Wine Quality}
  \label{tab:wine-quality}
  \begin{tabular}{@{}lcccccccc@{}}
    \toprule
    Hyperparameter               & GBT    & XGBoost & LightGBM & RF     & BRAT-D & Boulevard & BRAT-P & ElasticNet \\
    \midrule
    \texttt{n\_estimators}       & 500    & 500     & 500      & 500    & 500    & 500       & 500    & —          \\
    \texttt{learning\_rate}      & 0.0731 & 0.1139  & 0.1186   & —      & 0.8396 & 0.8905    & 0.8200 & —          \\
    \texttt{max\_depth}          & 10     & 10      & 16       & 20     & 15     & 15        & 16     & —          \\
    \texttt{subsample}           & —      & 0.6980  & —        & —      & —      & —         & —      & —          \\
    \texttt{num\_leaves}         & —      & —       & 42       & —      & —      & —         & —      & —          \\
    \texttt{alpha}               & —      & —       & —        & —      & —      & —         & —      & 3.8079     \\
    \texttt{l1\_ratio}           & —      & —       & —        & —      & —      & —         & —      & 0.9556     \\
    \texttt{min\_samples\_split} & —      & —       & —        & —      & 4      & 2         & 13     & —          \\
    \texttt{subsample\_rate}     & —      & —       & —        & —      & 0.7317 & 0.7431    & 0.6669 & —          \\
    \texttt{dropout\_rate}       & —      & —       & —        & —      & 0.2762 & 0.0000    & —      & —          \\
    \texttt{n\_trees\_per\_group}& —      & —       & —        & —      & —      & —         & 10     & —          \\
    \bottomrule
  \end{tabular}
\end{table}

\begin{table}[H]
  \centering\small
  \caption{Student Performance}
  \label{tab:student}
  \begin{tabular}{@{}lcccccccc@{}}
    \toprule
    Hyperparameter               & GBT    & XGBoost & LightGBM & RF     & BRAT-D & Boulevard & BRAT-P & ElasticNet \\
    \midrule
    \texttt{n\_estimators}       & 500    & 500     & 500      & 500    & 500    & 500       & 500    & —          \\
    \texttt{learning\_rate}      & 0.0177 & 0.1836  & 0.0218   & —      & 0.9256 & 0.8698    & 0.9824 & —          \\
    \texttt{max\_depth}          & 3      & 5       & 5        & 11     & 3      & 9         & 3     & —          \\
    \texttt{subsample}           & —      & 0.5780  & —        & —      & —      & —         & —      & —          \\
    \texttt{num\_leaves}         & —      & —       & 38       & —      & —      & —         & —      & —          \\
    \texttt{alpha}               & —      & —       & —        & —      & —      & —         & —      & 0.4855     \\
    \texttt{l1\_ratio}           & —      & —       & —        & —      & —      & —         & —      & 0.1373     \\
    \texttt{min\_samples\_split} & —      & —       & —        & —      & 39     & 32        & 48     & —          \\
    \texttt{subsample\_rate}     & —      & —       & —        & —      & 0.8418 & 0.5249    & 0.5964 & —          \\
    \texttt{dropout\_rate}       & —      & —       & —        & —      & 0.1230 & 0.0000    & —      & —          \\
    \texttt{n\_trees\_per\_group}& —      & —       & —        & —      & —      & —         & 13     & —          \\
    \bottomrule
  \end{tabular}
\end{table}

\begin{table}[H]
  \centering\small
  \caption{Obesity Level}
  \label{tab:obesity}
  \begin{tabular}{@{}lcccccccc@{}}
    \toprule
    Hyperparameter               & GBT    & XGBoost & LightGBM & RF     & BRAT-D & Boulevard & BRAT-P & ElasticNet \\
    \midrule
    \texttt{n\_estimators}       & 500    & 500     & 500      & 500    & 500    & 500       & 500    & —          \\
    \texttt{learning\_rate}      & 0.1905 & 0.0916  & 0.0834   & —      & 0.2487 & 0.5382    & 0.3643 & —          \\
    \texttt{max\_depth}          & 7      & 7       & 16       & 16     & 16     & 16        & 11     & —          \\
    \texttt{subsample}           & —      & 0.8121  & —        & —      & —      & —         & —      & —          \\
    \texttt{num\_leaves}         & —      & —       & 25       & —      & —      & —         & —      & —          \\
    \texttt{alpha}               & —      & —       & —        & —      & —      & —         & —      & 0.2182     \\
    \texttt{l1\_ratio}           & —      & —       & —        & —      & —      & —         & —      & 0.1200     \\
    \texttt{min\_samples\_split} & —      & —       & —        & —      & 3      & 3         & 4     & —          \\
    \texttt{subsample\_rate}     & —      & —       & —        & —      & 0.6466 & 0.7153    & 0.5595 & —          \\
    \texttt{dropout\_rate}       & —      & —       & —        & —      & 0.8079 & 0.0000    & —      & —          \\
    \texttt{n\_trees\_per\_group}& —      & —       & —        & —      & —      & —         & 4     & —          \\
    \bottomrule
  \end{tabular}
\end{table}

\begin{table}[H]
  \centering\small
  \caption{AIDS Clinical Trials Group Study 175}
  \label{tab:aids}
  \begin{tabular}{@{}lcccccccc@{}}
    \toprule
    Hyperparameter               & GBT    & XGBoost & LightGBM & RF     & BRAT-D & Boulevard & BRAT-P & ElasticNet \\
    \midrule
    \texttt{n\_estimators}       & 500    & 500     & 500      & 500    & 500    & 500       & 500    & —          \\
    \texttt{learning\_rate}      & 0.0177 & 0.0226  & 0.0108   & —      & 0.8519 & 0.9137    & 0.3010 & —          \\
    \texttt{max\_depth}          & 3      & 3       & 9        & 5      & 4      & 4         & 4     & —          \\
    \texttt{subsample}           & —      & 0.7530  & —        & —      & —      & —         & —      & —          \\
    \texttt{num\_leaves}         & —      & —       & 37       & —      & —      & —         & —      & —          \\
    \texttt{alpha}               & —      & —       & —        & —      & —      & —         & —      & 3.8079     \\
    \texttt{l1\_ratio}           & —      & —       & —        & —      & —      & —         & —      & 0.9556     \\
    \texttt{min\_samples\_split} & —      & —       & —        & —      & 12     & 27        & 46     & —          \\
    \texttt{subsample\_rate}     & —      & —       & —        & —      & 0.5335 & 0.7374    & 0.5342 & —          \\
    \texttt{dropout\_rate}       & —      & —       & —        & —      & 0.3421 & 0.0000    & —      & —          \\
    \texttt{n\_trees\_per\_group}& —      & —       & —        & —      & —      & —         & 3      & —          \\
    \bottomrule
  \end{tabular}
\end{table}

\begin{table}[H]
  \centering\small
  \caption{Infrared Thermography Temperature}
  \label{tab:temperature}
  \begin{tabular}{@{}lcccccccc@{}}
    \toprule
    Hyperparameter               & GBT    & XGBoost & LightGBM & RF     & BRAT-D & Boulevard & BRAT-P & ElasticNet \\
    \midrule
    \texttt{n\_estimators}       & 500    & 500     & 500      & 500    & 500    & 500       & 500    & —          \\
    \texttt{learning\_rate}      & 0.0289 & 0.0167  & 0.1836   & —      & 0.7616 & 0.6558    & 0.9864 & —          \\
    \texttt{max\_depth}          & 5      & 3       & 1        & 20     & 3      & 3         & 3     & —          \\
    \texttt{subsample}           & —      & 0.6988  & —        & —      & —      & —         & —      & —          \\
    \texttt{num\_leaves}         & —      & —       & 24       & —      & —      & —         & —      & —          \\
    \texttt{alpha}               & —      & —       & —        & —      & —      & —         & —      & 3.8079     \\
    \texttt{l1\_ratio}           & —      & —       & —        & —      & —      & —         & —      & 0.9556     \\
    \texttt{min\_samples\_split} & —      & —       & —        & —      & 4      & 4         & 9     & —          \\
    \texttt{subsample\_rate}     & —      & —       & —        & —      & 0.5055 & 0.5438    & 0.6371 & —          \\
    \texttt{dropout\_rate}       & —      & —       & —        & —      & 0.2395 & 0.0000    & —      & —          \\
    \texttt{n\_trees\_per\_group}& —      & —       & —        & —      & —      & —         & 2     & —          \\
    \bottomrule
  \end{tabular}
\end{table}
The learning rate for BRAT-P is only valid in the first vanilla boosting round. After that, all trees are fixed with learning rate 1.

\newpage
\section{Rainclouds for Interval Coverage}
\label{app:rain-clouds}
Following the discussion in Section \ref{sec:coverage-experiments}, we present more raincloud plots below:
\begin{figure}[H]
    \centering
    \includegraphics[width=1.0\linewidth]{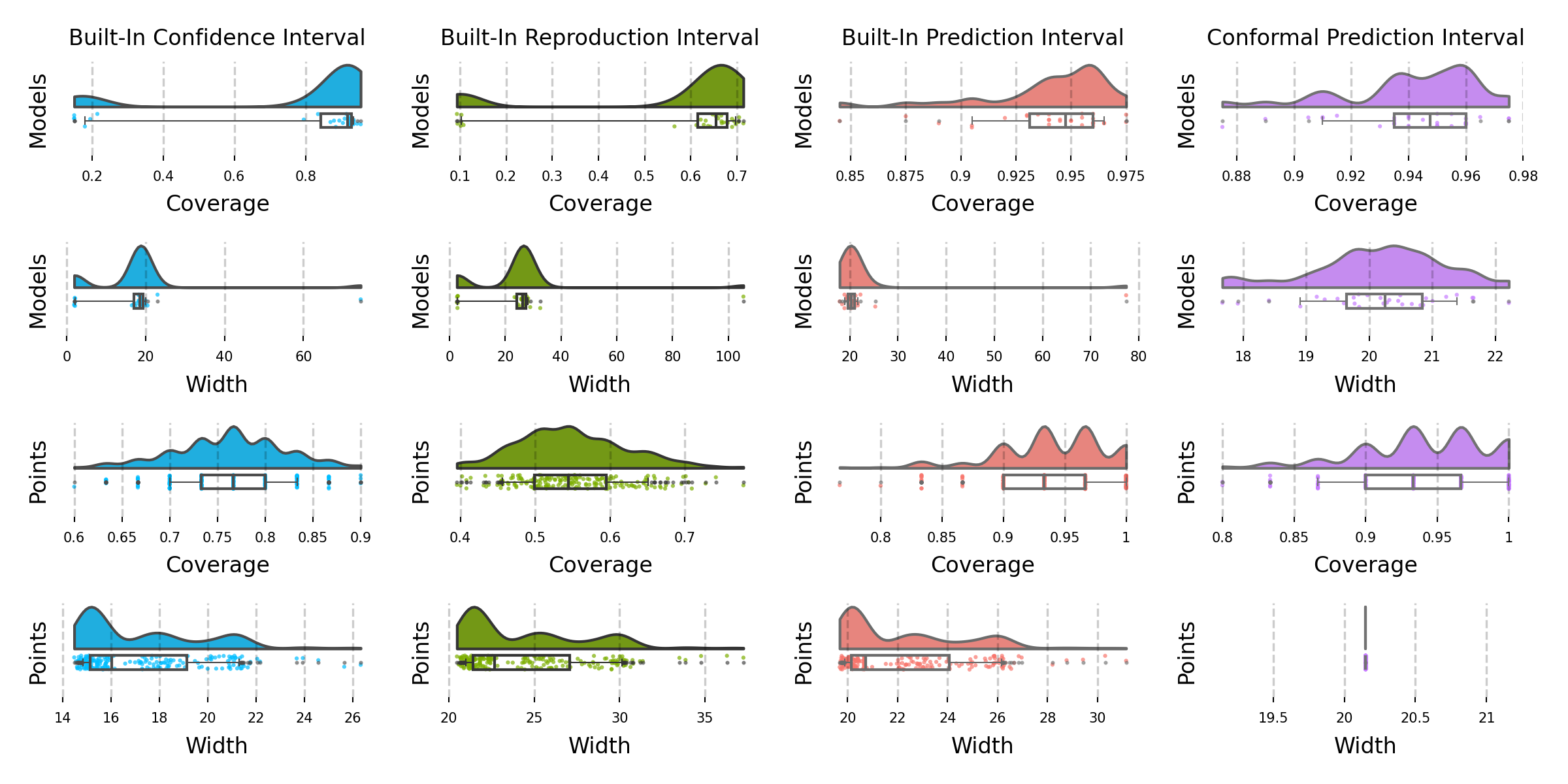}
    \caption{Moderate coverage, where there is neither overfitting nor underfitting.}
    \subcaption{Test size: 200; Model Replications: 30; Ensemble Size: 200; Learning Rate: 0.6; Subsampling Rate: 0.8; Dropout Rate: 0.3; Maximum Depth: 4; Nystrom subsampling rate: 0.1.}
    \label{fig:moderate-fit}
\end{figure}

\begin{figure}[H]
    \centering
    \includegraphics[width=1.0\linewidth]{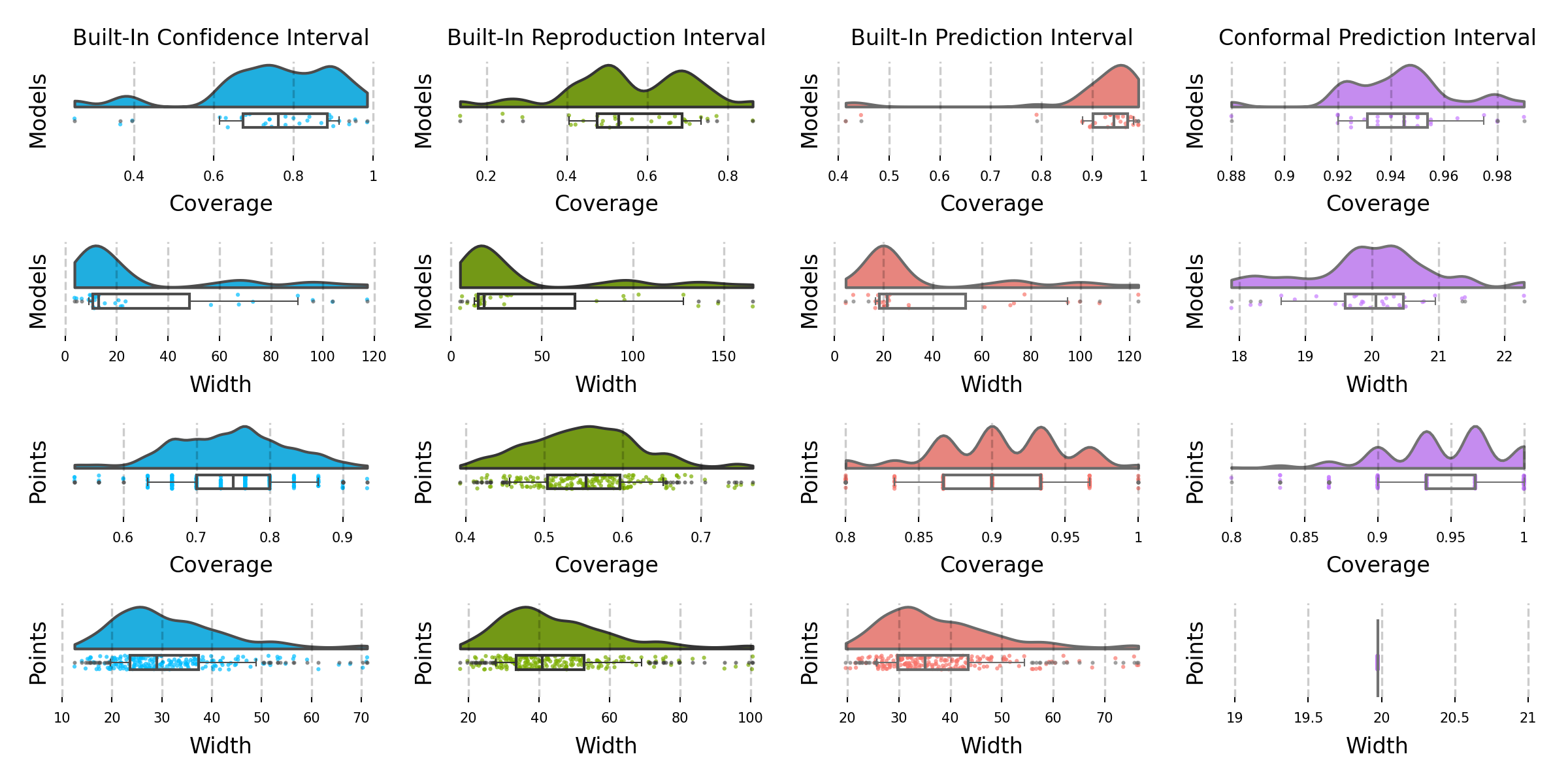}
    \caption{Wide coverage, when there is overfitting.}
    \subcaption{Test size: 200; Model Replications: 30; Ensemble Size: 200; Learning Rate: 0.3; Subsampling Rate: 0.8; Dropout Rate: 0.3; Maximum Depth: 6; Nystrom subsampling rate: 0.1.}
    \label{fig:over-fit}
\end{figure}

\begin{figure}[H]
    \centering
    \includegraphics[width=1.0\linewidth]{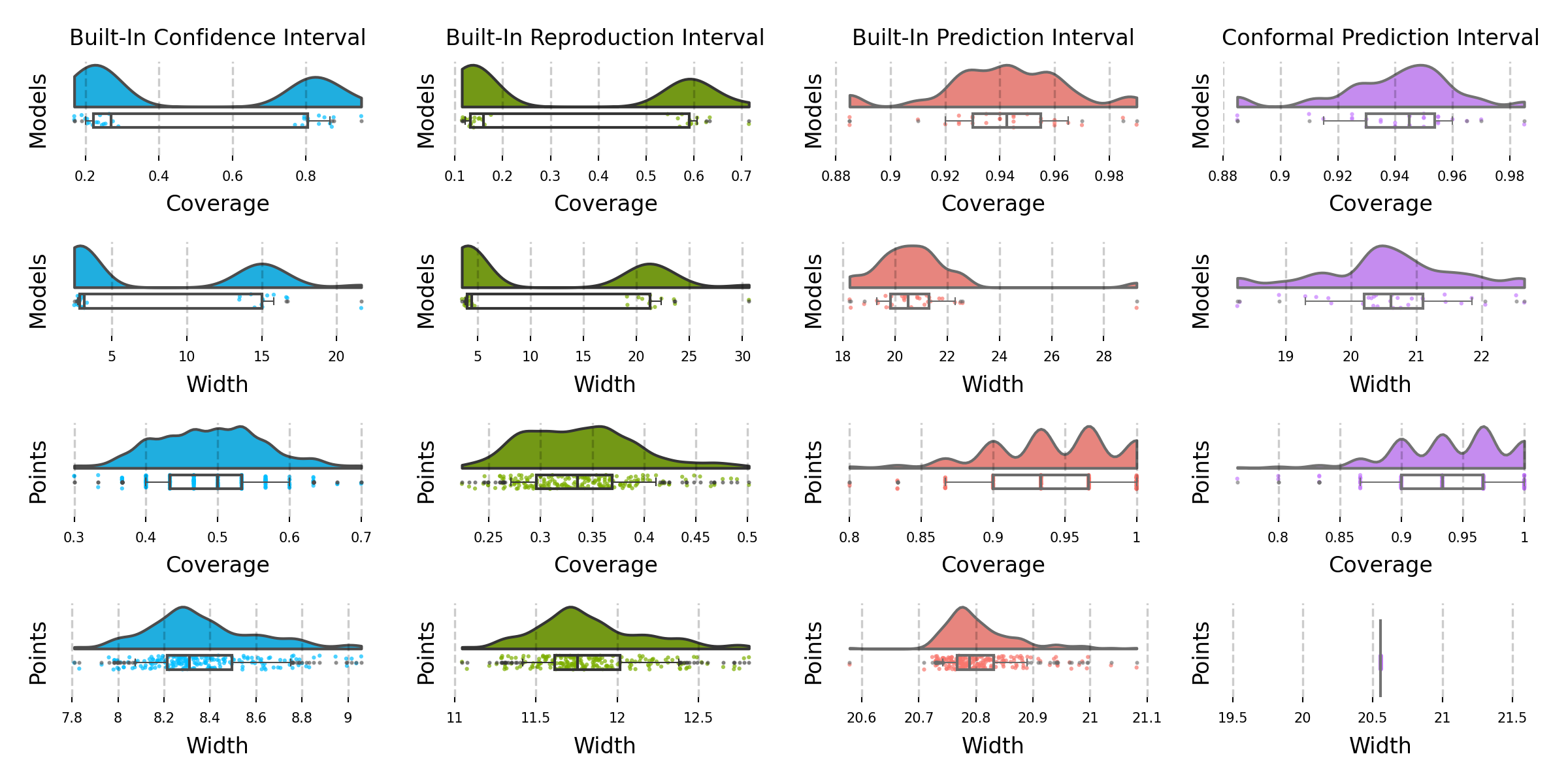}
    \caption{Narrow coverage, when there is underfitting.}
    \subcaption{Test size: 200; Model Replications: 30; Ensemble Size: 200; Learning Rate: 1.0; Subsampling Rate: 0.8; Dropout Rate: 0.9; Maximum Depth: 4; Nystrom subsampling rate: 0.1.}
    \label{fig:under-fit}
\end{figure}

\newpage
\section{Visualizations of Interval Coverage on a 1D Signal}
\label{app:1d-interval}
As an extension of Section \ref{sec:coverage-experiments}, we display more visualizations of our intervals on a 1D interval.
\begin{figure}[H]
    \centering
    \includegraphics[width=1.0\linewidth]{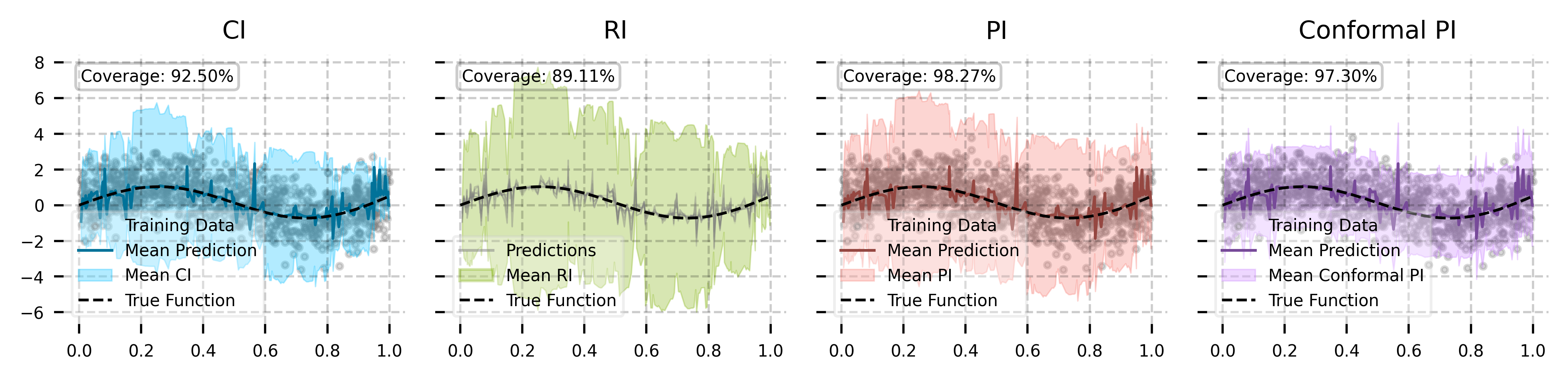}
    \caption{Moderate coverage, neither overfitting nor underfitting.}
    \subcaption{Ensemble Size: 300; Learning Rate: 1.0; Maximum Depth: 8; Subsampling Rate: 0.9; Dropout rate: 0.6}
    \label{fig:1d-moderate}
\end{figure}

\begin{figure}[H]
    \centering
    \includegraphics[width=1.0\linewidth]{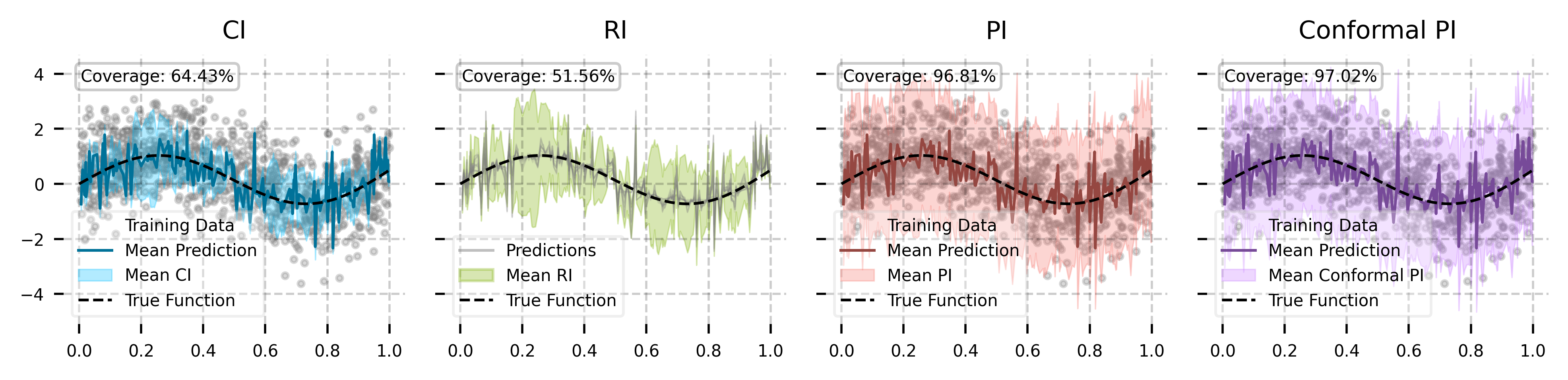}
    \caption{Low coverage, overfitting.}
    \subcaption{Ensemble Size: 100; Learning Rate: 0.3; Maximum Depth: 12; Subsampling Rate: 0.6; Dropout rate: 0.3}
    \label{fig:1d-over-fit}
\end{figure}

\begin{figure}[H]
    \centering
    \includegraphics[width=1.0\linewidth]{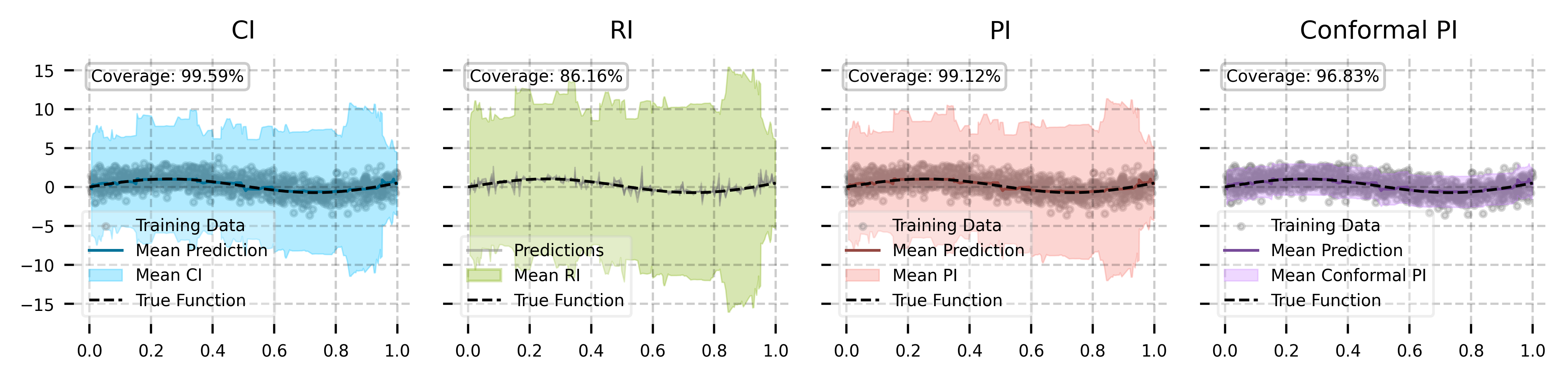}
    \caption{High coverage, underfitting.}
    \subcaption{Ensemble Size: 100; Learning Rate: 0.3; Maximum Depth: 4; Subsampling Rate: 0.6; Dropout rate: 0.6}
    \label{fig:1d-under-fit}
\end{figure}

\newpage
\section*{NeurIPS Paper Checklist}


\begin{enumerate}

\item {\bf Claims}
    \item[] Question: Do the main claims made in the abstract and introduction accurately reflect the paper's contributions and scope?
    \item[] Answer: \answerYes{} 
    \item[] Justification: The abstract and introduction state that we (1) propose both random-dropout and structured-dropout boosting algorithms, (2) derive their asymptotic normality via a CLT, and (3) establish finite-sample statistical rates. These claims correspond directly to algorithms definitions and CLT statement in Section \ref{sec:algorithms} (proof in Appendix \ref{app:random-clt}). Statistical rates are given in Section \ref{sec:statistics} (proofs in Appendix \ref{app:statistics}).  
    \item[] Guidelines:
    \begin{itemize}
        \item The answer NA means that the abstract and introduction do not include the claims made in the paper.
        \item The abstract and/or introduction should clearly state the claims made, including the contributions made in the paper and important assumptions and limitations. A No or NA answer to this question will not be perceived well by the reviewers. 
        \item The claims made should match theoretical and experimental results, and reflect how much the results can be expected to generalize to other settings. 
        \item It is fine to include aspirational goals as motivation as long as it is clear that these goals are not attained by the paper. 
    \end{itemize}

\item {\bf Limitations}
    \item[] Question: Does the paper discuss the limitations of the work performed by the authors?
    \item[] Answer: \answerYes{} 
    \item[] Justification: We explicitly acknowledge in Section \ref{sec:discussions} that our theoretical guarantees rest on strong assumptions (structure–value isolation, non-adaptivity, and tree regularity) which may not hold in practice, and we outline the need to relax these; we also note that our methods are currently limited to regression and discuss challenges in extending to classification, survival analysis, and structured outputs.
    \item[] Guidelines:
    \begin{itemize}
        \item The answer NA means that the paper has no limitation while the answer No means that the paper has limitations, but those are not discussed in the paper. 
        \item The authors are encouraged to create a separate "Limitations" section in their paper.
        \item The paper should point out any strong assumptions and how robust the results are to violations of these assumptions (e.g., independence assumptions, noiseless settings, model well-specification, asymptotic approximations only holding locally). The authors should reflect on how these assumptions might be violated in practice and what the implications would be.
        \item The authors should reflect on the scope of the claims made, e.g., if the approach was only tested on a few datasets or with a few runs. In general, empirical results often depend on implicit assumptions, which should be articulated.
        \item The authors should reflect on the factors that influence the performance of the approach. For example, a facial recognition algorithm may perform poorly when image resolution is low or images are taken in low lighting. Or a speech-to-text system might not be used reliably to provide closed captions for online lectures because it fails to handle technical jargon.
        \item The authors should discuss the computational efficiency of the proposed algorithms and how they scale with dataset size.
        \item If applicable, the authors should discuss possible limitations of their approach to address problems of privacy and fairness.
        \item While the authors might fear that complete honesty about limitations might be used by reviewers as grounds for rejection, a worse outcome might be that reviewers discover limitations that aren't acknowledged in the paper. The authors should use their best judgment and recognize that individual actions in favor of transparency play an important role in developing norms that preserve the integrity of the community. Reviewers will be specifically instructed to not penalize honesty concerning limitations.
    \end{itemize}

\item {\bf Theory assumptions and proofs}
    \item[] Question: For each theoretical result, does the paper provide the full set of assumptions and a complete (and correct) proof?
    \item[] Answer: \answerYes{} 
    \item[] Justification: The paper mainly introduced two modified boosting algorithms: Algorithm \ref{alg:random-dropout} and Algorithm \ref{alg:structured-dropout}. In Section \ref{sec:setup} and \ref{sec:theory} we introduced all assumptions needed for our following results to hold. Both algorithms are shown to converge to a fixed point in Appendix \ref{app:finite-sample-convergence}. And the corresponding derivations of CLTs are provided in Appendix \ref{app:random-clt}.
    \item[] Guidelines:
    \begin{itemize}
        \item The answer NA means that the paper does not include theoretical results. 
        \item All the theorems, formulas, and proofs in the paper should be numbered and cross-referenced.
        \item All assumptions should be clearly stated or referenced in the statement of any theorems.
        \item The proofs can either appear in the main paper or the supplemental material, but if they appear in the supplemental material, the authors are encouraged to provide a short proof sketch to provide intuition. 
        \item Inversely, any informal proof provided in the core of the paper should be complemented by formal proofs provided in appendix or supplemental material.
        \item Theorems and Lemmas that the proof relies upon should be properly referenced. 
    \end{itemize}

    \item {\bf Experimental result reproducibility}
    \item[] Question: Does the paper fully disclose all the information needed to reproduce the main experimental results of the paper to the extent that it affects the main claims and/or conclusions of the paper (regardless of whether the code and data are provided or not)?
    \item[] Answer: \answerYes{} 
    \item[] Justification: All parameters used to produce the presented results are either provided in Section \ref{sec:coverage-experiments}, or provided in the appendix.
    \item[] Guidelines:
    \begin{itemize}
        \item The answer NA means that the paper does not include experiments.
        \item If the paper includes experiments, a No answer to this question will not be perceived well by the reviewers: Making the paper reproducible is important, regardless of whether the code and data are provided or not.
        \item If the contribution is a dataset and/or model, the authors should describe the steps taken to make their results reproducible or verifiable. 
        \item Depending on the contribution, reproducibility can be accomplished in various ways. For example, if the contribution is a novel architecture, describing the architecture fully might suffice, or if the contribution is a specific model and empirical evaluation, it may be necessary to either make it possible for others to replicate the model with the same dataset, or provide access to the model. In general. releasing code and data is often one good way to accomplish this, but reproducibility can also be provided via detailed instructions for how to replicate the results, access to a hosted model (e.g., in the case of a large language model), releasing of a model checkpoint, or other means that are appropriate to the research performed.
        \item While NeurIPS does not require releasing code, the conference does require all submissions to provide some reasonable avenue for reproducibility, which may depend on the nature of the contribution. For example
        \begin{enumerate}
            \item If the contribution is primarily a new algorithm, the paper should make it clear how to reproduce that algorithm.
            \item If the contribution is primarily a new model architecture, the paper should describe the architecture clearly and fully.
            \item If the contribution is a new model (e.g., a large language model), then there should either be a way to access this model for reproducing the results or a way to reproduce the model (e.g., with an open-source dataset or instructions for how to construct the dataset).
            \item We recognize that reproducibility may be tricky in some cases, in which case authors are welcome to describe the particular way they provide for reproducibility. In the case of closed-source models, it may be that access to the model is limited in some way (e.g., to registered users), but it should be possible for other researchers to have some path to reproducing or verifying the results.
        \end{enumerate}
    \end{itemize}

\item {\bf Open access to data and code}
    \item[] Question: Does the paper provide open access to the data and code, with sufficient instructions to faithfully reproduce the main experimental results, as described in supplemental material?
    \item[] Answer: \answerYes{} 
    \item[] Justification: We include all data‐processing scripts, model training and evaluation code, and a README with exact commands and environment specifications in the supplementary materials. All UCI datasets used are public and instructions for download and preprocessing are provided.
    \item[] Guidelines:
    \begin{itemize}
        \item The answer NA means that paper does not include experiments requiring code.
        \item Please see the NeurIPS code and data submission guidelines (\url{https://nips.cc/public/guides/CodeSubmissionPolicy}) for more details.
        \item While we encourage the release of code and data, we understand that this might not be possible, so “No” is an acceptable answer. Papers cannot be rejected simply for not including code, unless this is central to the contribution (e.g., for a new open-source benchmark).
        \item The instructions should contain the exact command and environment needed to run to reproduce the results. See the NeurIPS code and data submission guidelines (\url{https://nips.cc/public/guides/CodeSubmissionPolicy}) for more details.
        \item The authors should provide instructions on data access and preparation, including how to access the raw data, preprocessed data, intermediate data, and generated data, etc.
        \item The authors should provide scripts to reproduce all experimental results for the new proposed method and baselines. If only a subset of experiments are reproducible, they should state which ones are omitted from the script and why.
        \item At submission time, to preserve anonymity, the authors should release anonymized versions (if applicable).
        \item Providing as much information as possible in supplemental material (appended to the paper) is recommended, but including URLs to data and code is permitted.
    \end{itemize}

\item {\bf Experimental setting/details}
    \item[] Question: Does the paper specify all the training and test details (e.g., data splits, hyperparameters, how they were chosen, type of optimizer, etc.) necessary to understand the results?
    \item[] Answer: \answerYes{} 
    \item[] Justification:  We describe our train/test splits, evaluation protocol, and Optuna‐based hyperparameter optimization in Section \ref{sec:experiments}, with full model settings tabulated in Appendix \ref{app:mse}.
    \item[] Guidelines:
    \begin{itemize}
        \item The answer NA means that the paper does not include experiments.
        \item The experimental setting should be presented in the core of the paper to a level of detail that is necessary to appreciate the results and make sense of them.
        \item The full details can be provided either with the code, in appendix, or as supplemental material.
    \end{itemize}

\item {\bf Experiment statistical significance}
    \item[] Question: Does the paper report error bars suitably and correctly defined or other appropriate information about the statistical significance of the experiments?
    \item[] Answer: \answerYes{} 
    \item[] Justification: In MSE races, we run models for 5 replications and fill the mean MSE trajectories $+/- 2\widehat{s.d.}$. In coverage rate simulations we provide kernel density estimations.
    \item[] Guidelines:
    \begin{itemize}
        \item The answer NA means that the paper does not include experiments.
        \item The authors should answer "Yes" if the results are accompanied by error bars, confidence intervals, or statistical significance tests, at least for the experiments that support the main claims of the paper.
        \item The factors of variability that the error bars are capturing should be clearly stated (for example, train/test split, initialization, random drawing of some parameter, or overall run with given experimental conditions).
        \item The method for calculating the error bars should be explained (closed form formula, call to a library function, bootstrap, etc.)
        \item The assumptions made should be given (e.g., Normally distributed errors).
        \item It should be clear whether the error bar is the standard deviation or the standard error of the mean.
        \item It is OK to report 1-sigma error bars, but one should state it. The authors should preferably report a 2-sigma error bar than state that they have a 96\% CI, if the hypothesis of Normality of errors is not verified.
        \item For asymmetric distributions, the authors should be careful not to show in tables or figures symmetric error bars that would yield results that are out of range (e.g. negative error rates).
        \item If error bars are reported in tables or plots, The authors should explain in the text how they were calculated and reference the corresponding figures or tables in the text.
    \end{itemize}

\item {\bf Experiments compute resources}
    \item[] Question: For each experiment, does the paper provide sufficient information on the computer resources (type of compute workers, memory, time of execution) needed to reproduce the experiments?
    \item[] Answer: \answerYes{} 
    \item[] Justification: All experiments were run on an Apple MacBook Pro (2022, Apple M2, 8 GB RAM); individual runs took under 2 hours and the full suite under 6 hours. These modest requirements mean the results can be reproduced on any modern laptop.
    \item[] Guidelines:
    \begin{itemize}
        \item The answer NA means that the paper does not include experiments.
        \item The paper should indicate the type of compute workers CPU or GPU, internal cluster, or cloud provider, including relevant memory and storage.
        \item The paper should provide the amount of compute required for each of the individual experimental runs as well as estimate the total compute. 
        \item The paper should disclose whether the full research project required more compute than the experiments reported in the paper (e.g., preliminary or failed experiments that didn't make it into the paper). 
    \end{itemize}
    
\item {\bf Code of ethics}
    \item[] Question: Does the research conducted in the paper conform, in every respect, with the NeurIPS Code of Ethics \url{https://neurips.cc/public/EthicsGuidelines}?
    \item[] Answer: \answerYes{} 
    \item[] Justification: All authors have reviewed the NeurIPS Code of Ethics and will preserve anonymity.
    \item[] Guidelines:
    \begin{itemize}
        \item The answer NA means that the authors have not reviewed the NeurIPS Code of Ethics.
        \item If the authors answer No, they should explain the special circumstances that require a deviation from the Code of Ethics.
        \item The authors should make sure to preserve anonymity (e.g., if there is a special consideration due to laws or regulations in their jurisdiction).
    \end{itemize}

\item {\bf Broader impacts}
    \item[] Question: Does the paper discuss both potential positive societal impacts and negative societal impacts of the work performed?
    \item[] Answer: \answerNA{}{} 
    \item[] Justification: This work develops statistical inference tools for gradient boosting in tabular regression—a foundational methodological advance without a specific application domain or direct deployment context. As such, there is no clear pathway to societal harms or benefits tied exclusively to our contributions.
    \item[] Guidelines:
    \begin{itemize}
        \item The answer NA means that there is no societal impact of the work performed.
        \item If the authors answer NA or No, they should explain why their work has no societal impact or why the paper does not address societal impact.
        \item Examples of negative societal impacts include potential malicious or unintended uses (e.g., disinformation, generating fake profiles, surveillance), fairness considerations (e.g., deployment of technologies that could make decisions that unfairly impact specific groups), privacy considerations, and security considerations.
        \item The conference expects that many papers will be foundational research and not tied to particular applications, let alone deployments. However, if there is a direct path to any negative applications, the authors should point it out. For example, it is legitimate to point out that an improvement in the quality of generative models could be used to generate deepfakes for disinformation. On the other hand, it is not needed to point out that a generic algorithm for optimizing neural networks could enable people to train models that generate Deepfakes faster.
        \item The authors should consider possible harms that could arise when the technology is being used as intended and functioning correctly, harms that could arise when the technology is being used as intended but gives incorrect results, and harms following from (intentional or unintentional) misuse of the technology.
        \item If there are negative societal impacts, the authors could also discuss possible mitigation strategies (e.g., gated release of models, providing defenses in addition to attacks, mechanisms for monitoring misuse, mechanisms to monitor how a system learns from feedback over time, improving the efficiency and accessibility of ML).
    \end{itemize}
    
\item {\bf Safeguards}
    \item[] Question: Does the paper describe safeguards that have been put in place for responsible release of data or models that have a high risk for misuse (e.g., pretrained language models, image generators, or scraped datasets)?
    \item[] Answer: \answerNA{} 
    \item[] Justification: Our work focuses on statistical inference for gradient boosting on tabular regression tasks, involving no pretrained networks, scraped media, or high-risk generative models, and thus does not pose significant misuse risks.

    \item[] Guidelines:
    \begin{itemize}
        \item The answer NA means that the paper poses no such risks.
        \item Released models that have a high risk for misuse or dual-use should be released with necessary safeguards to allow for controlled use of the model, for example by requiring that users adhere to usage guidelines or restrictions to access the model or implementing safety filters. 
        \item Datasets that have been scraped from the Internet could pose safety risks. The authors should describe how they avoided releasing unsafe images.
        \item We recognize that providing effective safeguards is challenging, and many papers do not require this, but we encourage authors to take this into account and make a best faith effort.
    \end{itemize}

\item {\bf Licenses for existing assets}
    \item[] Question: Are the creators or original owners of assets (e.g., code, data, models), used in the paper, properly credited and are the license and terms of use explicitly mentioned and properly respected?
    \item[] Answer: \answerYes{} 
    \item[] Justification: We cite each UCI dataset by name and reference (all are publicly available under the UCI Machine Learning Repository terms), and we reference the original papers for all libraries and methods used (e.g., XGBoost, LightGBM).
    \item[] Guidelines:
    \begin{itemize}
        \item The answer NA means that the paper does not use existing assets.
        \item The authors should cite the original paper that produced the code package or dataset.
        \item The authors should state which version of the asset is used and, if possible, include a URL.
        \item The name of the license (e.g., CC-BY 4.0) should be included for each asset.
        \item For scraped data from a particular source (e.g., website), the copyright and terms of service of that source should be provided.
        \item If assets are released, the license, copyright information, and terms of use in the package should be provided. For popular datasets, \url{paperswithcode.com/datasets} has curated licenses for some datasets. Their licensing guide can help determine the license of a dataset.
        \item For existing datasets that are re-packaged, both the original license and the license of the derived asset (if it has changed) should be provided.
        \item If this information is not available online, the authors are encouraged to reach out to the asset's creators.
    \end{itemize}

\item {\bf New assets}
    \item[] Question: Are new assets introduced in the paper well documented and is the documentation provided alongside the assets?
    \item[] Answer: \answerNA{} 
    \item[] Justification: We do not introduce or release any new datasets, codebases, or pre-trained models—our experiments use publicly available UCI datasets and standard libraries.

    \item[] Guidelines:
    \begin{itemize}
        \item The answer NA means that the paper does not release new assets.
        \item Researchers should communicate the details of the dataset/code/model as part of their submissions via structured templates. This includes details about training, license, limitations, etc. 
        \item The paper should discuss whether and how consent was obtained from people whose asset is used.
        \item At submission time, remember to anonymize your assets (if applicable). You can either create an anonymized URL or include an anonymized zip file.
    \end{itemize}

\item {\bf Crowdsourcing and research with human subjects}
    \item[] Question: For crowdsourcing experiments and research with human subjects, does the paper include the full text of instructions given to participants and screenshots, if applicable, as well as details about compensation (if any)? 
    \item[] Answer: \answerNA{}{} 
    \item[] Justification: All experiments are done on sythetic dataset or existing dataset from UCI repository.
    \item[] Guidelines:
    \begin{itemize}
        \item The answer NA means that the paper does not involve crowdsourcing nor research with human subjects.
        \item Including this information in the supplemental material is fine, but if the main contribution of the paper involves human subjects, then as much detail as possible should be included in the main paper. 
        \item According to the NeurIPS Code of Ethics, workers involved in data collection, curation, or other labor should be paid at least the minimum wage in the country of the data collector. 
    \end{itemize}

\item {\bf Institutional review board (IRB) approvals or equivalent for research with human subjects}
    \item[] Question: Does the paper describe potential risks incurred by study participants, whether such risks were disclosed to the subjects, and whether Institutional Review Board (IRB) approvals (or an equivalent approval/review based on the requirements of your country or institution) were obtained?
    \item[] Answer: \answerNA{} 
    \item[] Justification: Our paper does not involve crowdsourcing nor research with human subjects.
    \item[] Guidelines:
    \begin{itemize}
        \item The answer NA means that the paper does not involve crowdsourcing nor research with human subjects.
        \item Depending on the country in which research is conducted, IRB approval (or equivalent) may be required for any human subjects research. If you obtained IRB approval, you should clearly state this in the paper. 
        \item We recognize that the procedures for this may vary significantly between institutions and locations, and we expect authors to adhere to the NeurIPS Code of Ethics and the guidelines for their institution. 
        \item For initial submissions, do not include any information that would break anonymity (if applicable), such as the institution conducting the review.
    \end{itemize}

\item {\bf Declaration of LLM usage}
    \item[] Question: Does the paper describe the usage of LLMs if it is an important, original, or non-standard component of the core methods in this research? Note that if the LLM is used only for writing, editing, or formatting purposes and does not impact the core methodology, scientific rigorousness, or originality of the research, declaration is not required.
    \item[] Answer: \answerNA{} 
    \item[] Justification: All core methodological components were designed and implemented by the authors. Any use of LLMs was limited to writing, editing, or formatting and did not influence the scientific content or originality of the work.
    \item[] Guidelines:
    \begin{itemize}
        \item The answer NA means that the core method development in this research does not involve LLMs as any important, original, or non-standard components.
        \item Please refer to our LLM policy (\url{https://neurips.cc/Conferences/2025/LLM}) for what should or should not be described.
    \end{itemize}

\end{enumerate}

\end{document}